\newtheorem{lemma}{Lemma} 
\newtheorem{corollary}{Corollary}
\newtheorem{remark}{Remark}
\newtheorem{Definition}{Definition}
\DeclareMathOperator*{\argmax}{arg\,max}
\DeclareMathOperator*{\argmin}{arg\,min}
\newcommand{\R}{\mathbb{R}}
\newcommand{\E}{\mathbb{E}}
\newcommand{\prob}{\mathbb{P}}
\newcommand{\diag}{\mathrm{diag}}
\newcommand{\tr}{\mathrm{Tr}}
\newcommand{\lambdamin}{\lambda_{\mathrm{min}}}
\newcommand{\Eg}{\mathbf{E}_g}
\newcommand{\dist}{\mathcal{R}}
\definecolor{color_b}{RGB}{255, 165,0}
\definecolor{colour3}{RGB}{178,55,250} 
\newcounter{noteMCctr} \setcounter{noteMCctr}{1}
\title{On the Stability of Graph Convolutional Neural Networks: A Probabilistic Perspective}
\author{
\textbf{Ning Zhang}\\
University of Oxford \\
\texttt{ning.zhang@stats.ox.ac.uk}
\And \textbf{Henry Kenlay}\\
Independent Researcher\thanks{Work done while at the University of Oxford.}\\
\texttt{henrykenlay@pm.me}
\And \textbf{Li Zhang}\\
University College London\\
\texttt{ucesl07@ucl.ac.uk}
\And \textbf{Mihai Cucuringu}\\
UCLA, University of Oxford\\
\texttt{mihai@math.ucla.edu}
\And
\textbf{Xiaowen Dong}\\
University of Oxford\\
\texttt{xdong@robots.ox.ac.uk}
}
\begin{document}

\maketitle
\begin{abstract}
  Graph convolutional neural networks (GCNNs) have emerged as powerful tools for analyzing graph-structured data, achieving remarkable success across diverse applications. However, the theoretical understanding of the stability of these models, i.e., their sensitivity to small changes in the graph structure, remains in rather limited settings, hampering the development and deployment of robust and trustworthy models in practice. To fill this gap, we study how perturbations in the graph topology affect GCNN outputs and propose a novel formulation for analyzing model stability. Unlike prior studies that focus only on worst-case perturbations, our distribution-aware formulation characterizes output perturbations across a broad range of input data. This way, our framework enables, for the first time, a probabilistic perspective on the interplay between the statistical properties of the node data and perturbations in the graph topology. We conduct extensive experiments to validate our theoretical findings and demonstrate their benefits over existing baselines, in terms of both representation stability and adversarial attacks on downstream tasks. Our results demonstrate the practical significance of the proposed formulation and highlight the importance of incorporating data distribution into stability analysis.
\end{abstract}

\section{Introduction}

The past decade has witnessed an explosion of interest in analyzing data that resides on the vertices of a graph, known as \textit{graph signals} or \textit{node features}.  
Graph signals extend the concept of traditional data defined over regular Euclidean domains to the irregular structure of graphs. 
To facilitate machine learning over graph signals, classical convolution neural networks have been adapted to operate on graph domains, giving rise to a class of graph-based machine learning models -- graph convolutional neural networks (GCNNs) \citep{defferrard-2016-chebNet,kipf-2016-gcn,wu-2019-simpGCN, levie-2018-cayleynets,dwivedi-2023-benchmarking}.
At the heart of GCNNs lies the use of \textit{graph filters}, which aggregate information from neighboring vertices in a way that respects the underlying graph structure \citep{isufi-2024-filter, dong-2020-gsp_ml}. This mechanism enables GCNNs to produce structure-aware embeddings that effectively integrate information from both the input signals and the graph structure.

{As GCNNs become prevalent in real-world applications, understanding their robustness under graph perturbations has become increasingly important. 
In particular, to ensure the deployment of trustworthy models, it is essential to assess their {\em inference-time stability}, how sensitive a pre-trained GCNN’s predictions are to a small input perturbation. This is especially critical because real-world networks could differ from the clear and idealized samples seen during training. }For instance, in social networks, adversarial agents such as bot accounts can strategically modify the network structure, by adding or removing connections, to mislead a pre-trained GCNN-based fake news detector~\citep{wang-2023-attacking}.  This consideration motivates our investigation into how structural perturbations affect the predictions of pre-trained GCNNs.

In this paper, we focus specifically on analyzing the embedding stability -- the sensitivity of GCNN output embeddings to perturbations in the graph structure.  Analyzing stability at the embedding level provides a task-agnostic perspective that avoids assumptions about specific downstream objectives (e.g., classification or regression), making our analysis broadly applicable across different model architectures and application domains.
A line of recent studies on the embedding stability of graph filters {and GCNNs} has primarily adopted a worst-case formulation, analyzing the maximum possible change in embeddings under edge perturbations \citep{levie-2019-transferability,levie-2021-transferability,kenlay-2020-stability,kenlay-2021-interpretable,nguyen-2022-stability}. However, a significant limitation of this worst-case view is its incompleteness: it overlooks output embeddings that are produced by graph inputs other than extreme cases; therefore, it could be overly pessimistic, as seen in Figure~\ref{fig:dist_KC}. This motivates us to explore alternative formulations that capture a wide spectrum of embedding perturbations beyond the worst-case scenarios, rendering the analysis amenable to more realistic data.  We summarize our main contributions as follows:
\begin{wrapfigure}{r}{0.4\textwidth}
    \centering
    \includegraphics[width=\linewidth]{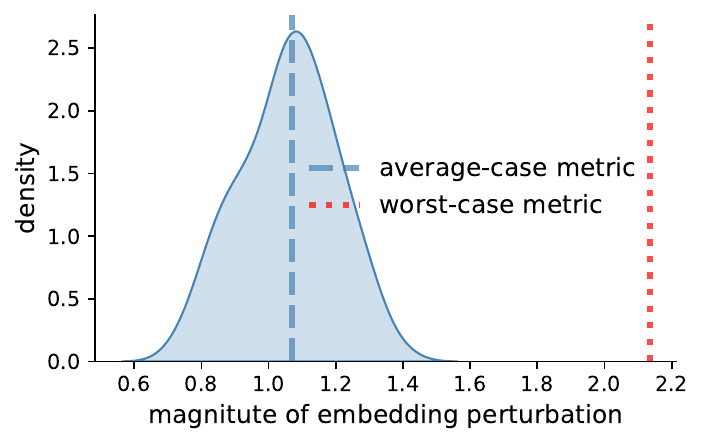}
    \caption{Embedding perturbations of a graph filter on Zachary’s karate club network with 100 randomly sampled unit-length graph signals.}
    \vspace{-18pt}
    \label{fig:dist_KC}
\end{wrapfigure}

\textit{A probabilistic formulation {(Section~\ref{sec:framework})}.}
We propose a probabilistic framework for analyzing the embedding stability of graph filters and GCNNs under edge perturbation. Unlike prior work that focuses on the worst-case scenarios, we introduce the notion of expected embedding perturbation, enabling a more representative assessment of stability across a broad range of input graph signals.
Under this framework, we derive an exact characterization of the stability of graph filters and an upper bound for multilayer GCNNs. Both results are distribution-aware, revealing how embedding deviations are jointly determined by {the second-moment matrix} of graph signals and graph topology perturbations. To the best of our knowledge, this is the first work to establish a framework for understanding the interplay between data distribution and structural perturbations in the context of embedding stability.

\textit{A structural interpretation {(Section~\ref{sec:interp})}.} 
We further interpret our theoretical results to better understand how the interaction between graph perturbation and signal correlation influences model stability. Such interpretation offers insight into why perturbations to certain parts of the graph structure are more impactful than others, as empirically observed in recent studies~\citep{zugner-2020-adversarial_pattern, wan-2021-adversarial}. Through a case study on the contextual stochastic block model (cSBM), we show that our framework offers a rigorous explanation for heuristic perturbation strategies, clarifying, through the lens of embedding perturbation, why and when they are effective.

\textit{Application in adversarial attacks {(Section~\ref{sec:experiment})}.} Based on our theoretical results, we propose \texttt{Prob-PGD}, an efficient projected gradient descent method for identifying highly impactful edge perturbations. Through extensive experiments, we demonstrate that \texttt{Prob-PGD} consistently produces {on average larger} embedding perturbations compared to existing baselines. {Despite the focus on embedding stability, we further} demonstrate that our method leads to greater performance degradation in downstream tasks, including node and graph classification using GCNNs.\footnote{Code is available at:  {\url{https://github.com/NingZhang-Git/Stability_Prob}}}

\textbf{Related work.}
Analyzing the {inference-time} stability of GCNNs has gained growing interest in recent years due to its popularity in a variety of real-world applications.
Existing theoretical studies investigate different types of perturbations:
\citet{gao-2021-stability_stoc_pert,testa-2024-small_pertb,testa-2024-robust,wang-2024-prob_error,ceci-2020-graph} study the embedding perturbation induced by stochastically adding or deleting a small number of edges. 
In \citet{liao-2020-pac}, the authors analyzed the change in embeddings of GCNNs under a perturbation applied to the learnable model parameters.   
\citet{ruiz-2021-graph} adopted a relative perturbation model where {the}  perturbation on {the} graph is formulated as {a} multiplicative factor applied {to} the graph shift operator. 
\citet{keriven-2020-convergence} studies the stability of GCNNs to small deformations of the random graph model.
Our work is most closely related to a series of studies on the embedding stability under deterministic edge perturbation on the graph \citep{levie-2019-transferability,levie-2021-transferability,kenlay-2020-stability,kenlay-2021-interpretable,gama-2020-stability,nguyen-2022-stability,parada2021algebraic,parada2023stability}. 
The main difference between our work and existing embedding stability studies is that prior analyses are limited to the worst-case embedding perturbations, while our work first explores a more representative notion using expected embedding perturbations. Such formulation not only captures embedding behaviour from a broader range of inputs, but also enables the integration of input distribution into the stability analysis.

{Finally, we note that stability analysis in graph machine learning offers another perspective on robustness, known as {\em training stability}, which examines how perturbations to the training dataset influence the learning dynamics~\citep{verma-2019-stability, zugner-2020-adversarial_pattern,bojchevski-2019-adversarial_ebd, chang-2021-not}. While conceptually related, training stability concerns a different aspect of robustness and is less directly connected to our focus on the post-training behavior of models under input perturbations.}

\section{Preliminaries and problem formulation} 
\label{sec:formulation}

\subsection{Notation and definitions}
\label{sec:notations}

\textbf{Graphs and Graph Signals.} 
Let  $\mathcal{G} = (\mathcal{V}, \mathcal{E}, \mathbf{A})$ be a graph, where  $\mathcal{V}$ is the set of vertices,  $\mathcal{E}$ is the set of edges, and $\mathbf{A} \in \R^{n\times n}$ is the (possibly weighted) adjacency matrix.  
We denote $\mathcal{G}_p = (\mathcal{V}, \mathcal{E}_p, \mathbf{A}_p)$ a perturbed version of $\mathcal{G}$, where only the edge set is modified, while the vertex set remains unchanged.
A graph signal refers to a function mapping from the vertex set $\mathcal{V}$ to a real value, which can be equivalently defined as a vector $\bold{x} \in \R^n$. When multiple graph signals are considered, we represent them by a matrix $\mathbf{X} \in \R^{n\times d}$, where each column $\mathbf{X}_{:,i}$ corresponds to an individual signal.

\textbf{Graph Filters.}
The structural information of a graph $\mathcal{G}$ is encoded via a graph shift operator $\mathbf{S}$, a self-adjoint matrix satisfying $\mathbf{S}_{i,j}= 0$ for all $(i,j) \notin \mathcal{E}$. Common choices for $\mathbf{S}$ include the graph adjacency matrix $\mathbf{A}$ and the graph Laplacian $\mathbf{L}$ \citep{bruna-2013-spectral}. 
A graph filter is defined as a function of the graph shift operator, denoted as $g(\mathbf{S})$.
Typical examples of graph filters include polynomial filters, $g(\mathbf{S}) = \sum_{k=0}^K c_k \mathbf{S}^k$ and autoregressive moving average filter \cite{isufi-2016-autoregressive}.
Given input graph signals $\mathbf{X}$, the filter output (i.e., output embedding)  is expressed as $g(\mathbf{S})\mathbf{X}$. 

This paper investigates the impact of edge perturbations on the outputs of graph filters. Specifically, we measure the change in the output embedding using the squared Frobenius norm $\|g(\mathbf{S})\mathbf{X} - g(\mathbf{S}_p) \mathbf{X}\|_F^2$, where $\mathbf{S}_p$ denotes the graph shift operator of the perturbed graph $\mathcal{G}_p$. 
Throughout our analysis, we fix the filter function $g(\cdot)$ and consider changes in the filter output only as a consequence of the edge perturbation on the graph.  
For notational simplicity, when the specific forms of $\mathbf{S}$ and filter function are not essential to the context, we use the shorthand $\Eg = g(\mathbf{S}) - g(\mathbf{S_p})$ to denote the filter difference. Accordingly, the embedding perturbation of input $\mathbf{X}$ is denoted as $\|\Eg \mathbf{X}\|_F^2$. {Concrete examples illustrating these concepts are provided in Appendix~\ref{appx:concept}.}

\textbf{Graph Convolutional Neural Networks.}
GCNNs are machine learning models that extend convolutional neural networks to graph-structured data by leveraging graph filters \citep{gama-2020-graphs}.  They consist of cascaded layers of graph filters $g(\mathbf{S})$ followed by a non-linear activation function $\sigma(\cdot)$. Given a graph $\mathcal{G}$ and an input feature matrix $\mathbf{X}^{(0)}$, the embeddings at each layer are computed recursively as
\begin{align}
\label{eq:GCNN_rec}
    \mathbf{X}^{(l)} = \sigma^{(l)} \left(g(\mathbf{S}) \mathbf{X}^{(l-1)} \mathbf{\Theta}^{(l)}\right),
\end{align}
where $\sigma^{(l)}(\cdot)$  denotes the activation function at layer $l$, and $\mathbf{\Theta}^{(l)}\in \R^{d_{l-1} \times d_l} $ is the learned parameter in the $l$-th layer. Throughout our analysis,  we assume the model parameters $\{ \mathbf{\Theta}^{(l)}\}_{l=1}^L$ are fixed after training and do not change at test time. Therefore, any change in the final embedding is attributed solely to perturbations in the underlying graph $\mathcal{G}$. The embedding perturbation for an $L$-layer GCNN is measured by $\|\mathbf{X}^{(L)} - \mathbf{X}_p^{(L)}\|_F^2$, where $\mathbf{X}^{(L)}$ and $\mathbf{X}_p^{(L)}$ are computed via the recursion in \eqref{eq:GCNN_rec} using graph filters $g(\mathbf{S})$ and $g(\mathbf{S}_p)$, respectively. {We summarize common GCNN architectures and their associated graph filter functions in Table~\ref{tab:GCNN} in Appendix~\ref{appx:concept}.}

\subsection{Problem formulation}
\label{sec:prob_formu}

In this paper, we study the stability of graph filters and GCNNs from a probabilistic perspective. Specifically, we consider random input ${X} \in \R^{n\times d}$, where each column $X_{:,i}$  corresponds to a graph signal drawn independently and identically from an unknown distribution $\mathcal{D}$ over $\R^n$.
We make no assumptions about the form of $\mathcal{D}$ beyond its second-order statistics, which are captured by {the second-moment matrix $\mathbf{K} = \mathbb{E}[X_{:,i} X_{:,i}^T]$ (also referred to covariance matrix when $\mathbb{E}[X_{:,i}] = \mathbf{0}$)}. Since the columns $X_{:,i}$ are i.i.d. samples from $\mathcal{D}$, this second-moment matrix is the same for all $i$.
To evaluate output stability, we introduce the notion of expected embedding perturbation.
\begin{Definition}
    Let $g(\mathbf{S})$ be a graph filter. Its embedding stability with respect to a signal distribution $\mathcal{D}$ is measured by the expected embedding perturbation given by 
    \begin{align}
    \label{eq:E_filter}
        \E_{{X}\sim \mathcal{D}} [\| g(\mathbf{S}){X} - g(\mathbf{S}_p){X} \|_F^2] = \E_{{X}\sim \mathcal{D}} [\| \Eg{X}  \|_F^2].
    \end{align}
    For an $L$-layer GCNN defined as in \eqref{eq:GCNN_rec} with graph filter $g(\mathbf{S})$, its embedding stability with respect to signal distribution $\mathcal{D}$ is quantified by
    \begin{align}
    \label{eq:E_GCN}
        \E_{{X}\sim \mathcal{D}} [\|X^{(L)} - X^{(L)}_p\|_F^2].
    \end{align}
\end{Definition}
Under this probabilistic framework, our goal is to characterize the expected embedding perturbations of graph filters \eqref{eq:E_filter} and GCNNs \eqref{eq:E_GCN}, and to understand how they depend on the signal distribution $\mathcal{D}$ as well as the underlying graph. 
As our discussion involves both random and deterministic graph signals, we adopt the convention that boldface letters represent deterministic vectors or matrices (e.g., $\mathbf{x}, \mathbf{X}$), while uppercase letters without boldface denote random vectors or matrices (e.g., $X$). We provide a summary of notations in Appendix~\ref{appx:notation} for ease of reference.
\begin{remark}
    Our probabilistic formulation is general as it does not impose any assumptions on the generative distribution of graph signals. Particularly, by specializing the signal distribution to let the random graph signal $X$ assign all probability mass to the worst-case scenarios, our analysis encompasses the existing worst-case analysis, such as \citet{kenlay-2021-interpretable}, as a special case.
\end{remark}
\begin{remark}
    To maintain focus, we do not explicitly address the permutation equivariance of graph filters and GCNNs, which is considered in other stability studies such as \citet{gama-2020-stability}. 
    However, our framework can readily incorporate such equivariance by only replacing the filter perturbation $\Eg = g(\mathbf{S}) - g(\mathbf{S}_p)$ with its permutation-modulated counterpart  $\Eg = g(\mathbf{S}) - g(\mathbf{P}_0\mathbf{S}_p\mathbf{P}_0^{T})$ where $\mathbf{P}_0 =\argmin_{\mathbf{P} \in \Pi_n}{\|\mathbf{P}^T\mathbf{S}\mathbf{P}-  \mathbf{S}_p \|_F}$.
\end{remark}

\section{Analyzing Embedding Stability of Graph Filters and GCNNs}
\label{sec:framework}

Under our probabilistic formulation, this section studies the stability of graph filters and GCNNs by characterizing their expected embedding change under edge perturbations.
We begin in Section~\ref{sec:3-1} with the analysis of graph filters, which are simpler linear models that serve as foundational components of GCNNs. Building on this, Section~\ref{sec:3-2} extends the analysis to the more complex case of multilayer GCNNs.

\subsection{Stability Analysis of Graph Filters and Single-Layer GCNNs}
\label{sec:3-1}
\begin{restatable}[Stability of Graph Filters]{theorem}{thmEbdPertb}
\label{thm:embd_pertb}
Let ${X}\in \R^{n}$ be a random graph signal from a distribution $\mathcal{D}$ with second moment matrix $\mathbf{K}$.
Then for any graph filter perturbation $\mathbf{E}_g = g(\mathbf{S}) - g(\mathbf{S}_p)$, the expected change in output embedding is
\begin{align}
    \label{eq:stab_exp}
    \E_{{X}\sim \mathcal{D}} [{\|\Eg {X} \|_F^2}]=   \langle \mathbf{K}, \Eg^T\Eg \rangle.
    \end{align}
\end{restatable}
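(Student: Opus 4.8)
The plan is to reduce the squared Frobenius norm to a trace and then exploit the cyclic invariance of the trace together with linearity of expectation. Since $X \in \R^n$ is a single column, $\|\Eg X\|_F^2$ coincides with the squared Euclidean norm $\|\Eg X\|_2^2 = (\Eg X)^T(\Eg X) = X^T \Eg^T \Eg X$. Because this is a scalar, I can write it as its own trace, $X^T \Eg^T \Eg X = \tr(X^T \Eg^T \Eg X)$, and then cycle the factors to obtain $\tr(\Eg^T \Eg\, X X^T)$.

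Next I would take expectations. The trace is a finite linear combination of matrix entries, so for any fixed matrix the map $Y \mapsto \tr(\Eg^T \Eg\, Y)$ is linear, and linearity of expectation lets me pull the expectation inside: $\E[\tr(\Eg^T \Eg\, X X^T)] = \tr(\Eg^T \Eg\, \E[X X^T]) = \tr(\Eg^T \Eg\, \mathbf{K})$, where I have used that $\Eg$ is deterministic (the filter and its perturbation are fixed) and that $\mathbf{K} = \E[X X^T]$ by definition. The only regularity condition needed is that the second moments of $X$ are finite, which is exactly what having a well-defined $\mathbf{K}$ guarantees.

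Finally I would identify this trace with the claimed Frobenius inner product. Writing $\langle \mathbf{A}, \mathbf{B}\rangle = \tr(\mathbf{A}^T \mathbf{B})$ and noting that both $\mathbf{K}$ and $\Eg^T \Eg$ are symmetric (the latter because it is a Gram matrix, the former because it is a second-moment matrix), the cyclic and transpose invariances give $\tr(\Eg^T \Eg\, \mathbf{K}) = \tr(\mathbf{K}^T \Eg^T \Eg) = \langle \mathbf{K}, \Eg^T \Eg\rangle$, which is the desired identity \eqref{eq:stab_exp}.

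I do not expect a genuine obstacle here: the argument is a direct computation. The only points requiring a moment's care are the bookkeeping that turns a scalar into a trace so the cyclic property becomes available, and the justification for interchanging expectation and trace, which follows from finiteness of $\mathbf{K}$ and the linearity of the trace functional. The same computation extends verbatim to the matrix-signal case $X \in \R^{n \times d}$ by summing the contributions of the i.i.d. columns, each equal to $\langle \mathbf{K}, \Eg^T\Eg\rangle$.
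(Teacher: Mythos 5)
Your proposal is correct and follows essentially the same route as the paper: both reduce $\|\Eg X\|_F^2$ to the quadratic form $X^T\Eg^T\Eg X$ and apply linearity of expectation to identify $\E[X_iX_j]$ with $\mathbf{K}_{ij}$; the paper simply expands the sum entrywise where you invoke the trace-cyclic identity, which is a cosmetic difference. (The paper's proof also appends the Markov-inequality argument for Corollary~\ref{prop:concentration}, which is outside the stated theorem.)
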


\begin{corollary}
\label{prop:concentration}
    For any $c>0$, we have $\prob\big(\|\Eg {X} \|_F^2 \geq 
 (1+c)\langle \mathbf{K}, \Eg^T\Eg \rangle \big) \leq \frac{1}{1+c}$. 
\end{corollary}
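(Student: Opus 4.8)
The plan is to recognize this corollary as an immediate application of Markov's inequality, with the first moment supplied exactly by Theorem~\ref{thm:embd_pertb}. First I would observe that the random variable of interest, $\|\Eg X\|_F^2$, is nonnegative, being a squared Frobenius norm. Nonnegativity is precisely the hypothesis required to invoke Markov's inequality, so no additional structure on the distribution $\mathcal{D}$ is needed beyond what the preceding theorem already uses.

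Next I would recall from Theorem~\ref{thm:embd_pertb} that the expectation of this random variable is given in closed form, namely $\E_{X\sim\mathcal{D}}[\|\Eg X\|_F^2] = \langle \mathbf{K}, \Eg^T\Eg \rangle$. Writing $\mu := \langle \mathbf{K}, \Eg^T\Eg \rangle$ for this value, I would then apply Markov's inequality with threshold $a = (1+c)\mu$ to obtain
\begin{align}
    \prob\big(\|\Eg X\|_F^2 \geq (1+c)\mu\big) \;\leq\; \frac{\E[\|\Eg X\|_F^2]}{(1+c)\mu} \;=\; \frac{\mu}{(1+c)\mu} \;=\; \frac{1}{1+c},
\end{align}
which is exactly the claimed bound. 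The cancellation of $\mu$ in the numerator and denominator is what produces the distribution-free constant $1/(1+c)$ on the right-hand side, regardless of the particular graph, filter, or second-moment matrix involved.

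There is no substantive obstacle here; the argument is essentially one line once the expectation from Theorem~\ref{thm:embd_pertb} is in hand. The only point I would flag is the degenerate boundary case $\mu = 0$, in which Markov's inequality no longer applies with a positive threshold: when $\mu = 0$ the nonnegative variable $\|\Eg X\|_F^2$ vanishes almost surely, so the event in question has probability one, which exceeds the stated bound $1/(1+c) < 1$. Accordingly, the corollary is to be read under the mild and natural assumption $\mu = \langle \mathbf{K}, \Eg^T\Eg \rangle > 0$, corresponding to any perturbation that actually changes the expected embedding; the excluded case is trivial and uninteresting for stability analysis.
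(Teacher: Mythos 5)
Your proof is correct and follows exactly the paper's own argument: Markov's inequality applied to the nonnegative random variable $\|\Eg X\|_F^2$, whose mean is supplied by Theorem~\ref{thm:embd_pertb}. Your additional remark about the degenerate case $\langle \mathbf{K}, \Eg^T\Eg \rangle = 0$ (where the stated bound fails since the event then has probability one) is a valid caveat that the paper itself does not flag.
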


In Theorem~\ref{thm:embd_pertb}, we provide an exact characterization of the expected embedding perturbation in~\eqref{eq:stab_exp}. 
Furthermore, Corollary~\ref{prop:concentration} establishes a concentration bound that quantifies the probability of deviation from the expected value, thereby reinforcing the utility of our notion of stability. The proof of Theorem~\ref{thm:embd_pertb} is provided in Appendix~\ref{pf:thm1}. Detailed comparisons between the expected-case and worst-case embedding perturbation metrics are presented in Appendix~\ref{appx:comp_avg}.

{Equation~\ref{eq:stab_exp} shows that the expected embedding perturbation of a graph filter is equivalent to the filter perturbation $\Eg$ measured in a $\mathbf{K}$-induced norm, i.e., $\langle \mathbf{K}, \Eg^T \Eg \rangle = \tr(\Eg \mathbf{K} \Eg^T)$. This expression re-weights the perturbation according to the second-moment matrix,} highlighting the interplay between graph filter perturbation and the correlation structure of the input graph signals. 
This perspective becomes particularly interesting when the second-order statistics of graph signals are correlated with the graph topology, which is a commonly adopted assumption in the field of graph signal processing \citep{zhang-2015-graphProb,navarro-2022-joint} and machine learning \citep{battaglia-2018-relational_bias}. 
A more detailed discussion of how this correlation influences stability is deferred to Section~\ref{sec:interp}.

Recall that for a single-layer GCNN, the embedding of multiple signals ${X} \in \R^{n\times d}$ is given by 
\begin{align*}
    {X}^{(1)} = \sigma\left(g(\mathbf{S})X \mathbf{\Theta}^{(1)}\right), \;\, 
    X_p^{(1)} = \sigma\left(g(\mathbf{S}_p)X \mathbf{\Theta}^{(1)} \right)
\end{align*}
where $\sigma(\cdot)$ is a pointwise activation function and $\mathbf{\Theta}^{(1)}$ is a learned weight matrix.
Due to the non-linearity of the activation function, we can no longer derive an exact expression for the expected embedding perturbation. Instead, Corollary~\ref{thm:1-GCN} provides a distribution-dependent upper bound. The proof is deferred to Appendix~\ref{pf:1-GCN}.
\begin{restatable}[Stability of Single-Layer GCNNs]{corollary}{singleGCN}
\label{thm:1-GCN}
    Let $X \in \mathbb{R}^{n \times d}$ be a random matrix whose columns are i.i.d. graph signals from a distribution $\mathcal{D}$  over $\R^n$, with second-moment matrix $\mathbf{K}$, and  $\sigma(\cdot)$ be $C_\sigma$-Lipschitz continuous. Then the expected embedding perturbation of a single-layer GCNN satisfies
    \begin{align*}
        \E_{X\sim \mathcal{D}}[\| {X}^{(1)} -{X}_p^{(1)} \|_F^2] \leq 
        d C_\sigma^2 \|\mathbf{\Theta}^{(1)} \|^2 \langle \mathbf{K}, \Eg^T\Eg\rangle.
    \end{align*}
\end{restatable}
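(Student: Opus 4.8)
The plan is to reduce the nonlinear single-layer case to the linear filter result of Theorem~\ref{thm:embd_pertb} by peeling off the activation and the weight matrix one step at a time, and then exploiting the i.i.d.\ column structure of $X$.

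First I would dispose of the nonlinearity using the Lipschitz hypothesis. Since $\sigma$ acts entrywise and is $C_\sigma$-Lipschitz, for any two matrices $A,B$ of the same shape one has $\|\sigma(A)-\sigma(B)\|_F^2 \le C_\sigma^2\|A-B\|_F^2$, obtained by applying the scalar Lipschitz bound to each entry and summing the squares. Setting $A = g(\mathbf{S})X\mathbf{\Theta}^{(1)}$ and $B = g(\mathbf{S}_p)X\mathbf{\Theta}^{(1)}$, their difference is $A-B = \Eg X \mathbf{\Theta}^{(1)}$, so that pointwise
\[
\|{X}^{(1)}-{X}^{(1)}_p\|_F^2 \;\le\; C_\sigma^2\,\|\Eg X\mathbf{\Theta}^{(1)}\|_F^2 .
\]
Next I would detach the weight matrix via submultiplicativity of the Frobenius norm against the spectral norm: $\|M\mathbf{\Theta}^{(1)}\|_F \le \|M\|_F\,\|\mathbf{\Theta}^{(1)}\|$ for conformable $M$, applied with $M = \Eg X$. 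This yields $\|\Eg X\mathbf{\Theta}^{(1)}\|_F^2 \le \|\mathbf{\Theta}^{(1)}\|^2\,\|\Eg X\|_F^2$, so after taking expectations it remains to control $\E_{X\sim\mathcal{D}}[\|\Eg X\|_F^2]$.

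Finally I would use the i.i.d.\ column structure to invoke Theorem~\ref{thm:embd_pertb}. Writing $\|\Eg X\|_F^2 = \sum_{i=1}^d \|\Eg X_{:,i}\|_2^2$ and taking expectations, each of the $d$ identically distributed columns contributes $\E[\|\Eg X_{:,i}\|_2^2] = \langle \mathbf{K}, \Eg^T\Eg\rangle$ by the single-signal result. Hence $\E[\|\Eg X\|_F^2] = d\,\langle \mathbf{K}, \Eg^T\Eg\rangle$, and chaining the three inequalities gives the claimed bound.

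The argument is essentially a clean cascade of norm inequalities, so I do not anticipate a genuine obstacle; the only points requiring care are (i) interpreting $\|\mathbf{\Theta}^{(1)}\|$ as the spectral (operator) norm, which is precisely what validates the submultiplicativity step $\|M\mathbf{\Theta}^{(1)}\|_F \le \|M\|_F\|\mathbf{\Theta}^{(1)}\|$, and (ii) correctly accounting for the factor $d$, which arises solely because the Frobenius norm decomposes across the $d$ i.i.d.\ signal columns. The resulting bound is generally not tight, since both the Lipschitz and submultiplicativity steps are lossy, but tightness is not needed for the stated upper bound.
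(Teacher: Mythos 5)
Your proposal is correct and follows essentially the same route as the paper's proof: entrywise Lipschitz bound to remove $\sigma$, the Frobenius--spectral inequality $\|M\mathbf{\Theta}^{(1)}\|_F^2\le\|\mathbf{\Theta}^{(1)}\|^2\|M\|_F^2$ to detach the weight matrix, and Theorem~\ref{thm:embd_pertb} applied column-by-column to produce the factor $d\,\langle\mathbf{K},\Eg^T\Eg\rangle$. Your explicit accounting of the factor $d$ via the i.i.d.\ columns is in fact slightly more careful than the paper's one-line invocation of Theorem~\ref{thm:embd_pertb}.
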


We next establish a connection between the embedding perturbation of a single-layer GCNN and that of its associated graph filter. The following result holds under a mild monotonicity condition on the activation function.
\begin{corollary}
\label{coro:monot}
    Let $\sigma(\cdot)$ be a monotonically non-decreasing activation function.  Then the expected embedding perturbation of a single-layer GCNN is monotonically non-decreasing in the expected embedding perturbation of its associated graph filter.
\end{corollary}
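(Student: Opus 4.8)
The plan is to interpret the asserted monotonicity through a one‑parameter scaling of the perturbation, which is the natural sense in which the two scalar quantities can be compared. Concretely, I would fix the perturbation direction and introduce a scale $t\geq 0$, replacing the perturbed filter by the family $g(\mathbf{S})-t\Eg$, so that $t=1$ recovers the actual perturbation and $t=0$ the unperturbed filter. Under this parameterization the graph‑filter perturbation becomes $\phi(t):=\E_{X\sim\mathcal{D}}[\|t\Eg X\|_F^2]=t^2\langle \mathbf{K},\Eg^T\Eg\rangle$ by Theorem~\ref{thm:embd_pertb}, a non‑decreasing (and, whenever $\langle \mathbf{K},\Eg^T\Eg\rangle>0$, strictly increasing) bijection of $[0,\infty)$ onto itself. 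It therefore suffices to show that the GCNN perturbation $f(t):=\E_{X\sim\mathcal{D}}[\|X^{(1)}(t)-X_p^{(1)}(t)\|_F^2]$ is non‑decreasing in $t$; composing $f$ with $\phi^{-1}$ then yields the desired monotone dependence of the GCNN perturbation on the filter perturbation.

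The crux is a pointwise (entrywise, per‑realization) monotonicity lemma. Since $\sigma$ acts entrywise, for a fixed draw of $X$ we have $\|X^{(1)}(t)-X_p^{(1)}(t)\|_F^2=\sum_{i,j}\big(\sigma(a_{ij})-\sigma(a_{ij}-t s_{ij})\big)^2$, where $a_{ij}=(g(\mathbf{S})X\mathbf{\Theta}^{(1)})_{ij}$ and $s_{ij}=(\Eg X\mathbf{\Theta}^{(1)})_{ij}$. The key observation is that if $\sigma$ is monotonically non‑decreasing, then for fixed $a$ and $s$ the map $t\mapsto\big(\sigma(a)-\sigma(a-ts)\big)^2$ is non‑decreasing on $[0,\infty)$: as $t$ grows, $a-ts$ moves monotonically in the direction $-\mathrm{sign}(s)$, so monotonicity of $\sigma$ forces $\sigma(a-ts)$ to move monotonically away from (or remain at) $\sigma(a)$, and hence the gap $|\sigma(a)-\sigma(a-ts)|$ can only increase. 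I would state and prove this one‑dimensional fact first, splitting into the cases $s>0$, $s<0$, and $s=0$, and noting that it uses only monotonicity of $\sigma$, not Lipschitz continuity.

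Summing the entrywise inequality over $(i,j)$ shows that $\|X^{(1)}(t)-X_p^{(1)}(t)\|_F^2$ is non‑decreasing in $t$ for every realization of $X$; taking expectations preserves this, so $f$ is non‑decreasing. Combined with the first paragraph, $f\circ\phi^{-1}$ is a composition of non‑decreasing functions and is therefore non‑decreasing, which is exactly the asserted monotonicity of the single‑layer GCNN perturbation in its graph‑filter perturbation.

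I expect the main obstacle to be conceptual rather than computational: pinning down the precise sense in which ``monotone in the filter perturbation'' is meant, since the GCNN perturbation is not a function of the scalar $\langle \mathbf{K},\Eg^T\Eg\rangle$ alone but of the full matrix $\Eg$. The scaling parameterization resolves this by making both quantities functions of a single scalar $t$ that move together. A secondary subtlety is the non‑strictness inherited from activations with flat regions (e.g.\ ReLU) and from signals with $s_{ij}=0$, which is why the conclusion is stated as non‑decreasing rather than strictly increasing; I would phrase all monotonicity claims in the non‑strict sense and invoke the weak monotonicity of $\phi^{-1}$ on the range of $\phi$, handling the degenerate case $\langle \mathbf{K},\Eg^T\Eg\rangle=0$ (where both perturbations vanish) separately and trivially.
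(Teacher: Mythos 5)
The paper contains no proof of Corollary~\ref{coro:monot} --- the appendix proves Theorems~\ref{thm:embd_pertb} and~\ref{thm:L-GCN}, Corollaries~\ref{thm:1-GCN} and~\ref{coro:1}, and Proposition~\ref{thm:attackAL}, but nothing for this statement (whose wording even ends in a dangling colon) --- so there is no argument of the authors' to compare yours against; your attempt has to stand on its own. On its own terms it is correct: the one-dimensional lemma that $t\mapsto\bigl(\sigma(a)-\sigma(a-ts)\bigr)^2$ is non-decreasing on $[0,\infty)$ for non-decreasing $\sigma$ holds by exactly the case analysis you describe, summing over entries and taking expectations preserves it, and composing with $\phi^{-1}(r)=\sqrt{r/\langle\mathbf{K},\Eg^T\Eg\rangle}$ (with the degenerate case $\langle\mathbf{K},\Eg^T\Eg\rangle=0$ handled separately, where $\Eg X=0$ almost surely and both perturbations vanish) gives the claimed monotone dependence along the ray $t\Eg$.

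The one substantive caveat is the point you yourself flag: your scaling interpretation is the only sense in which the statement is true, but it is strictly weaker than what the paper's surrounding discussion appears to rely on. The text uses the corollary to reduce $\argmax_{|\mathcal{P}|=m}$ of the GCNN embedding perturbation to $\argmax$ of the filter perturbation, which requires comparing perturbations in \emph{different directions}: if $\langle\mathbf{K},(\Eg^{(1)})^T\Eg^{(1)}\rangle\le\langle\mathbf{K},(\Eg^{(2)})^T\Eg^{(2)}\rangle$ then the corresponding GCNN perturbations should be ordered the same way. That cross-direction claim is false in general --- with $\sigma=\mathrm{ReLU}$, a perturbation concentrated on coordinates whose pre-activations are strongly negative produces a large filter perturbation but zero output perturbation, while a small perturbation on a positive coordinate produces the reverse ordering --- so no proof of the stronger reading can exist. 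Your proposal proves the defensible version of the corollary; it just does not (and cannot) support the optimization-reduction use the paper makes of it, and it would be worth saying so explicitly if this were incorporated.
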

Corollary~\ref{coro:monot} implies that for a single-layer GCNN with monotonically non-decreasing activation functions, any optimization task involving the embedding perturbations can be reduced to an equivalent problem over the associated graph filter. 
This observation is particularly valuable for studying adversarial edge perturbations, which are commonly formulated as optimization tasks.
Since most commonly used activation functions, such as ReLU and Sigmoid, are monotonically non-decreasing, this corollary simplifies the analysis of adversarial attacks by allowing us to focus only on the embedding perturbation of graph filters. For the later case, which is linear, Theorem~\ref{thm:embd_pertb} provides an exact expression for the expected perturbation. As a result, this connection enables near-exact stability analysis for a broad class of single-layer GCNNs, including models such as SGC~\citep{wu-2019-simpGCN}, SIGN \cite{frasca-2020-sign}, and gfNN~\citep{nt-2019-revisiting}.


\subsection{Stability Analysis of Multilayer GCNNs}
\label{sec:3-2}

We now extend our analysis to multilayer GCNNs, which generalize the single-layer case by stacking multiple layers of graph filtering $g(\mathbf{S})$, nonlinear activation $\sigma(\cdot)$, and learned model parameters $\{\mathbf{\Theta}^{(j)}\}_{j=1}^L$.
Unlike the single-layer setting, multilayer architectures introduce recursive dependencies and compound nonlinearities, making exact characterization of the embedding perturbation analytically intractable.
Nevertheless, by leveraging the results for graph filters and carefully bounding the propagation of perturbations across layers, we derive in Theorem~\ref{thm:L-GCN} an upper bound on the expected embedding change for $L$-layer GCNNs.

Throughout our analysis of multilayer GCNNs, we make the following standard assumptions:
\begin{align}
    \label{eq:A1}
    &\| g(\mathbf{S})\|  \leq C  \;\ \text{and } \|g(\mathbf{S}_p)\| \leq C \tag{A1} \\
    \label{eq:A2}
    &\sigma^{l}(\cdot) \text{ is } C_\sigma\text{-Lipschiz continuous for all $1 \leq l\leq L$}. \tag{A2}\\
    \label{eq:A3}
    &\sigma^{l}(0)  = 0  \; \text{ for all hidden layer $1 \leq l\leq L-1$}. \tag{A3}
\end{align}
Assumption \eqref{eq:A1} requires that the spectral norm (largest singular value) of $g(\mathbf{S})$ and $g(\mathbf{S}_p)$ are bounded by a constant $C$. This condition is trivially satisfied for finite graphs and can also be ensured for infinite graphs through appropriate normalization of the filter, such as dividing by the maximum degree. 
Assumption~\eqref{eq:A2} imposes Lipschitz continuity on the activation functions, a property satisfied by most commonly used functions such as ReLU ($C_\sigma = 1$), Tanh ($C_\sigma = 1$), and Sigmoid ($C_\sigma = 1/4$). Assumption~\eqref{eq:A3} requires that hidden-layer activation functions output zero when their input is zero, which is satisfied by many standard hidden-layer activation functions, including ReLU, Leaky ReLU, and Tanh.

\begin{restatable}[Stability of $L$-layer GCNN]{theorem}{LGCN}
    \label{thm:L-GCN}
      Let $X \in \mathbb{R}^{n \times d}$ be a random matrix whose columns are i.i.d. graph signals from a distribution $\mathcal{D}$  over $\R^n$, with second-moment matrix $\mathbf{K}$. Consider an $L$-layer GCNN built on a graph filter $g(\mathbf{S})$ and let the corresponding filter perturbation be $\mathbf{E}_g = g(\mathbf{S}) - g(\mathbf{S}p)$.  Suppose the model satisfies assumptions~\eqref{eq:A1} \eqref{eq:A2} and~\eqref{eq:A3}. Then the expected embedding perturbation of the $L$-layer GCNN satisfies
      \begin{align}
      \label{eq:LGCNN}
        \E_{X\sim\mathcal{D}}\hspace{-2pt} \left[\|{\mathbf{X}}^{(L)} - {\mathbf{X}}_p^{(L)}\|_F^2\right]  \hspace{-1pt}\leq \hspace{-1pt}
        {d}
        \underbrace{C_\sigma^{2L} C^{2L-2} \prod_{j=1}^L \| \mathbf{\Theta}^{(j)}\|^2}_{{\text{model}}}
        \Big(\hspace{-4pt}
        \underbrace{(L-1)}_{\textit{model}}
        \underbrace{\|\mathbf{E}_g\|^2}_{\textit{pert.}}
        \underbrace{\mathrm{tr}(\mathbf{K})}_{\textit{signals}} \hspace{-2pt}+ \hspace{-3pt}
        \underbrace{\langle \mathbf{K}, \mathbf{E}_g^T \mathbf{E}_g\rangle}_{\textit{K-modulated pert.}}
        \hspace{-6pt}
        \Big).
    \end{align}
\end{restatable}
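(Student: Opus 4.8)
The plan is to control the layer-wise difference $\Delta^{(l)} := \mathbf{X}^{(l)} - \mathbf{X}_p^{(l)}$ through a recursion in $l$ and then unroll it. First I would strip off the activation using the Lipschitz assumption~\eqref{eq:A2}: since $\mathbf{X}^{(l)} = \sigma^{(l)}(g(\mathbf{S})\mathbf{X}^{(l-1)}\mathbf{\Theta}^{(l)})$ and similarly for the perturbed branch, (A2) gives, pointwise in the randomness, $\|\Delta^{(l)}\|_F \le C_\sigma\,\|g(\mathbf{S})\mathbf{X}^{(l-1)}\mathbf{\Theta}^{(l)} - g(\mathbf{S}_p)\mathbf{X}_p^{(l-1)}\mathbf{\Theta}^{(l)}\|_F$. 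The key algebraic step is the add-and-subtract decomposition
\[
g(\mathbf{S})\mathbf{X}^{(l-1)} - g(\mathbf{S}_p)\mathbf{X}_p^{(l-1)} = \Eg\,\mathbf{X}^{(l-1)} + g(\mathbf{S}_p)\,\Delta^{(l-1)},
\]
which separates a \emph{source} term $\Eg\mathbf{X}^{(l-1)}$ (the fresh perturbation injected at layer $l$) from a \emph{propagation} term $g(\mathbf{S}_p)\Delta^{(l-1)}$ (the error inherited from the previous layer). Using submultiplicativity $\|ABC\|_F \le \|A\|\,\|B\|_F\,\|C\|$ together with (A1), this yields the per-realization recursion $\|\Delta^{(l)}\|_F \le C_\sigma\|\mathbf{\Theta}^{(l)}\|\,\|\Eg\mathbf{X}^{(l-1)}\|_F + C_\sigma C\|\mathbf{\Theta}^{(l)}\|\,\|\Delta^{(l-1)}\|_F$, with $\Delta^{(0)} = 0$.

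Second, I would pass to the root-mean-square functional $a_l := (\E\|\Delta^{(l)}\|_F^2)^{1/2}$, which is a genuine norm on random matrices and hence obeys the triangle inequality; applying it to the recursion gives $a_l \le C_\sigma\|\mathbf{\Theta}^{(l)}\|\,s_l + C_\sigma C\|\mathbf{\Theta}^{(l)}\|\,a_{l-1}$, where $s_l := (\E\|\Eg\mathbf{X}^{(l-1)}\|_F^2)^{1/2}$ is the expected magnitude of the source at layer $l$. The two quantities I then need are a tight estimate of the first source and a forward-norm bound for the deeper sources. For the first, since $\mathbf{X}^{(0)} = X$ has i.i.d.\ columns with second-moment matrix $\mathbf{K}$, Theorem~\ref{thm:embd_pertb} applied columnwise gives exactly $s_1^2 = \E\|\Eg X\|_F^2 = d\,\langle \mathbf{K}, \Eg^T\Eg\rangle$, which is the source of the $\mathbf{K}$-modulated term in~\eqref{eq:LGCNN}. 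For the deeper layers $l \ge 2$ I bound the source crudely by $s_l \le \|\Eg\|\,(\E\|\mathbf{X}^{(l-1)}\|_F^2)^{1/2}$ and control the forward norm by iterating the same peeling argument: using (A2), (A3) (so that $\sigma^{(l)}(0)=0$ at each hidden layer) and (A1), one obtains $\E\|\mathbf{X}^{(l-1)}\|_F^2 \le (C_\sigma C)^{2(l-1)}\big(\prod_{j=1}^{l-1}\|\mathbf{\Theta}^{(j)}\|^2\big)\,\E\|X\|_F^2$, together with $\E\|X\|_F^2 = d\,\tr(\mathbf{K})$ since each column contributes $\tr(\mathbf{K})$.

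Finally, I would unroll the scalar recursion into $a_L \le \sum_{l=1}^L \big(\prod_{k=l+1}^L C_\sigma C\|\mathbf{\Theta}^{(k)}\|\big)\,C_\sigma\|\mathbf{\Theta}^{(l)}\|\,s_l$ and substitute the two source estimates. The pleasant bookkeeping is that the propagation factors accumulated from layer $l$ up to $L$ combine with the growth factors hidden inside $s_l$ so that every term collapses to the \emph{same} model prefactor $C_\sigma^{L}C^{L-1}\prod_{j=1}^L\|\mathbf{\Theta}^{(j)}\|$, multiplied by either $\sqrt{d\,\langle\mathbf{K},\Eg^T\Eg\rangle}$ (from $l=1$) or $\|\Eg\|\sqrt{d\,\tr(\mathbf{K})}$ (from each of the $L-1$ layers with $l\ge2$); collecting the single tight contribution and the $L-1$ crude ones, and identifying the common squared prefactor with $C_\sigma^{2L}C^{2L-2}\prod_j\|\mathbf{\Theta}^{(j)}\|^2$, produces the structure of~\eqref{eq:LGCNN}. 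The main obstacle is precisely this cross-layer accumulation: the deeper-layer sources must be aggregated so that they contribute the linear count $(L-1)\|\Eg\|^2\tr(\mathbf{K})$ additively rather than being amplified by the repeated propagation factors, which requires the forward-norm growth $(C_\sigma C)^{l-1}$ to cancel exactly against the propagation weight $(C_\sigma C)^{L-l}$ for every $l$. Keeping this telescoping of constants exact, and ensuring the tight first-layer term is not diluted when merged with the crude deeper terms, is the delicate part of the argument.
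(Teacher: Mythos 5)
Your overall architecture matches the paper's: the add-and-subtract split into a source term $\Eg X^{(l-1)}$ and a propagation term $g(\mathbf{S}_p)\Delta^{(l-1)}$, the forward-norm bound $\E\|X^{(l-1)}\|_F^2 \le (C_\sigma C)^{2(l-1)}\prod_{j=1}^{l-1}\|\mathbf{\Theta}^{(j)}\|^2\, d\,\tr(\mathbf{K})$ obtained from (A2)--(A3), and the exact identification $s_1^2 = \E\|\Eg X\|_F^2 = d\langle\mathbf{K},\Eg^T\Eg\rangle$ via Theorem~\ref{thm:embd_pertb} are all the same ingredients the paper uses. The genuine gap is in the final aggregation. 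Because you run the recursion on $a_l = (\E\|\Delta^{(l)}\|_F^2)^{1/2}$ and square only at the end, unrolling gives $a_L \le M\big(\sqrt{A} + (L-1)\sqrt{B}\big)$ with $M = C_\sigma^{L}C^{L-1}\prod_{j=1}^L\|\mathbf{\Theta}^{(j)}\|$, $A = d\langle\mathbf{K},\Eg^T\Eg\rangle$, and $B = d\|\Eg\|^2\tr(\mathbf{K})$. Squaring produces $M^2\big(A + (L-1)^2 B + 2(L-1)\sqrt{AB}\big)$, not $M^2\big(A + (L-1)B\big)$: the cross term and the $(L-1)^2$ do not cancel, and the generic repair $\big(\sum_{l=1}^L c_l\big)^2 \le L\sum_{l=1}^L c_l^2$ costs an extra factor of $L$ relative to~\eqref{eq:LGCNN}. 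The ``telescoping'' you describe correctly makes every summand carry the same prefactor $M$, but it cannot turn the square of a sum of $L$ terms into the sum of their squares, so your route provably terminates at a strictly larger right-hand side whenever $L\ge 2$.

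The paper sidesteps this by never leaving squared norms: it proves by induction the pointwise (probability-one) inequality $\|\Delta^{(l)}\|_F^2 \le C_\sigma^{2l}C^{2l-2}\prod_{j=1}^{l}\|\mathbf{\Theta}^{(j)}\|^2\big((l-1)\|\Eg\|^2\|X\|_F^2 + \|\Eg X\|_F^2\big)$, splitting $\|g(\mathbf{S})X^{(l)} - g(\mathbf{S}_p)X_p^{(l)}\|_F^2$ into $\|\Eg X^{(l)}\|_F^2 + C^2\|\Delta^{(l)}\|_F^2$ directly at the level of squared Frobenius norms, and takes the expectation once at the very end. (That split itself silently uses $\|a+b\|_F^2 \le \|a\|_F^2 + \|b\|_F^2$, i.e., it discards exactly the cross term your triangle-inequality route makes visible; a fully airtight version needs $\|a+b\|_F^2 \le (1+\epsilon)\|a\|_F^2 + (1+\epsilon^{-1})\|b\|_F^2$ or a factor of $2$.) To land on the stated constant $(L-1)$ you would need to recast your recursion in squared norms in the same way; as written, your argument establishes the theorem only up to an additional factor of order $L$.
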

Our result in \eqref{eq:LGCNN} shows that the expected embedding perturbation of an $L$-layer GCNN is governed by several key components. The first is model complexity, captured by the factor $C_\sigma^{2L} C^{2L-2} \prod_{j=1}^L \| \mathbf{\Theta}^{(j)}\|^2 $ and $(L-1)$, which reflects the cumulative effects of activation nonlinearity, filter norms, and learned weights.
{
The bound also depends on the filter perturbation norm $\|\Eg\|^2$ and trace of the second-moment matrix $\tr(\mathbf{K})$, which reflect the magnitudes of the filter change and input intensity, respectively.}
The second-moment matrix modulated filter perturbation term $ \langle \mathbf{K}, \Eg^T \Eg\rangle$, is derived from bounding the recursive propagation of perturbations across layers. This term highlights how the embedding change is shaped jointly by the graph filter perturbation and the second-order statistics of the graph signal. For further intuition and a complete derivation, we refer the reader to the proof in Appendix~\ref{pf:L-GCN}.

In prior work, \citet{kenlay-2021-stability} showed that for unit-length input signals, the worst-case embedding perturbation of an $L$-layer GCN~\citep{kipf-2016-gcn} satisfies $\sup \|\mathbf{x}^{(L)} - \mathbf{x}_p^{(L)} \|_F^2 \leq d L^2  \prod_{j=1}^L \| \mathbf{\Theta}^{(j)}\|^2 \|\Eg\|^2$.
Compared to this worst-case bound, our result presents a tighter characterization of the embedding stability. More importantly, our result is distribution-aware, capturing the joint influence of second-order statistics and graph topology perturbations on the embedding deviation.

\section{A Structural Interpretation}
\label{sec:interp}

Our theoretical results demonstrate that the expected embedding perturbations of both graph filters and GCNNs are crucially related to the second-moment matrix modulated filter perturbation term $\langle \mathbf{K} ,\Eg^T\Eg\rangle$, which is jointly determined by the signal correlation and the structure of the filter perturbation. {In this section, we take one step further toward a structural interpretation,  understanding which parts of the graph, when perturbed, have the greatest impact on embeddings.} In particular, we aim to understand how the interplay between graph structure and signal correlation influences model stability. This perspective offers valuable insight into understanding why certain perturbations are more impactful than others, as empirically observed in several recent studies~\citep{li-2022-revisiting, wan-2021-adversarial, waniek-2018-dice}.

The filter perturbation $\Eg = g(\mathbf{S}) - g(\mathbf{S}_p)$ depends on the specific choice of graph filters, therefore, a unified analysis is not feasible, and an exhaustive enumeration of all possible filters is impractical.  We therefore focus on two representative graph filters: a low-pass filter using graph adjacency matrix $g(\mathbf{S}) = \mathbf{A} \in\{0,1\}^{n\times n}$, 
and a high-pass filter using the graph Laplacian matrix $g(\mathbf{S}) = \mathbf{L} = \mathbf{D}-\mathbf{A}$.  We denote  $\mathcal{P} = \{ \{u,v\}\subset \mathcal{V}: \{u,v\} \text{ is perturbed} \}$ the set of perturbed vertex pairs. We define  $\sigma_{uv}$ to indicate the type of perturbation between vertex $u$ and $v$: $\sigma_{uv} = 1$ corresponds to adding an non-existing edge between $u$ and $v$, while $\sigma_{uv} = -1$ corresponds to deleting an existing edge.

\begin{restatable}{proposition}{attackA}
\label{thm:attackAL}
Consider an edge perturbation set $\mathcal{P}$.
If the graph filter is the adjacency matrix, i.e.,  $g(\mathbf{S}) = \mathbf{A}$, then the expected embedding perturbation satisfies 
\begin{align}
    \label{eq:pertA-dist}
    \E_{X\sim \mathcal{D}}[\|\Eg X\|_2^2] = \langle \mathbf{K}, \Eg^T \Eg \rangle = \sum_{\{u,v\}\in \mathcal{P}}(\mathbf{K}_{uu} + \mathbf{K}_{vv}) + 2\hspace{-12pt}\sum_{\substack{\{u,v\},\{u,v'\}\in \mathcal{P}}} \sigma_{uv}\sigma_{uv'} \mathbf{K}_{vv'} . 
\end{align}
If the graph filter is the graph Laplacian, i.e., $g(\mathbf{S}) = \mathbf{L}$
    \begin{align}
    \label{eq:pertL-dist}
    \E_{X\sim\mathcal{D}}[\|\Eg X\|_2^2] = \langle \mathbf{K}, \Eg^T\Eg\rangle = 
    2\hspace{-8pt}\sum_{\{u,v\}\in \mathcal{P}} \hspace{-10pt}\dist(u,v) + \hspace{-20pt}\sum_{\{u,v\}, \{u,v'\}\in \mathcal{P} } \hspace{-20pt}\sigma_{uv}\sigma_{uv'}(\dist(u,v)+\dist(u,v') - \dist(v,v')), 
 \end{align}
where $\dist(u,v)$ is defined as $\dist(u,v) \triangleq \E[(X_u - X_v)^2]$.
\end{restatable}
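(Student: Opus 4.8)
The plan is to prove both identities by decomposing the filter perturbation $\Eg$ into a sum of elementary contributions, one per perturbed vertex pair, applying $\Eg$ to the random signal $X$, expanding the resulting square, and taking expectations entrywise using $\mathbf{K}_{ij}=\E[X_iX_j]$. By Theorem~\ref{thm:embd_pertb} it suffices to evaluate $\langle \mathbf{K},\Eg^T\Eg\rangle = \E_{X\sim\mathcal D}[\|\Eg X\|_2^2]$, so I would work directly with $\E[\|\Eg X\|_2^2]$ throughout and never form $\Eg^T\Eg$ explicitly.

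First, for the adjacency filter I would write $\Eg = \mathbf{A}-\mathbf{A}_p = -\sum_{\{u,v\}\in\mathcal P}\sigma_{uv}(\mathbf{e}_u\mathbf{e}_v^T + \mathbf{e}_v\mathbf{e}_u^T)$, since adding (resp.\ deleting) the pair $\{u,v\}$ flips the symmetric entries at $(u,v)$ and $(v,u)$ by exactly $+\sigma_{uv}$ in $\mathbf{A}_p$. Reading off the $w$-th coordinate gives $(\Eg X)_w = -\sum_{v:\{w,v\}\in\mathcal P}\sigma_{wv}X_v$, so that $\|\Eg X\|_2^2 = \sum_w\big(\sum_{v:\{w,v\}\in\mathcal P}\sigma_{wv}X_v\big)^2$. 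Expanding the inner square and taking expectations replaces $\E[X_vX_{v'}]$ by $\mathbf{K}_{vv'}$; separating the diagonal contributions $v=v'$ (where $\sigma_{wv}^2=1$) from the off-diagonal ones then yields the two sums in~\eqref{eq:pertA-dist}.

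Second, for the Laplacian filter I would use the edge-Laplacian decomposition $\mathbf{L}_p-\mathbf{L} = \sum_{\{u,v\}\in\mathcal P}\sigma_{uv}\mathbf{b}_{uv}\mathbf{b}_{uv}^T$ with $\mathbf{b}_{uv}=\mathbf{e}_u-\mathbf{e}_v$, so that $\Eg = -\sum_{\{u,v\}\in\mathcal P}\sigma_{uv}\mathbf{b}_{uv}\mathbf{b}_{uv}^T$ and $\Eg X = -\sum_{\{u,v\}\in\mathcal P}\sigma_{uv}(X_u-X_v)\mathbf{b}_{uv}$. Expanding $\|\Eg X\|_2^2$ produces terms $\sigma_{uv}\sigma_{u'v'}(X_u-X_v)(X_{u'}-X_{v'})\,\mathbf{b}_{uv}^T\mathbf{b}_{u'v'}$; because two distinct perturbed pairs meet in at most one vertex, $\mathbf{b}_{uv}^T\mathbf{b}_{u'v'}$ vanishes unless the two pairs coincide or share a vertex. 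The diagonal terms give $\mathbf{b}_{uv}^T\mathbf{b}_{uv}=2$ and hence $2\sum_{\{u,v\}}\dist(u,v)$ after taking expectations, while the shared-vertex terms require converting $\E[(X_u-X_v)(X_u-X_{v'})]$ through the polarization identity $(a-b)(a-c)=\tfrac12[(a-b)^2+(a-c)^2-(b-c)^2]$ into $\tfrac12[\dist(u,v)+\dist(u,v')-\dist(v,v')]$.

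The main obstacle, and the place where I would be most careful, is the combinatorial bookkeeping of the cross terms together with the orientation-dependent signs. For the Laplacian the vectors $\mathbf{b}_{uv}$ carry an orientation, so $\mathbf{b}_{uv}^T\mathbf{b}_{u'v'}$ can be $+1$ or $-1$; however, each elementary term $(X_u-X_v)\mathbf{b}_{uv}$ is orientation-invariant, so I would fix the orientations of both edges to point away from their common vertex, forcing every shared-vertex inner product to equal $+1$ and letting the signs collapse cleanly into the factor $\sigma_{uv}\sigma_{uv'}$. I would then observe that each unordered shared-vertex pair of edges is counted twice in the ordered double sum, so the factor $2$ from double counting cancels the $\tfrac12$ from polarization and reproduces exactly the coefficient-one second sum in~\eqref{eq:pertL-dist}; the same double-counting argument supplies the explicit factor $2$ in~\eqref{eq:pertA-dist}. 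Verifying these counting factors, and confirming that edges sharing no vertex contribute nothing, is the crux of the argument.
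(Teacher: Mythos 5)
Your proposal is correct and follows essentially the same route as the paper's proof: the same edge-level decompositions $\Eg=\sum_{\{u,v\}\in\mathcal P}\sigma_{uv}(\mathbf{e}_u\mathbf{e}_v^T+\mathbf{e}_v\mathbf{e}_u^T)$ and $\Eg=\sum_{\{u,v\}\in\mathcal P}\sigma_{uv}\mathbf{b}_{uv}\mathbf{b}_{uv}^T$, the same separation into self-terms and shared-vertex coupling terms, and the same polarization step for the Laplacian case (the paper carries it out as $\mathbf{K}_{uu}+\mathbf{K}_{vv'}-\mathbf{K}_{uv'}-\mathbf{K}_{vu}=\tfrac12[\dist(u,v)+\dist(u,v')-\dist(v,v')]$). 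The only difference is cosmetic: you expand the scalar $\|\Eg X\|_2^2$ coordinatewise and take expectations at the end, whereas the paper first forms the matrix $\Eg^T\Eg$ and then pairs it with $\mathbf{K}$; your handling of orientation invariance and the double-counting factors is sound.
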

Proposition~\ref{thm:attackAL} shows that the expected embedding perturbation decomposes into two parts: self-terms, which capture the individual contribution of each perturbed edge (the first summation in \eqref{eq:pertA-dist} and \eqref{eq:pertL-dist}),  and coupling terms, which capture interactions between intersecting edge pairs, i.e., edge pairs that share a common vertex (the second summation in each expression). While each self-term is non-negative, the coupling terms depend on both the signs of the perturbations and the signal correlation between the endpoints. This suggests that impactful perturbations tend to involve intersecting edge pairs whose perturbation types are aligned with the signal correlation.
The following remarks provide structural interpretations of impactful perturbations for $\mathbf{A}$ and $\mathbf{L}$.
\begin{remark}
\label{remark:A}
    For the adjacency-based filter $g(\mathbf{S}) = \mathbf{A}$, impactful edge perturbations tend to have intersecting edge perturbations rather than distributed and disjoint edge perturbations. Those intersecting edge perturbations tend to have their perturbation types aligned with the second-order statistics of the non-intersecting vertices, ensuring that $\sigma_{uv}\sigma_{uv'} \mathbf{K}_{vv'}>0$.
\end{remark}

\begin{remark}
\label{remark:L}
    By definition, the function $\dist(\cdot): \mathcal{V}\times\mathcal{V}\rightarrow \R$ is a distance function in a metric space, and it follows the triangle inequality
        $\dist(u,v)+\dist(u,v') - \dist(v,v') \geq 0$.  Therefore, for  $g(\mathbf{S}) = \mathbf{L}$, impactful edge perturbations tend to have intersecting edge perturbations consistent in type, either both additions or both deletions, i.e.,  $\sigma_{uv}\sigma_{uv'}=1$.
\end{remark}
\begin{figure}[!htbp] 
    \centering
    \begin{subfigure}[t]{0.23\textwidth}
    \includegraphics[width=\textwidth]{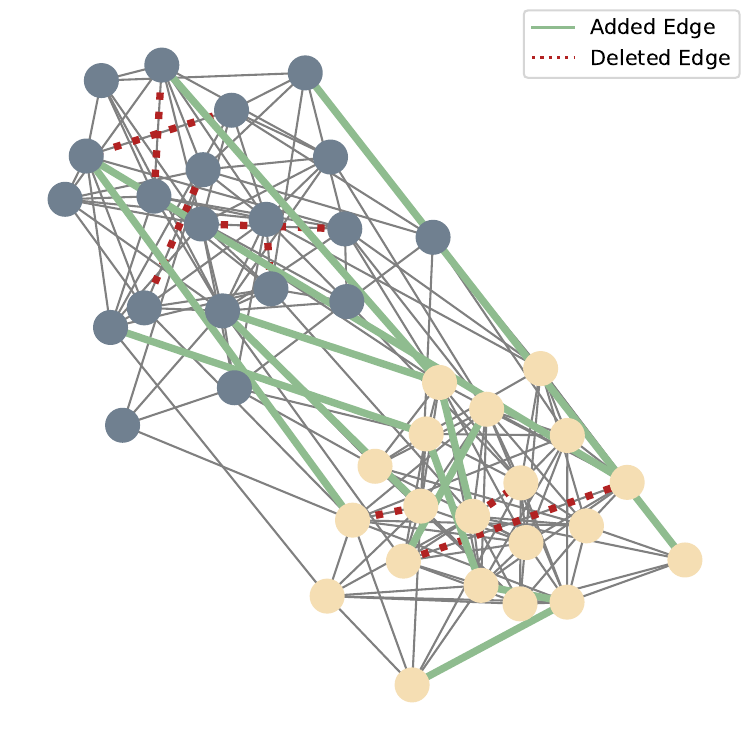}
    \caption{Random}
    \end{subfigure}
    \begin{subfigure}[t]{0.23\textwidth}
    \includegraphics[width=\textwidth]{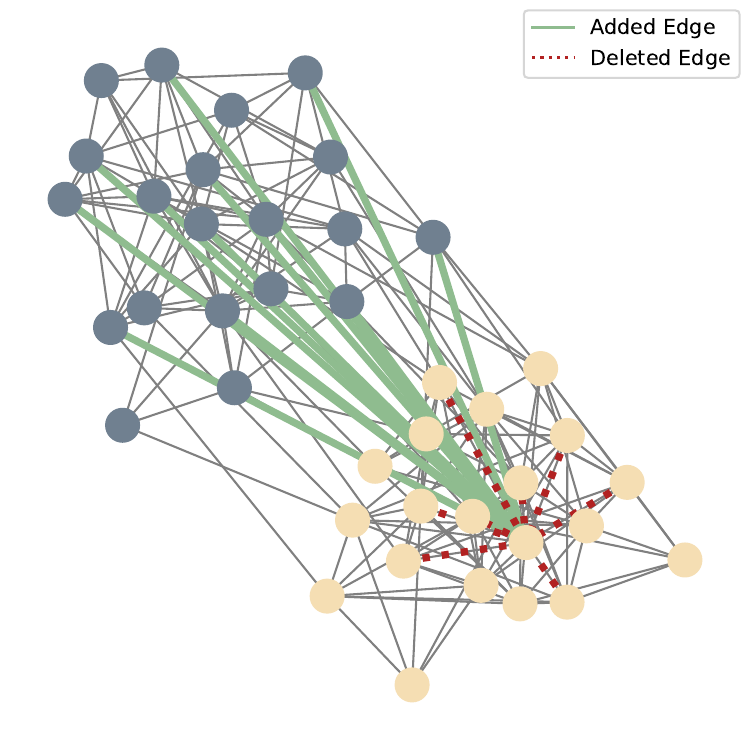}
    \caption{As Remark~\ref{remark:A}} \label{fig:cSBM_attk_A_str}
    \end{subfigure}
    \begin{subfigure}[t]{0.23\textwidth}
    \includegraphics[width=\textwidth]{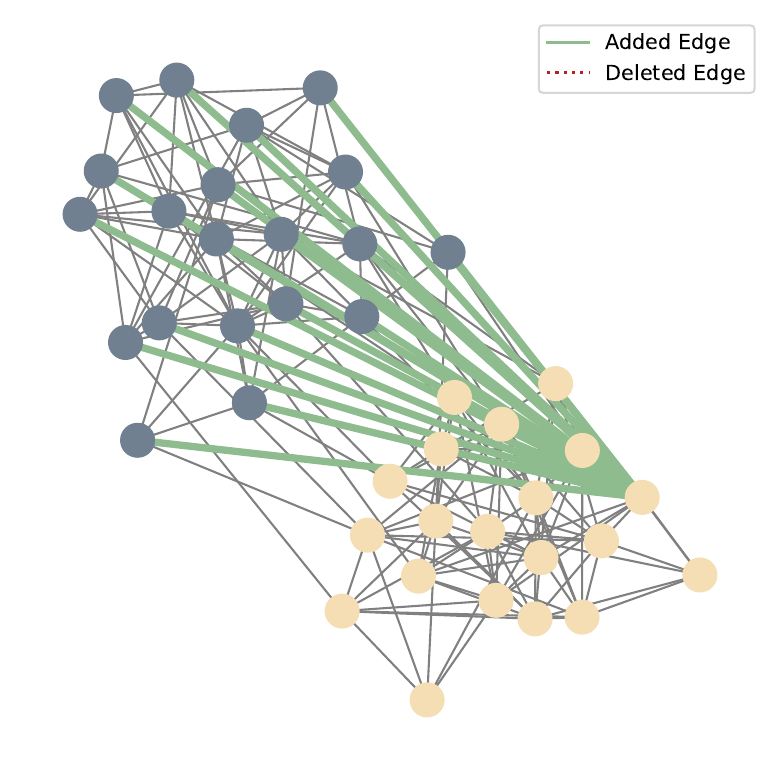}
    \caption{As Remark~\ref{remark:L}} \label{fig:cSBM_attk_L_str}
    \end{subfigure}
    \begin{subfigure}[t]{0.28\textwidth}
      \centering
      \includegraphics[width=\linewidth]{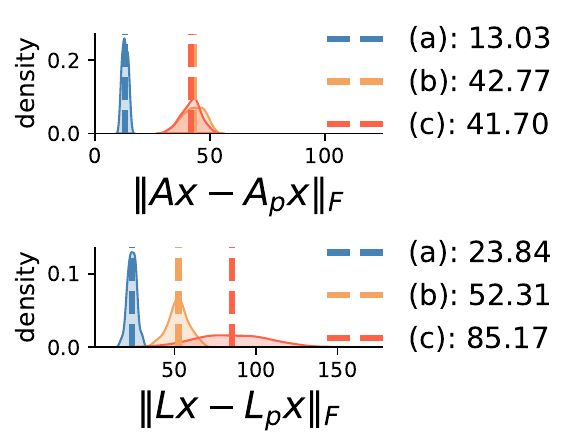}
      \caption{Sample distribution}
      \label{fig:cSBM_dist}
  \end{subfigure}
    \caption{Edge perturbations on a graph and signals from the contextual stochastic block model.
    }
    \label{fig:cSBM_attk_A}
\end{figure}

To elaborate on these insights, we use the contextual stochastic block model (cSBM) as a demonstration. In cSBM \cite{deshpande-2018-contextual}, vertices within the same community have positively correlated signals, while signals between communities are negatively correlated.  We adopt 
graph signal distribution defined in~\eqref{eq:cSBM}, with $\mathbf{K}_{uv} > 0$ if $u$ and $v$ belong to the same community, $\mathbf{K}_{uv} <0$ otherwise.
Figure~\ref{fig:cSBM_attk_A} illustrates edge perturbations patterns constructed according to Remark~\ref{remark:A} and Remark~\ref{remark:L}. We generate 100 graph signals sampled from \eqref{eq:cSBM} and plot the distribution of their embedding perturbations in Figure~\ref{fig:cSBM_dist}.  The results show that, for the same number of edge perturbations, the perturbation patterns following Remark~\ref{remark:A} and Remark~\ref{remark:L} produce the largest overall embedding perturbations for $\mathbf{A}$ and $\mathbf{L}$ separately, validating the structural intuitions described in both remarks.

Several prior studies~\citep{li-2022-revisiting, wan-2021-adversarial, zugner-2020-adversarial_pattern} have shown that certain structural patterns of adversarial edge perturbations are more impactful than others. 
The key distinction is that: previous approaches design perturbations primarily with respect to the graph topology, such as vertex degree, while our study advocates for jointly considering both the graph topology and the signal distribution.
In earlier work, \citet{waniek-2018-dice} proposed the heuristic adversarial perturbation method \texttt{DICE}, which disconnects nodes within the same community and connect nodes across communities. Our theoretical analysis, along with the cSBM case study, provides a rigorous justification for this heuristic by explaining from a perspective of embedding perturbation why and when this strategy is efficient. 
We note that our framework is broadly applicable, as it neither relies on a specific graph filter nor requires ground-truth labels from downstream tasks, thereby providing insights into the structure of impactful perturbations across diverse graph learning settings.

\section{Empirical Study on Adversarial Edge Perturbation}
\label{sec:experiment}

Adversarial graph attacks are carefully designed, small changes applied to the graphs that can lead to unreliable model outputs. Understanding such adversarial vulnerabilities is crucial for assessing the robustness of graph learning models.
In this section, we demonstrate the practical impact of our theoretical findings in adversarial graph attacks.  Specifically, we develop a distribution-aware attack algorithm, \texttt{Prob-PGD}, to identify adversarial edge perturbations that significantly affect graph signal embeddings.
{Following our setting of inference-time stability of pre-trained models,} we focus on \textit{evasion} attacks on graphs {(as opposed to \textit{poisoning} attacks)}, where an adversary can modify only edges of the graph, without changing the learned model parameters of graph filters and GCNNs.  We consider an adversarial edge attack on undirected simple graphs and further assume that adversaries have a limited budget, allowing them to add or delete up to $m$ edges. 

\textbf{Methodology.} We consider a task-agonistic edge attack where our goal of adversarial edge perturbation is to maximize the expected embedding perturbations from graph filters or $L$-layer GCNNs
\begin{align*}
    \mathcal{P}^*  = \argmax_{|\mathcal{P}| = m} \E[\|\Eg X\|_F^2], \;\ \text{or } \mathcal{P}^*  = \argmax_{|\mathcal{P}| = m} \E[\|X^{(L)} - X^{(L)}_p\|_F^2].
    \vspace{0pt}
\end{align*}
From Theorem~\ref{thm:embd_pertb} and Theorem~\ref{thm:L-GCN}, we characterize both embedding perturbations and establish their crucial dependence on the corresponding filter perturbation term $\langle \mathbf{K}, \Eg^T\Eg\rangle$. Based on these, we propose an adversarial edge perturbation design as
\begin{align}
\label{eq:att_opt}
    \hat{\mathcal{P}} = \argmax_{|\mathcal{P}| = m} \langle \mathbf{K}, \Eg^T\Eg\rangle.
    \vspace{0pt}
\end{align}
{Here, the optimization objective involves the filter perturbation $\Eg$ and second-moment matrix $\mathbf{K}$. The filter perturbation $\Eg$ can be computed directly from its definition, $\Eg = g(\mathbf{S}) - g(\mathbf{S}_p)$, for any given filter and perturbation $\mathcal{P}$ (examples are provided in Appendix~\ref{appx:concept}). The second-moment matrix can be obtained using the empirical second-moment, i.e., $\mathbf{K}_{\text{samp}} = 1/N \sum_{i=1}^{N}\mathbf{x}_i\mathbf{x}_i^T$, with alternative approaches discussed in Appendix~\ref{appx:K}.}
The main obstacle to solving this optimization problem is the computational cost, as the complexity of brute-force searching is $O\left(n^m \right)$. To circumvent this, we consider relaxing discrete constraints in the problem and applying a projected gradient descent method developed by \citet{xu-2019-topology_PGD} for efficient adversarial attack in \texttt{Prob-PGD}. We present the algorithmic details in Appendix~\ref{appx:alg}. 
We also refer the reader to Appendix~\ref{appx:large}, where we discuss how the proposed method can be adapted to large-scale datasets and present  experimental results.

\textbf{Baselines.}
In our experiments, we compare our method \texttt{Prob-PGD} with two existing baselines: a randomize baseline, \texttt{Random}, which uniformly selects $m$ pair of vertices and alters their connectivity;
another algorithm \texttt{Wst-PGD} from \citet{kenlay-2021-interpretable} is based on the same projected gradient descent approach but optimizes for the worst-case embedding metric~\eqref{eq:wst}. Other attack algorithms, such as those proposed in \citep{xu-2019-topology_PGD, waniek-2018-dice}, are not directly comparable, as they rely on specific loss functions, whereas our setting is task-agnostic.
\subsection{Adversarial embedding attack}

\textbf{Graph datasets.}
We conduct experiments on a variety of graphs with different structural properties from both synthetic and real-world datasets. Specifically, we consider the following graph datasets: (1)~Stochastic block model  (SBM), which generates graphs with community structure \cite{holland-1983-sbm}; (2)~Barabasi-Albert model (BA), which generates random {scale-free} graphs through a preferential attachment mechanism  \cite{barabasi-1999-emergence};  (3) Watts-Strogatz model (WS) producing graphs with \textit{small-world} properties \cite{watts-1998-collective}; (4) a random geometric (disc) graph model -- Random Sensor Network; (5) Zachary's karate club network, a social network representing a university karate club \cite{zachary-1977-information}; (6) ENZYMES, consisting of protein tertiary structures obtained from the BRENDA enzyme database \cite{schomburg-2004-brenda}. We refer to Appendix~\ref{appx:data} for detailed descriptions. 




{\textbf{Graph signals.} Among the above datasets, only the ENZYMES dataset contains real-world measurement signals associated with their vertices, while the remaining five datasets only contain graph structures without signals on their vertices.  For these graph-only datasets, we generate 100 synthetic signals from an arbitrary distribution, such as random Gaussian signals, signals from cSBM, or the smoothness distribution (see Appendix~\ref{appx:data} for a detailed description). The second-moment matrix $\mathbf{K}$ is obtained from computing the sample graph signals, i.e, using $\mathbf{K}_{\text{samp}} = 1/100\sum_{i=1}^{100}\mathbf{x}_i\mathbf{x}_i^T$.

\textbf{Results of graph filters embedding.} 
We evaluate adversarial edge perturbations on two representative filters: the graph adjacency (low-pass) and the graph Laplacian (high-pass), which capture distinct signal propagation behaviors in graph-based data processing.
We apply different adversarial edge perturbation methods with a fixed budget of $m = 20$ edges (less than 2\% of the maximum possible edges in most graphs). 
The results, shown in Figure~\ref{fig:exp_filters}, are presented as violin plots, illustrating the distribution of embedding perturbations across multiple graph signals—100 randomly sampled signals for synthetic datasets and 18 real-world measurement signals for ENZYMES. This provides a more detailed view of how edge perturbations affect different signal realizations.
Across all datasets and filter types, our distribution-aware method \texttt{Prob-PGD} consistently induces the largest overall embedding perturbations, highlighting the effectiveness and robustness of our methodology.
\begin{figure}
    \centering
  \begin{picture}(25,0)
    \put(0,10){{\rotatebox{90}{\scriptsize{$g(\mathbf{S}) = \mathbf{A}$}}}}
  \end{picture}
  \hspace{-18pt}
        \includegraphics[width=0.15\textwidth]{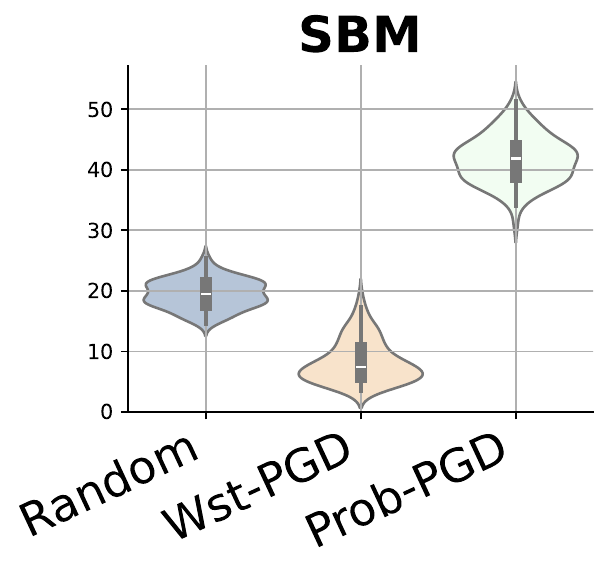}
        \includegraphics[width=0.15\textwidth]{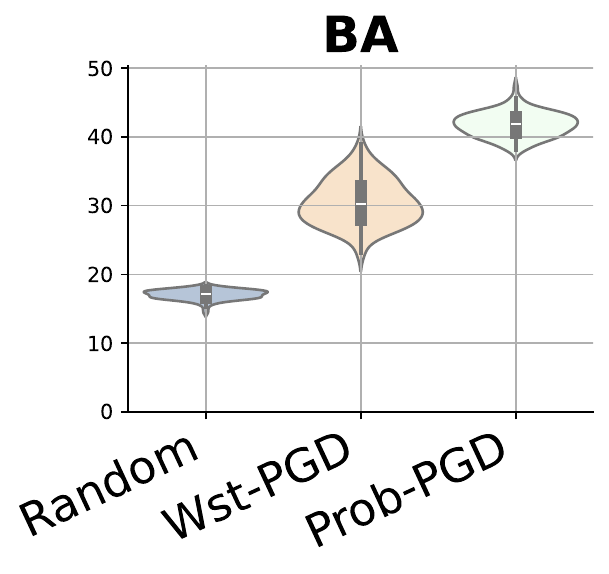}
        \includegraphics[width=0.15\textwidth]{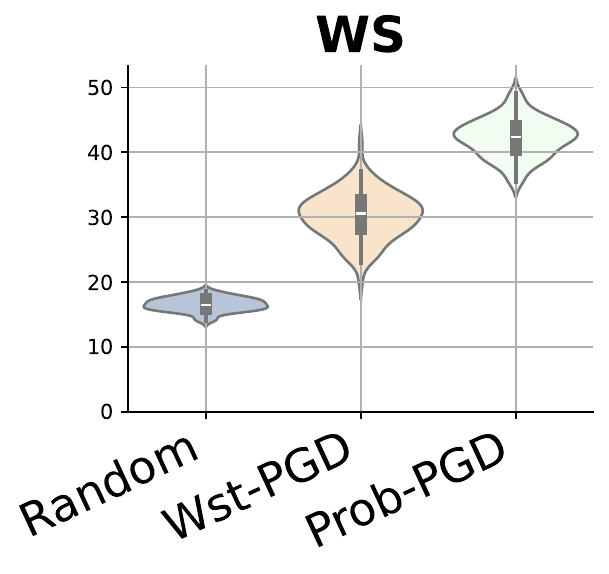}
        \includegraphics[width=0.15\textwidth]{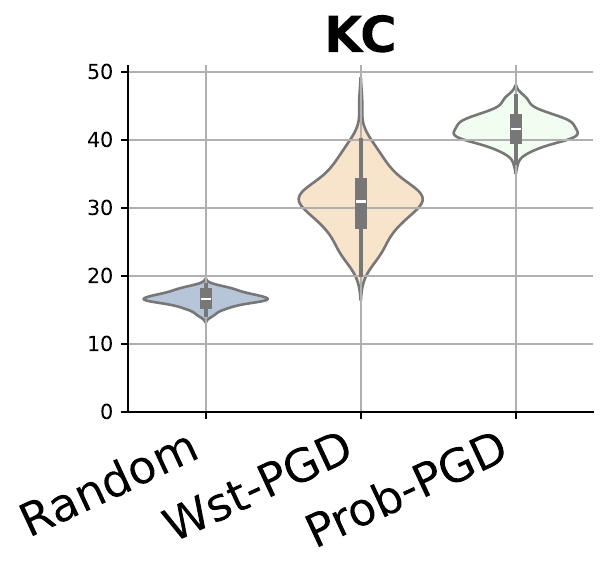}
        \includegraphics[width=0.15\textwidth]{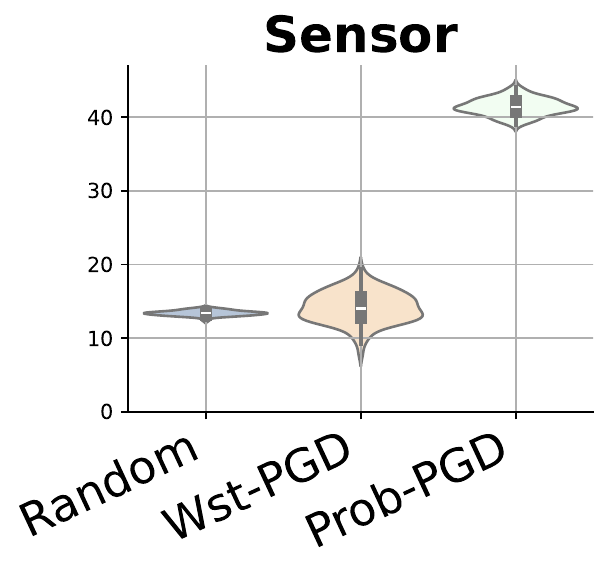}
        \includegraphics[width=0.15\textwidth]{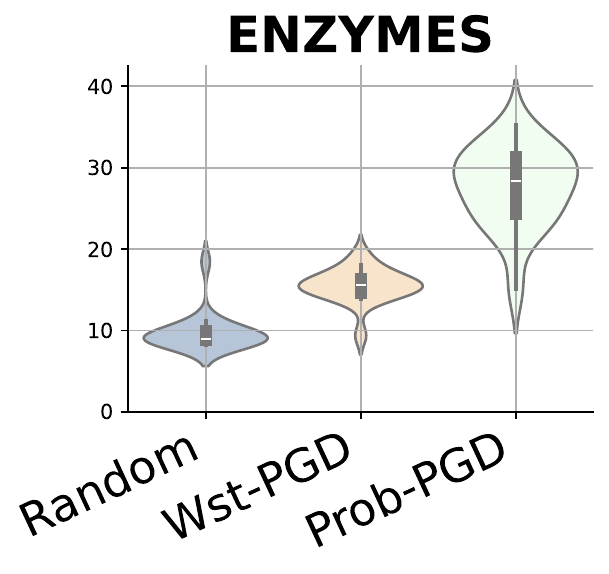}
        
\begin{picture}(25,0)
    \put(0,10){{\rotatebox{90}{\scriptsize{$g(\mathbf{S}) = \mathbf{L}$}}}}
  \end{picture}
  \hspace{-18pt}
        \includegraphics[width=0.15\textwidth]{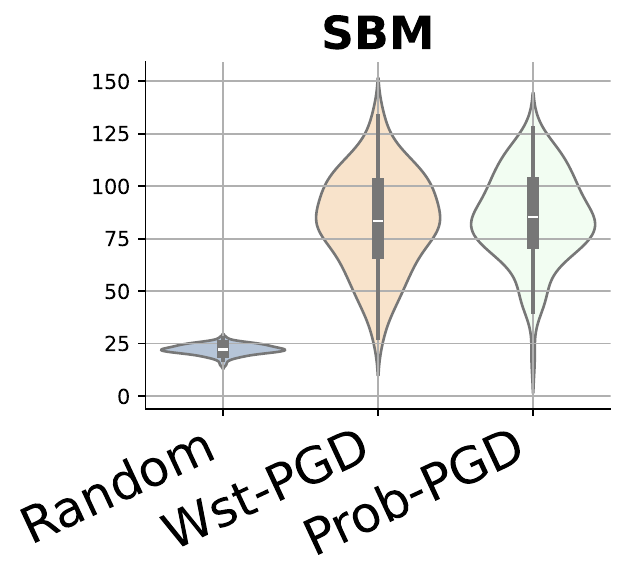}
        \includegraphics[width=0.15\textwidth]{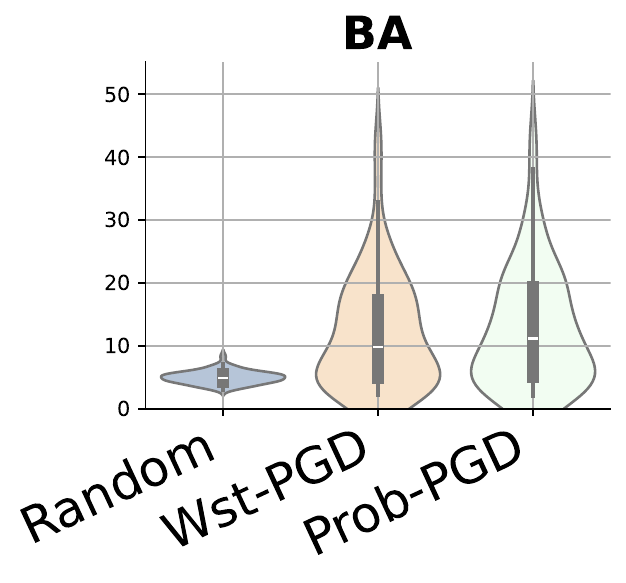}
        \includegraphics[width=0.15\textwidth]{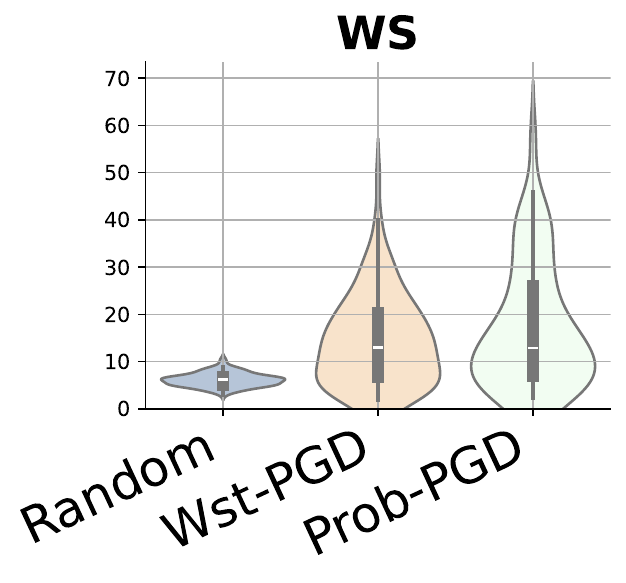}
        \includegraphics[width=0.15\textwidth]{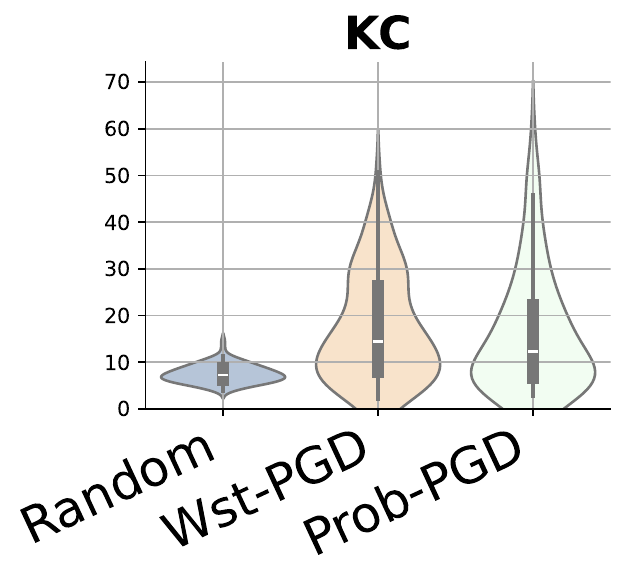}
        \includegraphics[width=0.15\textwidth]{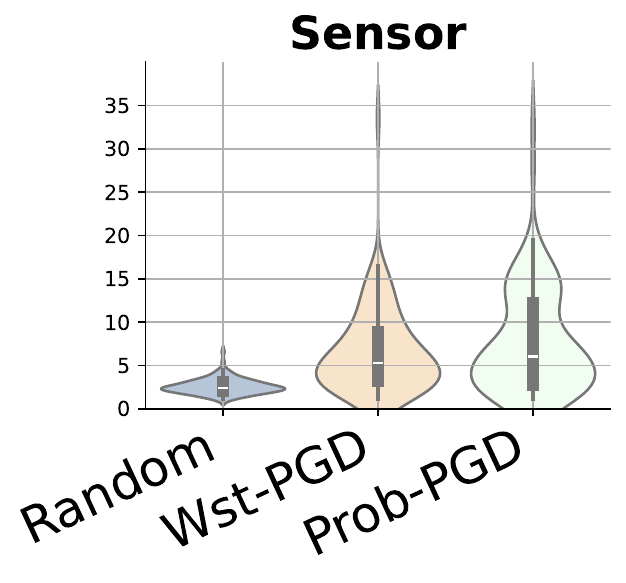}
        \includegraphics[width=0.15\textwidth]{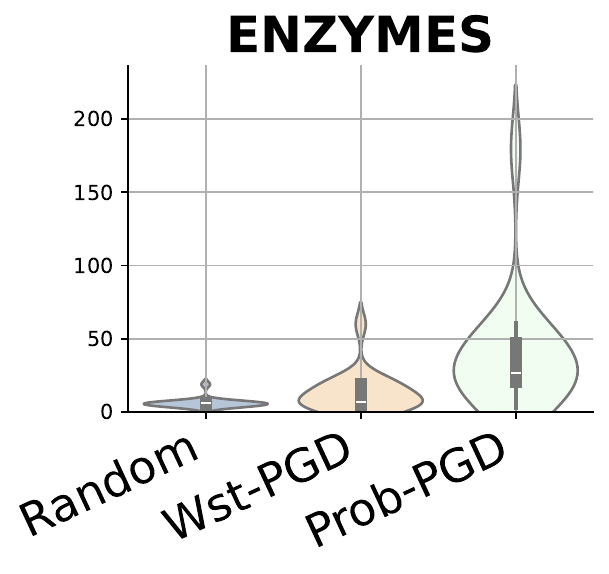}
    \caption{Violin plot of graph signal embedding perturbations for $g(\mathbf{S}) = \mathbf{A}$ (1st row) and $g(\mathbf{S}) = \mathbf{L}$ (2nd row). The y-axis shows sample-wise embedding perturbations i.e, $\ \| g(\mathbf{S})\mathbf{x}_i  - g(\mathbf{S}_p)\mathbf{x}_i \|_2 $ for signals $\{\mathbf{x}_i\}_{i=1}^d$ associated with each graph.}
    \vspace{-10pt}
    \label{fig:exp_filters}
\end{figure}

{\textbf{Results of multilayer GCN embedding.}  
We conduct experiments on multilayer GCN architectures~\citep{kipf-2016-gcn} using the normalized adjacency matrix $g(\mathbf{S}) = \Tilde{\mathbf{D}}^{-1/2} \Tilde{\mathbf{A} }\Tilde{\mathbf{D}}^{-1/2}$. The tests are performed on contextual stochastic block models (cSBM) with 100 vertices, where each algorithm perturbs 20 edges.
Since the learnable parameters $\{\mathbf{\Theta}^{(l)} \}_{l=1}^L$ also influence the final embeddings, we perform 200 repeated experiments with model weights independently resampled from a standard Gaussian distribution to ensure statistical reliability. 
Figure~\ref{fig:exp_GCNNs} presents the Frobenius norm of the embedding perturbations at each layer, visualized as box plots over the 200 trials. 
Across all layers and activation functions, our distribution-aware method \texttt{Prob-PGD} consistently outperforms the baselines, demonstrating the effectiveness of our probabilistic formulation in deeper architectures. We also observe that the performance gap between attack methods narrows in deeper layers, possibly due to the loosening of our theoretical bound with depth and inherent phenomena in deep GCNs, such as oversmoothing. 
Additional results with different model architectures are provided in Appendix~\ref{appx:GCNNs}.
\begin{figure}[H]
    \centering
    \begin{subfigure}[b]{0.49\textwidth}
    \centering
        \includegraphics[width=0.19\textwidth]{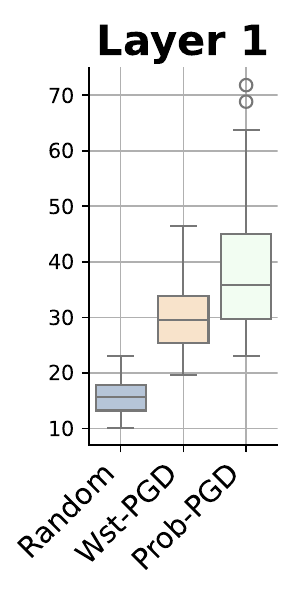}
        \includegraphics[width=0.19\textwidth]{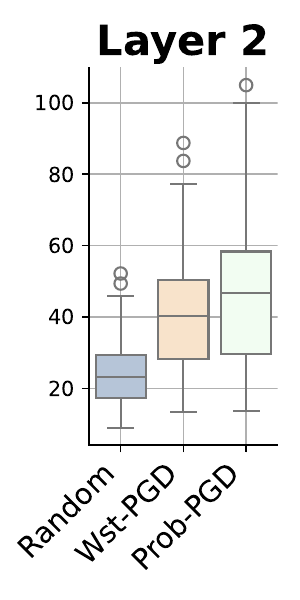}
        \includegraphics[width=0.19\textwidth]{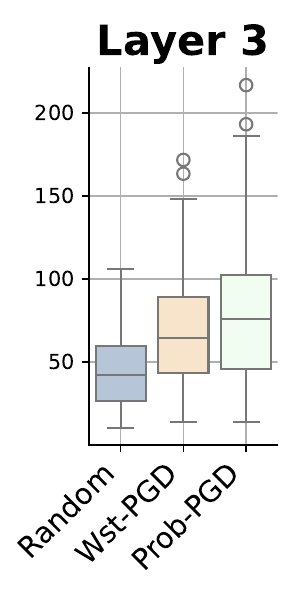}
        \includegraphics[width=0.19\textwidth]{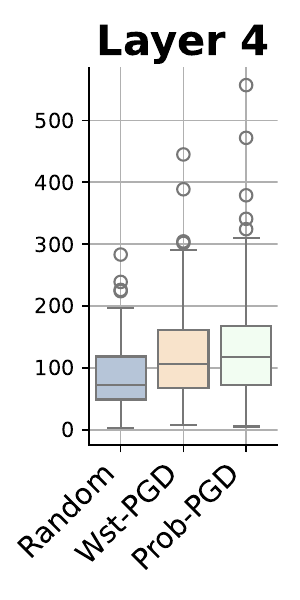}
        \includegraphics[width=0.19\textwidth]{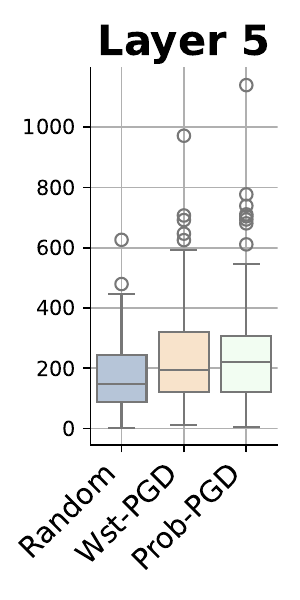}
        \caption{GCN with ReLU}
    \end{subfigure}
    \hfill
    \begin{subfigure}[b]{0.49\textwidth}
    \centering
        \includegraphics[width=0.19\textwidth]{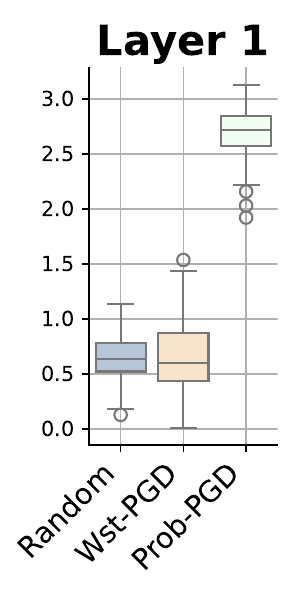}
        \includegraphics[width=0.19\textwidth]{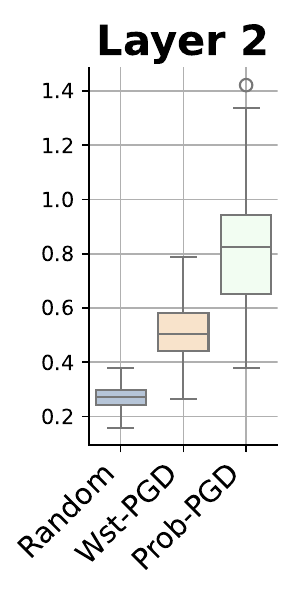}
        \includegraphics[width=0.19\textwidth]{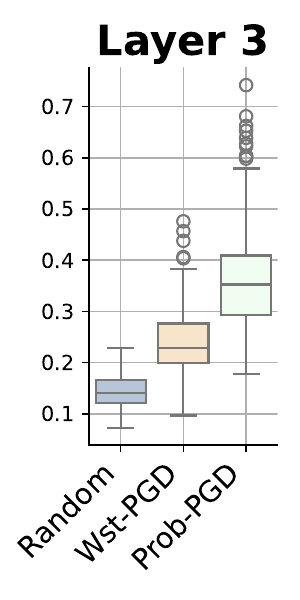}
        \includegraphics[width=0.19\textwidth]{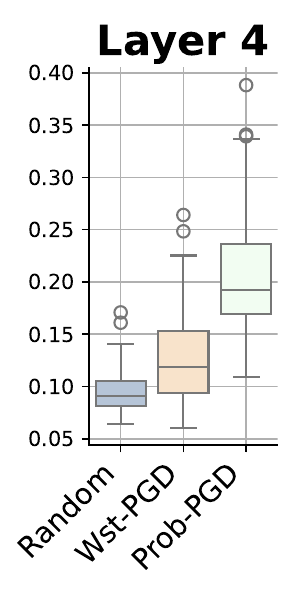}
        \includegraphics[width=0.19\textwidth]{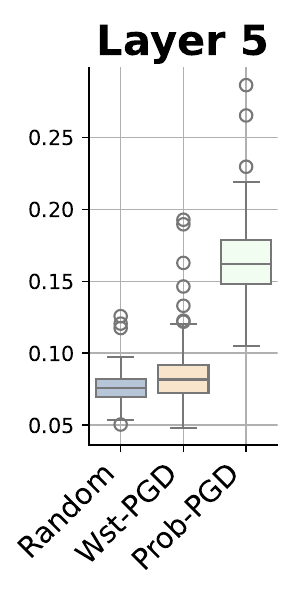}
        \caption{GCN with sigmoid}
    \end{subfigure}
    \caption{Embedding perturbation of $5$-layer GCNs with different activation function.}\label{fig:exp_GCNNs}
    \vspace{-10pt}
\end{figure}

\subsection{Adversarial attack on downstream tasks}
We further evaluate how embedding-level adversarial perturbations affect the downstream performance of pre-trained GCNN models. Our experiments include three popular architectures: GCN~\cite{kipf-2016-gcn},  SGC~\cite{wu-2019-simpGCN}, and CIN\footnote{We modify the standard GIN architecture by replacing the MLP in each layer with a single-layer perceptron, resulting in a simplified convolutional variant of GIN. Experiments (Appendix~\ref{appx:MPGNN}) show that standard GINs with deeper MLPs exhibit a similar response to edge perturbations as their convolutional counterparts.}~\cite{xu-2018-powerful}.  More details about model architecture are provided in Table~\ref{tab:GCNN} in Appendix~\ref{appx:concept}, and training details are given in Appendix~\ref{appx:data}.

\textbf{Graph classification.} We evaluate classification performance on two benchmark datasets: MUTAG (188 molecular graphs divided into two classes) and ENZYMES (600 protein graphs labeled by six enzyme classes). Both datasets provide numerical node features (e.g., atom types and physical measurements), from which we compute the second-moment matrix as $\mathbf{K}_{\text{samp}} = 1/d\sum_{i=1}^{d}\mathbf{x}_i\mathbf{x}_i^T$.
For each dataset,  we use 80\% of the graphs to pre-train 10 GCNNs and keep their parameters fixed during evaluation. To assess their prediction robustness, we perturb $5\%$ edges in test set graphs using different attack algorithms and measure classification accuracy before and after perturbation.

\textbf{Node classification}. We use the Cora citation network (2708 vertices belonging to seven categories) with sparse bag-of-words node features.  We pre-train 10 GCNNs on the unperturbed graphs with 140 labeled nodes and keep the model parameters fixed during evaluation. To assess the robustness of these pre-trained models, we apply different algorithms to perturb 1000 edges to the graph and evaluate classification accuracy on test set nodes before and after perturbation. Additional experiments on heterophilic datasets are provided in Appendix~\ref{appx:Hetero}.

\begin{table}[h]
\centering
\captionof{table}{Prediction accuracy of GCNNs under edge perturbations produced by different methods.}
\label{tab:task}
\small
\begin{tabular}{llcccccc}
\toprule
\multirow{2}{*}{\textbf{Dataset}} & \multirow{2}{*}{\textbf{Method}} & \multicolumn{3}{c}{\textbf{Prediction Accuracy(\%)}} & \multicolumn{3}{c}{\textbf{Embedding Pert.} $\|\cdot\|_F$} \\
\cmidrule(lr){3-5} \cmidrule(lr){6-8}
 & & \textbf{GCN} & {\footnotesize\textbf{GIN}} & \textbf{SGC} & \textbf{GCN} &{\footnotesize\textbf{GIN}} &\textbf{SGC} \\
\midrule
{\multirow{4}{*}{\scriptsize{}MUTAG}}
&\cellcolor{gray!15}Unpert.
&\cellcolor{gray!15}{70.00\text{\tiny$\pm2.68$}} 
&\cellcolor{gray!15}{82.89\text{\tiny$\pm4.60$}} 
&\cellcolor{gray!15}{68.95\text{\tiny$\pm1.58$}} 
&\cellcolor{gray!15} - &\cellcolor{gray!15} - &\cellcolor{gray!15} -\\
& Random                
&73.16\text{\tiny$\pm1.05$}         
&\textbf{70.53}\text{\tiny$\pm4.82$}       
&72.89\text{\tiny$\pm1.21$}
                        &2.24\text{\tiny$\pm0.15$}              
                        &28.62\text{\tiny$\pm2.34$}            
                        &3.57\text{\tiny$\pm0.17$}\\
& Wst-PGD               
&69.21\text{\tiny$\pm2.37$}        
&76.32\text{\tiny$\pm8.15$}     
&68.16\text{\tiny$\pm1.42$}
                        &1.95\text{\tiny$\pm0.13$}              
                        &37.47\text{\tiny$\pm9.47$}            
                        &3.11\text{\tiny$\pm0.16$} \\
& Prob-PGD              
&\textbf{42.89}\text{\tiny$\pm4.86$}  
&{76.58}\text{\tiny$\pm8.02$}   
&\textbf{41.58}\text{\tiny$\pm3.87$}
                        &\textbf{3.09}\text{\tiny$\pm0.16$}        
                        &\textbf{96.26}\text{\tiny$\pm11.39$}    
                        &\textbf{5.21}\text{\tiny$\pm0.25$}\\
\midrule
\multirow{4}{*}{{\scriptsize{ENZYMES}}} 
&\cellcolor{gray!15}{Unpert.} 
&\cellcolor{gray!15}{56.33\text{\tiny$\pm3.10$}} 
&\cellcolor{gray!15}{56.75\text{\tiny$\pm2.99$}} 
&\cellcolor{gray!15}{54.75\text{\tiny$\pm3.48$}} 
&\cellcolor{gray!15} - &\cellcolor{gray!15} - &\cellcolor{gray!15} -\\
& Random                
&49.58\text{\tiny$\pm3.93$}         
&45.83\text{\tiny$\pm2.64$}         
&45.50\text{\tiny$\pm2.94$}
                        &15.3\text{\tiny$\pm0.7$}                  
                        &81.3\text{\tiny$\pm11.3$}
                        &36.4\text{\tiny$\pm1.2$}\\
& Wst-PGD               
&52.58\text{\tiny$\pm3.56$}          
&39.75\text{\tiny$\pm3.29$}       
&48.75\text{\tiny$\pm4.27$}
                        &11.5\text{\tiny$\pm0.5$}                  
                        &264.9\text{\tiny$\pm34.7$}
                        &27.2\text{\tiny$\pm0.9$} \\
& Prob-PGD              
&\textbf{44.25}\text{\tiny$\pm2.09$}         
&\textbf{36.75}\text{\tiny$\pm4.62$}         
&\textbf{40.75}\text{\tiny$\pm3.83$}
                        &\textbf{17.9}\text{\tiny$\pm0.9$}
                        &\textbf{297.9}\text{\tiny$\pm57.1$}
                        &\textbf{43.6}\text{\tiny$\pm1.6$}\\
\midrule
\multirow{4}{*}{\scriptsize Cora} 
&\cellcolor{gray!15}{Unpert.} 
&\cellcolor{gray!15}{80.06\text{\tiny$\pm0.51$}} 
&\cellcolor{gray!15}{74.92\text{\tiny$\pm1.86$}} 
&\cellcolor{gray!15}{80.33\text{\tiny$\pm0.29$}} 
&\cellcolor{gray!15} - &\cellcolor{gray!15} - &\cellcolor{gray!15} -\\
& Random                
&78.30\text{\tiny$\pm0.82$}         
&71.75\text{\tiny$\pm1.85$}        
&78.12\text{\tiny$\pm0.64$} 
                        &93\text{\tiny$\pm4$}                         
                        &974\text{\tiny$\pm496$} 
                        &136\text{\tiny$\pm4$}\\
& Wst-PGD               
&78.47\text{\tiny$\pm0.44$}       
&73.76\text{\tiny$\pm2.07$}        
&78.30\text{\tiny$\pm0.33$}
                        &97\text{\tiny$\pm7$}                        
                        &16901\text{\tiny$\pm12495$} 
                        &151\text{\tiny$\pm5$}\\
& Prob-PGD              
&\textbf{77.88}\text{\tiny$\pm0.33$}         
&\textbf{56.67}\text{\tiny$\pm4.57$}       
&\textbf{77.49}\text{\tiny$\pm0.29$}
                        &\textbf{106}\text{\tiny$\pm5$}               
                        &\textbf{43565}\text{\tiny$\pm33035$}
                        &\textbf{160}\text{\tiny$\pm6$}\\
\bottomrule
\end{tabular}
\end{table}

\begin{figure}[t]
\centering
    \includegraphics[width=0.38\textwidth]{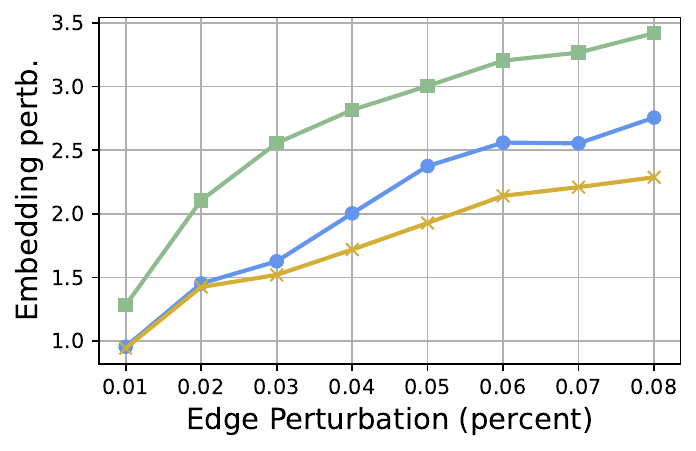}
    \includegraphics[width = 0.55\textwidth]{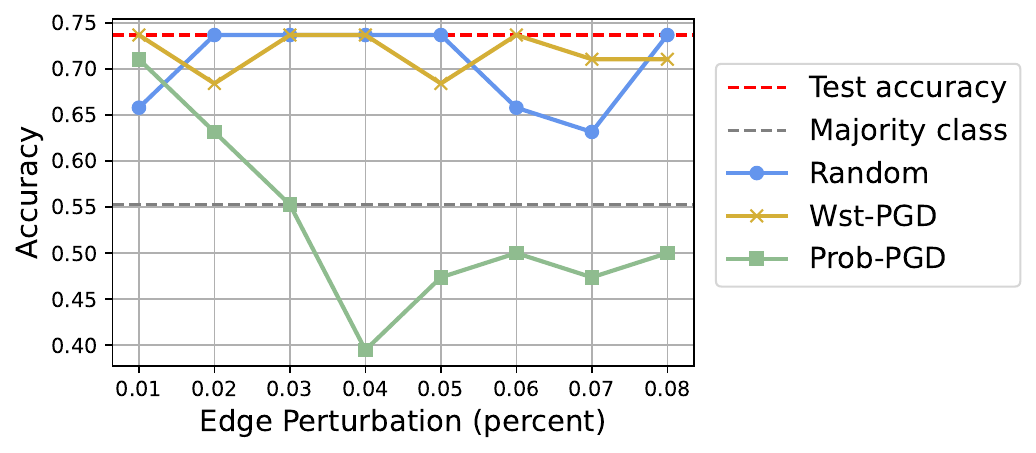}
    \caption{Classification performance of a pre-trained two-layer GCN on the MUTAG dataset under varying levels of edge perturbation.}
    \vspace{-10pt}
    \label{fig:vary_p}
\end{figure}

\textbf{Results.}
We report classification accuracy and embedding perturbation magnitude (measured by the Frobenius norm) in Table~\ref{tab:task}, averaged over 10 pre-trained models.
Across datasets and architectures, \texttt{Prob-PGD} consistently induces the largest embedding perturbations, which result in greater performance degradation in eight out of nine cases.
Figure~\ref{fig:vary_p} further illustrates the performance of a pre-trained two-layer GCN on the MUTAG dataset under varying levels of edge perturbation. 
The left plot shows that \texttt{Prob-PGD} consistently induces the largest embedding perturbations across all perturbation levels. The right plot demonstrates that embedding-level perturbations transfer to downstream performance degradation. Notably, when the edge perturbation exceeds 3\%, \texttt{Prob-PGD} reduces model accuracy to a level comparable to a trivial classifier that always predicts the majority class.
Overall, our proposed algorithm \texttt{Prob-PGD}, although task-agnostic, still causes significant performance degradation in pre-trained GCNNs across different tasks. These results underscore the vulnerability of GCNNs to even small edge perturbations and reinforce the importance of comprehensive stability analysis in such settings.

\section{Conclusion}\label{sec:conclude}

{In this paper, we propose a probabilistic framework for analyzing the embedding stability of GCNNs under edge perturbations. Unlike prior worst-case analyses, our formulation naturally enables a distribution-aware understanding of how graph structure and signal correlation jointly influence model sensitivity.  This insight allows us to provide a structural interpretation of the importance of perturbation, and motivates the design of \texttt{Prob-PGD}, a new adversarial attack method shown to be effective through extensive experiments. 
While focused on inference-time stability, our study naturally supports broader robustness studies in graph machine learning, including adversarial training~\citep{madry-2017-towards} by crafting adversarial examples. We leave such studies for future work.}


\section*{Acknowledgment}
NZ acknowledges support from the Engineering and Physical Sciences Research Council (EPSRC) and IBM.
XD acknowledges support from the Oxford-Man Institute of Quantitative Finance and EPSRC No. EP/T023333/1.
\bibliography{ref}


\appendix
\newpage
\appendix\label{appx}

\section{Summary on notations and concept definitions}
\subsection{Summary of notations}\label{appx:notation}
\begin{longtable}{ l  l } 
\caption{Summary on notations} \label{tab:sumNot}\\
\toprule
Notation & Definition\\ \midrule\midrule
$\bold{x}$ & A deterministic graph signal, $\mathbf{x} \in \R ^{n}$\\
${X}$ & Random graph signal (node feature)\\
$\mathcal{G}$ & Graph \\
$\mathcal{G}_p$  &   Perturbed graph\\
$\mathbf{A}$ & Graph adjacency matrix\\
$\mathbf{A_p}$ & Adjacency matrix of the perturbed graph $\mathcal{G}_p$ \\
$\mathbf{S}$ & Graph shift operator, $\mathbf{S}_{uv} = 0$ if $\mathbf{A}_{uv} = 0$\\ 
$\mathbf{S}_p$ & Graph shift operator on a perturbed graph, $(\mathbf{S}_p)_{uv} = 0$ if $(\mathbf{A}_p)_{uv} = 0$\\ 
$g(\mathbf{S})$ &  A graph filter defined on the shift operator $\mathbf{S}$\\
$\mathbf{E_g}$ & Graph filter perturbation, $g (\mathbf{S}_p) - g(\mathbf{S}) = \mathbf{E_g}$\\
$\Eg \mathbf{x}$ & Embedding perturbation for input $\mathbf{x}$, and $\Eg \mathbf{x} = g(\mathbf{S}_p) \mathbf{x} - g(\mathbf{S})\mathbf{x}$ \\
$\mathcal{D}$ & Distribution of graph signals\\
$\mathbf{K}$ & Second-moment matrix of graph signal from distribution$ \mathcal{D}$\\
$\mathbf{D}$ & Degree matrix, a diagonal matrix with entries $\mathbf{D}_{ii} = \sum_j \mathbf{A}_{ij}$\\
$\mathbf{L}$ & Graph Laplacian where $\mathbf{L} = \mathbf{D}-\mathbf{A}$\\
$\mathbf{D_p}$ & Degree matrix of the perturbed graph $\mathcal{G}_p$ \\
$\mathbf{L}_p$ & Graph Laplacian of perturbed graph $\mathcal{G}_p$, where $\mathbf{L}_p = \mathbf{D}_p-\mathbf{A}_p$\\
$\mathbf{J}$ & All-one matrix \\
$\mathbf{I}$ & Identity matrix \\
$\Tilde{\mathbf{A}}$ & Graph adjacency matrix with self-loop, $\Tilde{\mathbf{A}} = \mathbf{A} + \mathbf{I}$\\
$\Tilde{\mathbf{D}}$ & Degree matrix with self-loop, $\Tilde{\mathbf{D}}_{ii} = \sum_j \Tilde{\mathbf{A}}_{ij}$\\

$\mathcal{P}$ & Collection of perturbed vertex pairs, $\mathcal{P} = \{ \{u,v\}\subset \mathcal{V}: \{u,v\} \text{ is perturbed} \}$\\ 
$|\mathcal{P}|$ & Cardinality of the set $\mathcal{P}$\\
$\|\bold{x}\|_2^2$ & quadratic norm of vector $\bold{x}$, where $\|\bold{x}\|_2^2 = \bold{x}\bold{x}^T$\\
$\|\mathbf{M}\|$ & Spectral norm of matrix $\mathbf{M}$, where $\|\mathbf{M}\|$ is the largest singular value\\
$\|\mathbf{M}\|_F$ & Frobenius  norm of matrix $\mathbf{M}$, where $\|\mathbf{M}\|_F = \sqrt{\sum_{ij}\mathbf{M}_{{ij}}^2}$\\
$\langle \mathbf{A}, \mathbf{B} \rangle$ & Frobenius inner product, $\langle \mathbf{A}, \mathbf{B} \rangle = \sum_{ij} \mathbf{A}_{ij}  \mathbf{B}_{ij} $ \\
$\mathbf{A}\odot \mathbf{B}$ & Hadamard product of two matrices, $(\mathbf{A}\odot \mathbf{B})_{ij} = \mathbf{A}_{ij} \mathbf{B}_{ij}$\\
$\mathbf{A} \succeq 0 $ & Matrix $\mathbf{A}$ is positive semidefinite\\
$\sigma(\cdot)$ & Non-linear activation function \\
$\mathbf{\Theta}^{(l)}$ & Learnable parameters in the $l$-th layer of a GCNN\\

\bottomrule
\end{longtable}

\subsection{Concept definitions}\label{appx:concept}

\textbf{Edge perturbation.}
We take a simple graph $\mathcal{G}$, shown in Figure~\ref{fig:demo_G}, as an illustrative example. The graph signal on $\mathcal{G}$ is given by $\mathbf{x} = (0.1, 0.5, 0.9, 0.3, 0.7, 0.6)^T$. 
We perturb the graph $\mathcal{G}$ by deleting the edge between 4 and 6 and connecting 3 and 5, and the resulting perturbed graph is shown in Figure~\ref{fig:demo_Gp}. The edge perturbations can be represented as  $\mathcal{P} = \{\{3,5\}, \{4,6\}\}$ with $\sigma_{46} = -1$ and $\sigma_{35} = 1$. 

\textbf{Filter perturbation.} Edge perturbations in a graph induce corresponding changes in the graph filters. For example, when using the graph Laplacian as the filter, i.e., $g(\mathbf{S}) = \mathbf{L}$, the perturbation $\mathcal{P}$ results in a filter change given by
\begin{align*}
\mathbf{\Eg} = \mathbf{L} - \mathbf{L}_p = 
\begin{pmatrix}
  0 &  0 &  0 &  0 &  0 &  0 \\
  0 &  0 &  0 &  0 &  0 &  0 \\
  0 &  0 & -1 &  0 &  1 &  0 \\
  0 &  0 &  0 &  1 &  0 & -1 \\
  0 &  0 &  1 &  0 & -1 &  0 \\
  0 &  0 &  0 & -1 &  0 &  1
\end{pmatrix}.
\end{align*}
Then the magnitude of embedding perturbation for graph signal $\mathbf{x}$ is $\|\Eg\mathbf{x}\|_2^2 = 0.26 $.

{For a general graph filter $g(\mathbf{S})$,  the corresponding filter perturbation induced by $\mathcal{P}$ can be computed by first constructing $g(\mathbf{S})$ and $g(\mathbf{S}_p)$ from the original graph $\mathcal{G}$ and the perturbed graph $\mathcal{G}_p$ separately, and then taking their difference $\Eg =  g(\mathbf{S}) - g(\mathbf{S}_p)$.  We summarize in Table~\ref{tab:filter} some common filters and their perturbation.

\begin{table}[]
    \centering
    \caption{Common graph filters from \cite{shuman-2013-emerging,Jose-2014-discrete} and their perturbations}
    \label{tab:filter}
    \begin{tabular}{l l}
    \toprule
       Graph Filter  $g(\mathbf{S})$ & Filter Perturbation $\Eg $\\
    \midrule
       polynomial of adjacency matrix: $\sum_{j=1}^k c_j\mathbf{A}$& $\sum_{j=1}^k c_j(\mathbf{A} -\mathbf{A}_p) $ \\
        polynomial of Laplacian: $\sum_{j=1}^k c_j\mathbf{L}$& $\sum_{j=1}^k c_j(\mathbf{L} -\mathbf{L}_p) $ \\
        low-pass filter: {$(\mathbf{I} + \alpha \mathbf{L})^{-1}$} & $(\mathbf{I} + \alpha \mathbf{L})^{-1} - (\mathbf{I} + \alpha \mathbf{L}_p)^{-1}$ \\
    heat diffusion filter: $e^{-\tau \mathbf{L}}$&  $e^{-\tau \mathbf{L}} - e^{-\tau \mathbf{L}_p}$ \\
    \bottomrule
    \end{tabular}
\end{table}

\begin{figure}
    \centering
    \begin{subfigure}[b]{0.49\textwidth}
        \includegraphics[width=0.8\linewidth]{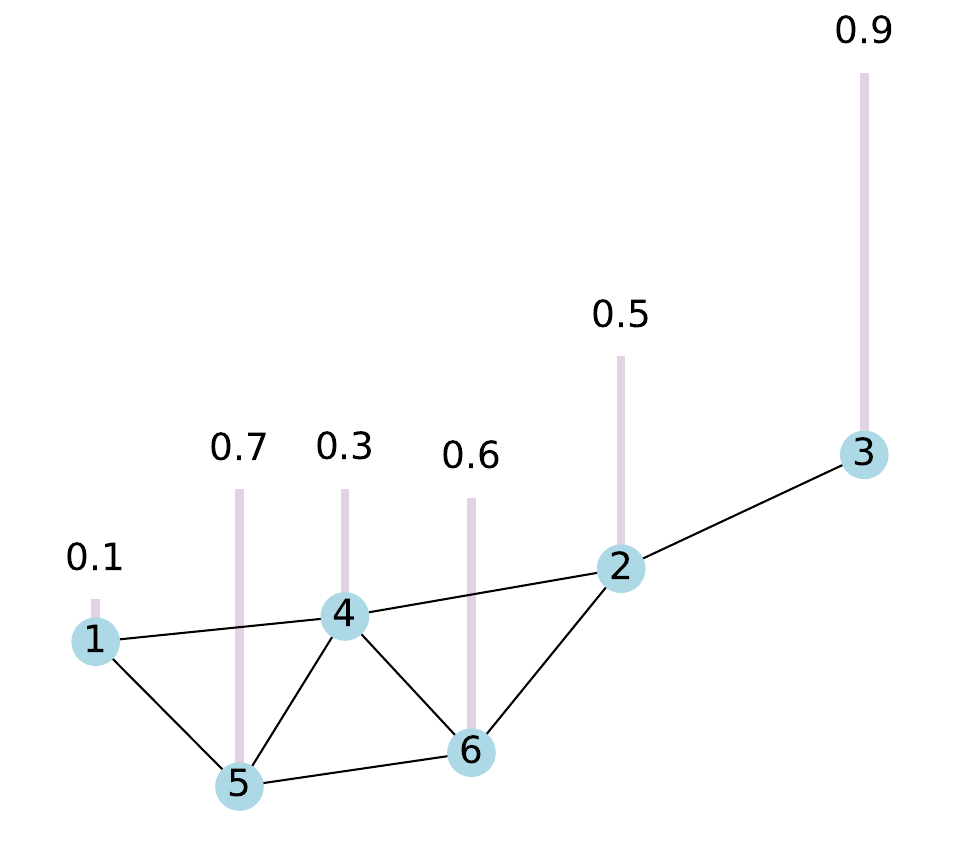}  
    \caption{graph ${\mathcal{G}}$ and associated signal $\mathbf{x}$}
    \label{fig:demo_G}
    \end{subfigure}
   \begin{subfigure}[b]{0.49\textwidth}
        \includegraphics[width=0.8\linewidth]{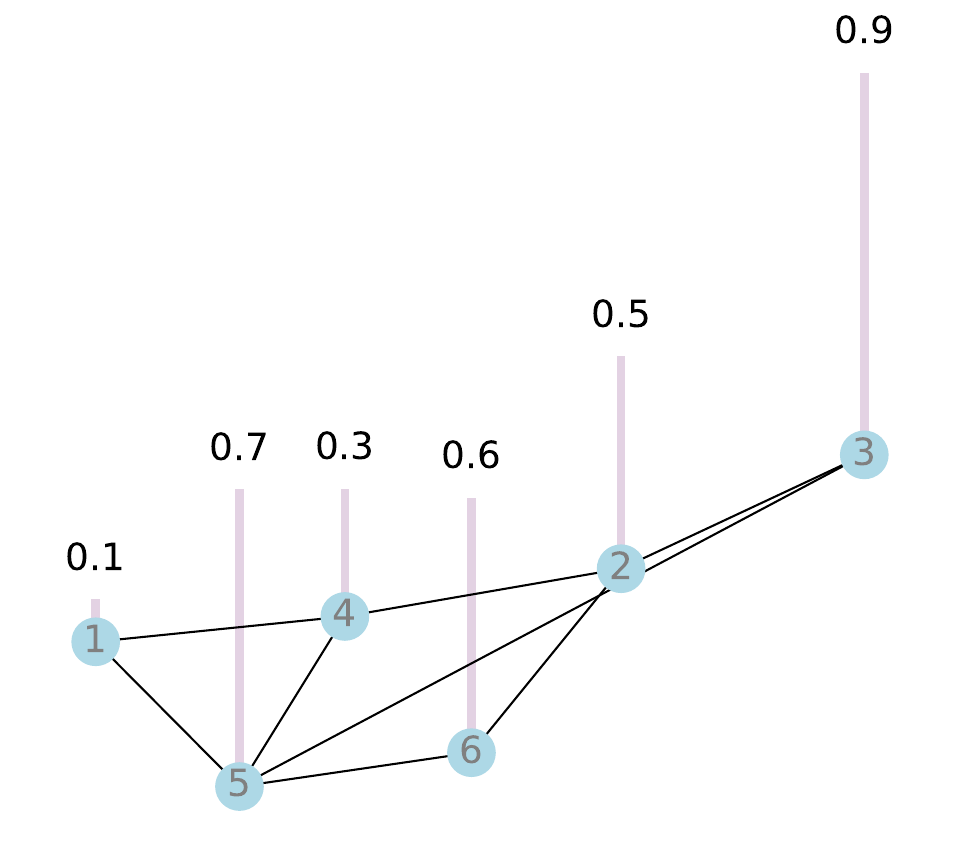}  
    \caption{perturbed graph $\mathcal{G}_p$ with $\mathbf{x}$}
    \label{fig:demo_Gp}
    \end{subfigure}
    \caption{Attributed graph and graph perturbation.}
    \label{fig:demo}
\end{figure}

\begin{table}[t]
    \centering
    \caption{GCNNs and their associated graph filters.}
    \label{tab:GCNN}
    \begin{small}
    \begin{tabular}{c c c c}
    \toprule
\textbf{GCNN Model} & $g(\mathbf{S})$ & \boldmath$\mathbf{E}_g$ & $C$ from~\eqref{eq:A1} \\
\midrule
{GCN}~\cite{kipf-2016-gcn} & $\Tilde{\mathbf{D}}^{-1/2} \Tilde{\mathbf{A}} \Tilde{\mathbf{D}}^{-1/2}$ & $\Tilde{\mathbf{D}}^{-1/2} \Tilde{\mathbf{A}} \Tilde{\mathbf{D}}^{-1/2} - \Tilde{\mathbf{D}}_p^{-1/2} \Tilde{\mathbf{A}}_p \Tilde{\mathbf{D}}_p^{-1/2}$ & 1 \\
{GIN}~\cite{xu-2018-powerful}\tablefootnote{We modify the standard GIN architecture by replacing the MLP in each layer with a single-layer perceptron, resulting in a simplified convolutional variant of GIN.}  & $(1+\epsilon^{(k)})\mathbf{I} +\mathbf{A}$  & $\mathbf{A} - \mathbf{A}_p$ & max degree \\
{SGC($k$)}~\cite{wu-2019-simpGCN} & $\left(\Tilde{\mathbf{D}}^{-1/2} \Tilde{\mathbf{A}} \Tilde{\mathbf{D}}^{-1/2}\right)^k$ & $\left(\Tilde{\mathbf{D}}^{-1/2} \Tilde{\mathbf{A}} \Tilde{\mathbf{D}}^{-1/2}\right)^k - \left(\Tilde{\mathbf{D}}_p^{-1/2} \Tilde{\mathbf{A}}_p \Tilde{\mathbf{D}}_p^{-1/2}\right)^k$ & 1 \\
\bottomrule
    \end{tabular}
    \end{small}
\end{table}

\section{Proofs}

\subsection{Proof of Theorem~\ref{thm:embd_pertb}}\label{pf:thm1}
\thmEbdPertb*
\begin{proof}
    We start with deriving the expression \eqref{eq:stab_exp} for the expected embedding perturbation. 
    Consider a random graph signal ${X}$ in $\R^n$, its $i$-th entry is a random variable, which we denote as $X_i$. 
    Given a deterministic graph filter perturbation $\Eg$, the perturbation on the output embedding is $\Eg X$, which is also a random vector, and its squared Frobenius norm (Euclidean norm) can be written as
    \begin{align*}
        \|\Eg {X}\|^2_F = {X}^T\Eg^T\Eg X = \sum_{i,j} X_i X_j (\Eg^T\Eg)_{ij}.
    \end{align*}
    Then the expectation of this squared Euclidean norm of the embedding  perturbation can be written as
    \begin{align*}
        \E_{X\sim D}[ \|\Eg X\|^2_F] &= \sum_{i,j} \E[X_i X_j] (\Eg^T\Eg)_{ij} = \langle \mathbf{K}, \Eg^T\Eg\rangle,
    \end{align*}
    where the last equality simply follows from the definition of the second-moment matrix $\mathbf{K}=\E[XX^T]$.
    
    Next, we show the concentration of the embedding perturbation. Using Markov's inequality, we have that, for the non-negative random variable $\|\Eg X\|_2^2$ and any $c>0$
    \begin{align*}
        \prob_{X\sim \mathcal{D}}\left(\|\Eg X\|_2^2 \geq (1+c)\E[\|\Eg X\|_2^2]\right) \leq \frac{1}{1+c}.
    \end{align*}
    
\end{proof}

\subsection{Proof of Corollary~\ref{coro:1}}\label{appx:coro1}
The following Lemma contains useful inequalities that we will use in the proofs. 
\begin{lemma}[Basic matrix inequalities]\label{lemma:matieq}
For any $\mathbf{A},\mathbf{B} \in \R^{n\times n}$, we have
    \label{lemma:eig_bd}
    \begin{align}
    \label{eq:AB_F}
        \lambdamin^2(\mathbf{A}) \|\mathbf{B}\|_F^2  \leq \|\mathbf{A} \mathbf{\mathbf{B}}\|_F^2\leq \lambda_{\max}^2(\mathbf{A}) \|\mathbf{B}|_F^2
    \end{align} 
For any $\mathbf{A},\mathbf{B} \succeq 0$, we have
    \begin{align}
    \label{eq:AB_tr}
        \langle \mathbf{A}, \mathbf{B} \rangle \leq \|A\|  \tr(\mathbf{B})
    \end{align}
For any $\mathbf{A} \in \R^{n\times m} \mathbf{B}\in \R^{m\times d} \succeq 0$, we have
\begin{align}
    \label{eq:fb_up}
    \|\mathbf{AB}\|_F^2 \leq \|\mathbf{A}\|^2 \|\mathbf{B}\|_F^2,
\end{align}
where $\|\mathbf{A}\|^2$ is the larges singular value of $\mathbf{A}$.
\end{lemma}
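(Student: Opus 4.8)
The three inequalities are standard facts from matrix analysis, and the plan is to reduce all of them to a single underlying principle: how a matrix acting on vectors stretches lengths through its extreme singular values (equivalently, the extreme eigenvalues of $\mathbf{A}^T\mathbf{A}$), together with the spectral decomposition of a positive semidefinite matrix. I would treat \eqref{eq:AB_F} and \eqref{eq:fb_up} together, since the latter is just the upper half of the former, and handle \eqref{eq:AB_tr} separately via the eigendecomposition of $\mathbf{B}$.

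For \eqref{eq:AB_F}, the key step is to split the Frobenius norm across columns: writing $\mathbf{B}_{:,j}$ for the $j$-th column, $\|\mathbf{A}\mathbf{B}\|_F^2 = \sum_j \|\mathbf{A}\mathbf{B}_{:,j}\|_2^2$. For each fixed column $\mathbf{b}$ I would bound the Rayleigh quotient $\|\mathbf{A}\mathbf{b}\|_2^2 = \mathbf{b}^T\mathbf{A}^T\mathbf{A}\mathbf{b}$ between $\sigma_{\min}^2(\mathbf{A})\|\mathbf{b}\|_2^2$ and $\sigma_{\max}^2(\mathbf{A})\|\mathbf{b}\|_2^2$, which holds because $\mathbf{A}^T\mathbf{A}$ is symmetric positive semidefinite with eigenvalues equal to the squared singular values of $\mathbf{A}$. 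Summing over $j$ and using $\sum_j\|\mathbf{B}_{:,j}\|_2^2 = \|\mathbf{B}\|_F^2$ gives both bounds, where I read $\lambdamin^2(\mathbf{A})$ and $\lambdamax^2(\mathbf{A})$ as these extreme singular values (they coincide with squared eigenvalue magnitudes when $\mathbf{A}$ is symmetric, the case of interest in our application). Inequality \eqref{eq:fb_up} is then immediate: it is exactly the upper bound above with $\sigma_{\max}(\mathbf{A}) = \|\mathbf{A}\|$, the spectral norm.

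For \eqref{eq:AB_tr}, I would use $\langle \mathbf{A},\mathbf{B}\rangle = \tr(\mathbf{A}^T\mathbf{B}) = \tr(\mathbf{A}\mathbf{B})$ (the last equality using that $\mathbf{A}$ is symmetric, being PSD) and diagonalize the PSD matrix $\mathbf{B} = \sum_i \mu_i \mathbf{u}_i\mathbf{u}_i^T$ with $\mu_i \geq 0$ and orthonormal $\mathbf{u}_i$. Then $\tr(\mathbf{A}\mathbf{B}) = \sum_i \mu_i\, \mathbf{u}_i^T\mathbf{A}\mathbf{u}_i$, and since $\mathbf{A}\succeq 0$ each Rayleigh quotient satisfies $\mathbf{u}_i^T\mathbf{A}\mathbf{u}_i \leq \lambdamax(\mathbf{A}) = \|\mathbf{A}\|$; summing yields $\langle\mathbf{A},\mathbf{B}\rangle \leq \|\mathbf{A}\|\sum_i\mu_i = \|\mathbf{A}\|\tr(\mathbf{B})$. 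Equivalently, one can note $\|\mathbf{A}\|\mathbf{I} - \mathbf{A}\succeq 0$ and invoke the fact that the trace of a product of two PSD matrices is nonnegative.

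There is no genuine difficulty here; these are textbook inequalities, so the ``main obstacle'' is bookkeeping rather than mathematics. The two points requiring care are fixing the reading of $\lambdamin(\mathbf{A})$ and $\lambdamax(\mathbf{A})$ consistently (as singular values of a general $\mathbf{A}$, or as eigenvalues in the symmetric case where the lemma is applied), and tracking the positive-semidefiniteness hypotheses, which are genuinely needed only for \eqref{eq:AB_tr} and are precisely what makes $\lambdamax(\mathbf{A})$ equal $\|\mathbf{A}\|$ there.
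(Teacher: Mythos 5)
Your proof is correct, and it differs from the paper's in a way worth noting. For \eqref{eq:AB_F} the paper does not give an argument at all --- it simply cites Theorem~1 of Wang--Zhang (1993) --- whereas you prove it from scratch by splitting $\|\mathbf{A}\mathbf{B}\|_F^2$ over the columns of $\mathbf{B}$ and bounding each Rayleigh quotient of $\mathbf{A}^T\mathbf{A}$; your self-contained route also forces you to confront the fact that $\lambdamin(\mathbf{A})$ and $\lambdamax(\mathbf{A})$ must be read as extreme singular values for the inequality to hold for general $\mathbf{A}$, a point the paper's citation-only treatment leaves implicit. For \eqref{eq:AB_tr} the two arguments are substantively identical (dual diagonalizations: the paper expands $\mathbf{A}=\mathbf{U}\mathbf{\Lambda}\mathbf{U}^T$ and bounds $\sum_i\lambda_i(\cdot)_{ii}$ using nonnegativity of the diagonal of a PSD matrix, while you expand $\mathbf{B}=\sum_i\mu_i\mathbf{u}_i\mathbf{u}_i^T$ and bound the Rayleigh quotients $\mathbf{u}_i^T\mathbf{A}\mathbf{u}_i$). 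For \eqref{eq:fb_up} you take the shorter path, reading it off as the upper half of \eqref{eq:AB_F}, whereas the paper routes it through \eqref{eq:AB_tr} via the identity $\|\mathbf{A}\mathbf{B}\|_F^2=\langle\mathbf{B}\mathbf{B}^T,\mathbf{A}^T\mathbf{A}\rangle$; both are valid, and your version has the small advantage of applying verbatim to the rectangular matrices actually named in the statement of \eqref{eq:fb_up}.
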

\begin{proof}
    The inequalities from ~\eqref{eq:AB_F} are adapted from Theorem~1 in \cite{wang-1993-eig_ineq_psd}.
    
    To prove the inequality \eqref{eq:AB_tr}, we denote the eigendecomposition $\mathbf{A} = \mathbf{U\Lambda U}^T$ and $\mathbf{B} = \mathbf{V\Sigma V}^T$, where $\mathbf{\Lambda} = \diag(\lambda_i)$ and $\mathbf{\Sigma} = \diag(\sigma_i)$. Then we have
    \begin{align*}
        \langle \mathbf{A}, \mathbf{B}\rangle &=\tr(\mathbf{U\Lambda U}^T \mathbf{V\Sigma V}^T)\\
        & = \tr(\mathbf{\Lambda U}^T \mathbf{V\Sigma V}^T\mathbf{U})\\
        & = \sum_{i=1}^{n} \lambda_i (\mathbf{U}^T \mathbf{V\Sigma V}^T\mathbf{U})_{ii}\\
        &\leq \|\mathbf{A}\| \sum_{i=1}^{n} (\mathbf{U}^T \mathbf{V\Sigma V}^T\mathbf{U})_{ii}\\
        & =  \|\mathbf{A}\| \tr(\mathbf{U}^T \mathbf{V\Sigma V}^T\mathbf{U}) = \|\mathbf{A}\| \tr(\mathbf{B})
    \end{align*}
    Here the inequality follows because  $\lambda_i \geq 0$ and $\sigma_i\geq 0$ for all $i$.
    
    To prove the inequality \eqref{eq:fb_up}, note that 
   \begin{align*}
       \|\mathbf{AB}\|_F^2 = \tr(\mathbf{B}^T\mathbf{A}^T \mathbf{AB}) = \langle \mathbf{B}\mathbf{B}^T, \mathbf{A}^T\mathbf{A}  \rangle.
   \end{align*}
   Therefore, \eqref{eq:AB_tr} directly implies \eqref{eq:fb_up}.
\end{proof}

\subsection{Proof of Corollary~\ref{thm:1-GCN}}\label{pf:1-GCN}
\singleGCN*
\begin{proof}
    Under edge perturbations, the induced output embedding perturbation of a single-layered GCNN is more complicated than outputs of graph filters due to the existence of the non-linear activation function and the learned weight matrix $\mathbf{\Theta}^{(1)}$. 
    
    To handle the non-linearity, we consider upper bound the change in output in the following way:
    \begin{align}
    \nonumber
         X^{(1)}- X^{(1)}_p 
         &= \sigma\left( g(\mathbf{S}) X \mathbf{\Theta}^{(1)}\right) - \sigma\left( g(\mathbf{S}_p) X \mathbf{\Theta}^{(1)}\right)\\
         \label{eq:thm2_1}
         & \leq C_\sigma \left|g(\mathbf{S}) X \mathbf{\Theta}^{(1)} - g(\mathbf{S}_p) X \mathbf{\Theta}^{(1)} \right|\\
         \nonumber
         & =  C_\sigma  |\Eg X\mathbf{\Theta}^{(1)}|.
    \end{align}
    In \eqref{eq:thm2_1}, the inequality denotes entrywise less than, and $|\cdot|$ denotes entrywise absolute value on the matrix. The inequality ~\eqref{eq:thm2_1} follows from the Lipschitz continuity of the activation function $\sigma(\cdot)$.
    
    Furthermore, to handle $\mathbf{\Theta}^{(1)}$, we apply a matrix inequality \eqref{eq:fb_up} from Lemma~\ref{lemma:matieq} have that 
    \begin{align}
        \label{eq:thm2_2}
        \|\Eg X\mathbf{\Theta}^{(1)}\|_F^2 = \|\mathbf{\Theta}^{(1)}\|^2 \|\Eg X\|_F^2,
    \end{align}
    where $\|\mathbf{\Theta}^{(1)}\|$ denotes the larges singular value of $\mathbf{\Theta}^{(1)}$.
    Combining \eqref{eq:thm2_1} and \eqref{eq:thm2_2}, we have the magnitude of the embedding perturbation 
    \begin{align}
        \nonumber
        \E[\| X^{(1)}- X^{(1)}_p \|_F^2 &\leq C_\sigma^2 \E[\|\Eg X \mathbf{\Theta}^{(1)}\|_F^2] \\
        \label{eq:thm2_4}
        & = C_\sigma^2 \|\mathbf{\Theta}^{(1)}\|^2 \E[\|\Eg X\|_F^2]\\
        \label{eq:thm2_3}
        & = d C_\sigma^2 \|\mathbf{\Theta}^{(1)}\|^2 \langle \mathbf{K} , \Eg^T\Eg \rangle,
    \end{align}
    where \eqref{eq:thm2_4} follows from  ~\eqref{eq:thm2_2} and \eqref{eq:thm2_3} follows Theorem~\ref{thm:embd_pertb}.
\end{proof}

\subsection{Proof of Theorem~\ref{thm:L-GCN}}\label{pf:L-GCN}
\LGCN*
\begin{proof}

To establish our result, we first observe that if the following
\begin{align}
    \label{eq_thm2_1}
    \|X^{(L)} - X^{(L)}_p\|_F^2\leq C_\sigma^{2L} C ^{2L-2}\prod_{j=1}^L \|\mathbf{\Theta}^{(l)} \|^2 ((L-1)\|\Eg\|^2 \|X\|_F^2 +  \|\Eg X\|_F^2)
\end{align}
holds with probability 1. Then, by Theorem~\ref{thm:embd_pertb}, we can easily show
\begin{align}
    \label{eq_thm2_2}
    \E[\|X^{(L)} - X^{(L)}_p\|_F^2]\leq dC_\sigma^{2L} C ^{2L-2}\prod_{j=1}^L \|\mathbf{\Theta}^{(l)} \|^2 ( (L-1) \|\Eg\|^2 \tr(\mathbf{K}) + \langle \mathbf{K}, \Eg^T \Eg \rangle).
\end{align}
Thus, it remains to show that condition \eqref{eq_thm2_1} holds, which we prove by induction.

\textbf{Base case ($l=1$):}
    In previous proof of Corollary~\ref{thm:1-GCN}, equation \eqref{eq:thm2_1} and \eqref{eq:thm2_2} imply it holds with probability 1 that 
    \begin{align*}
        \| {X}^{(1)} -{X}_p^{(1)} \|_F^2 \leq C_\sigma^2\|\mathbf{\Theta}^{(1)}\|^2 \|\Eg X\|_F^2.
    \end{align*}

\textbf{Induction step :}
For $1<l <L$, assume that for the $l$-th layer embedding,  it has
\begin{align}
    \label{eq:coro_6}
        \|{X}^{(l)} - {X}_p^{(l)}\|_F^2 = C_\sigma^{2l} C ^{2l-2}\prod_{j=1}^l \|\mathbf{\Theta}^{(j)} \|^2 ((l-1)\|\Eg\|^2 \|X\|_F^2 +  \|\Eg X\|_F^2).
    \end{align}
Then, for the $(l+1)$-th layer, its embedding perturbation can be first simplified as follows
\begin{align}
    \nonumber
    \|X^{(l+1)} - X_p^{(l+1)}\|_F^2 &= \|\sigma^{(l+1)}(g(\mathbf{S}) X^{l}\mathbf{\Theta}^{(l+1)}) -  \sigma^{(l+1)}(g(\mathbf{S}_p) X_p^{l} \mathbf{\Theta}^{(l+1)})  \|_F^2\\
    \label{eq:coro_1}
    &\leq C_\sigma^2 \|g(\mathbf{S}) X^{l} \mathbf{\Theta}^{(l+1)}-  g(\mathbf{S}_p) X_p^{l} \mathbf{\Theta}^{(l+1)} \|_F^2\\
    \label{eq:coro_2}
    & \leq C_\sigma^2 \|\mathbf{\Theta}^{(l+1)}\|^2 \|g(\mathbf{S}) X^{(l)} -  g(\mathbf{S}_p) X_p^{(l)}  \|_F^2
\end{align}
Here \eqref{eq:coro_1} follows form the Lipschitz continuity of $\sigma^{(l+1)}(\cdot)$, and \eqref{eq:coro_2} follows Frobenius-spectral inequality~\eqref{eq:fb_up} from Lemma~\ref{lemma:eig_bd}. The term $\|g(\mathbf{S}) X^{(l)} -  g(\mathbf{S}_p) X_p^{(l)}  \|_F^2$ contains two sources of perturbation: perturbation in the graph filter $g(\mathbf{S})$ and propagated signal perturbation from pervious layer $X_p^{(l)}$. We decompose them as follows
\begin{align}
    \nonumber
    \|g(\mathbf{S}) X^{(l)} -  g(\mathbf{S}_p) X_p^{(l)}  \|_2^2 &= \|g(\mathbf{S}) X^{(l)} -  g(\mathbf{S}_p) X^{(l)} + g(\mathbf{S}_p) X^{(l)}  -g(\mathbf{S}_p) X_p^{(l)}  \|_F^2\\
    \nonumber
    &\leq \|g(\mathbf{S}) X^{(l)} -  g(\mathbf{S}_p) X^{(l)}\|_F^2 + \|g(\mathbf{S}_p) X^{(l)} -  g(\mathbf{S}_p) X^{(l)}_p\|_F^2\\
    \label{eq:coro_4}
    & \leq \|\Eg X^{(l)}\|_F^2 + \|g(\mathbf{S}_p)\|^2 \|X^{(l)} - X^{(l)}_p\|_F^2\\
    \label{eq:coro_5}
    &\leq \|\Eg X^{(l)}\|_F^2 + C^2 \|X^{(l)} - X^{(l)}_p\|_F^2.
\end{align}
Here \eqref{eq:coro_4} follows from the Frobenius-spectral  inequality \eqref{eq:fb_up} in Lemma~\ref{lemma:matieq}, and \eqref{eq:coro_5} holds by the assumption $\|g(\mathbf{S}_p)\| \leq C$.
Here in \eqref{eq:coro_5}, the first term captures the change of embedding with input $X^{(l)}$ caused by the filter perturbation, and the second term accounts for the embedding change from the $l$-th layer, which is given by the inductive hypothesis in \eqref{eq:coro_6}.
Therefore, we only need to further bound the term $\|\Eg X^{(l)}\|_F^2$.
By definition of the GCNN, $X^{(l)}$ is recursively computed following
\begin{align*}
    X^{(j)} = \sigma^{(j)} \left(g(\mathbf{S})X^{(j-1)}\mathbf{\Theta}^{(j)}\right) \; \text{ for } j = 1,\dots, l .
\end{align*}
Becasue of Assumption~\eqref{eq:A2} and~\eqref{eq:A3}, we further have that $\sigma^{(j)} \left(g(\mathbf{S})X^{(j-1)}\mathbf{\Theta}^{(j)}\right)$ is entrywise upper bounded  by  $|C_\sigma| \left|g(\mathbf{S})X^{(j-1)}\mathbf{\Theta}^{(j)}\right|$. 
Therefore, we have
\begin{align}
     \label{eq_thm2_4}
    \|\Eg X^{(l)}\|_F^2 
    \leq  C_\sigma^2  \|\Eg g(\mathbf{S})X^{(l-1)}\mathbf{\Theta}^{(l)}\|_F^2 \leq C_\sigma^2  C^2  \| \mathbf{\Theta}^{(l)}\|^2\|\Eg \|^2\|X^{(l-1)}\|_F^2
\end{align}
The second inequality in \eqref{eq_thm2_4} is obtained by using the Frobenius-spectral inequality from Lemma~\ref{lemma:matieq}. Therefore, we can also recursively bound the squared Frobenius norm of $ \Eg X^{(l)}$ and obtain 
\begin{align}
    \label{eq_thm2_5}
    \|\Eg  X^{(l)} \|_F^2  
     \leq C_\sigma^{2l} C^{2l} \prod_{j=1}^{l} \| \mathbf{\Theta}^{(j)}\|^2  \|\Eg\|^2 \|X\|_F^2.
\end{align}

Combining the above results and the inductive hypothesis \eqref{eq:coro_6}, we have
\begin{align}
    &\|X^{(l+1)} - X_p^{(l+1)}\|_F^2 \nonumber\\
    &\label{eq_thm2_7}
    \leq C_\sigma^2 \|\mathbf{\Theta}^{(l+1)}\|^2 \|g(\mathbf{S}) X^{(l)} -  g(\mathbf{S}_p) X_p^{(l)}  \|_F^2\\
    &\label{eq_thm2_8}
    \leq C_\sigma^2 \|\mathbf{\Theta}^{(l+1)}\|^2 (\|\Eg X^{(l)}\|_F^2 + C^2 \|X^{(l)} - X^{(l)}_p\|_F^2)\\
    & \label{eq_thm2_9}
    \leq C_\sigma^2 \|\mathbf{\Theta}^{(l+1)}\|^2 \left( C_\sigma^{2l} C^{2l} \prod_{j=1}^{l} \| \mathbf{\Theta}^{(j)}\|^2  \|\Eg\|^2 \|X\|_F^2 + C^2 \|X^{(l)} - X^{(l)}_p\|_F^2\right)\\
    & \label{eq_thm2_0}
    \leq C_\sigma^{2l+2}C^{2l} \prod_{j=1}^{l+1} \| \mathbf{\Theta}^{(j)}\|^2  (l \|\Eg\|^2 \|X\|_F^2 + \|\Eg X\|_F^2).
\end{align}
Here, equation~\eqref{eq_thm2_7} follows from~\eqref{eq:coro_2}; inequality~\eqref{eq_thm2_8} is derived from~\eqref{eq:coro_5}; inequality~\eqref{eq_thm2_9} follows from~\eqref{eq_thm2_5}; and the final step, equation~\eqref{eq_thm2_0}, follows from the inductive hypothesis~\eqref{eq:coro_6}.
\end{proof}

\subsection{Proof of Proposition~\ref{thm:attackAL}}
\attackA*
\begin{proof}
(Case 1.)  Proof for adjacency filter $g(\mathbf{S}) = \mathbf{A}$. \\
     We denote $\mathbb{I}_{uv}$  the edge indicator matrix, where the entries of $\mathbb{I}_{uv}$ is all 0 other than entry $(u,v)$ and  $(u,v)$ have value 1. Then, for a edge perturbation $\mathcal{P}$, we can decompose the filter perturbation as
    \begin{align*}
        \Eg = \mathbf{A}_p - \mathbf{A} = \sum_{\{u,v\}\in \mathcal{P}} \sigma_{uv} \mathbb{I}_{uv},
    \end{align*}
    where $\sigma_{uv} = \pm1$ indicating adding or deleting an edge between $u$ and $v$.
    Then, we have
    \begin{align}
        \nonumber
         &(\mathbf{A}_p - \mathbf{A})^T  (\mathbf{A}_p - \mathbf{A}) = \left(\sum_{\{u,v\}\in \mathcal{P}} \sigma_{uv} \mathbb{I}_{uv} \right) \left(\sum_{\{u,v\}\in \mathcal{P}} \sigma_{uv} \mathbb{I}_{uv}\right)\\
         \nonumber
         & = \sum_{\{u,v\} \in \mathcal{P}} \mathbb{I}_{uv}^2  + \sum_{\{u,v\},\{u,v'\} \in \mathcal{P}}  \sigma_{uv} \sigma_{uv'} \left(\mathbb{I}_{uv}  \mathbb{I}_{uv'} +  \mathbb{I}_{uv'} \mathbb{I}_{uv}\right) 
         +\sum_{\{u,v\},\{u,v'\} \in \mathcal{P}}  \sigma_{uv} \sigma_{v'u} \left(\mathbb{I}_{uv}  \mathbb{I}_{v'u} +  \mathbb{I}_{v'u} \mathbb{I}_{uv}\right) \\
         \label{eq:thm4_1}
         & = \sum_{\{u,v\}\in \mathcal{P}} \mathbb{I}_{uu} + \mathbb{I}_{vv} + \sum_{\{u,v\}, \{u,v'\}\in \mathcal{P}} \sigma_{uv} \sigma_{uv'} (\mathbb{I}_{vv'} + \mathbb{I}_{v'v}) + \sum_{\{u,v\}, \{u,v'\}\in \mathcal{P}} \sigma_{uv} \sigma_{uv'} (\mathbb{I}_{vv'} + \mathbb{I}_{v'v}).
    \end{align}
    Therefore, we can further decompose the expected embedding perturbation from Theorem~\ref{thm:embd_pertb} as 
    \begin{align}
        \nonumber
        &\E[\|\Eg X\|_2^2] 
        = \langle\Eg^T\Eg,\mathbf{K} \rangle = \langle (\mathbf{A}_p - \mathbf{A})^T(\mathbf{A}_p - \mathbf{A}), \mathbf{K}\rangle\\
        \label{eq:thm4_2}
        &=\sum_{\{u,v\}\in\mathcal{P}} \langle \mathbb{I}_{uu} + \mathbb{I}_{vv} , \mathbf{K} \rangle + \hspace{-16pt}\sum_{\{u,v\}, \{u,v'\}\in \mathcal{P}} \hspace{-12pt}\sigma_{uv} \sigma_{uv'} \langle\mathbb{I}_{vv'} + \mathbb{I}_{v'v},\mathbf{K}\rangle + \hspace{-16pt}\sum_{\{u,v\}, \{u,v'\}\in \mathcal{P}} \hspace{-12pt}\sigma_{uv} \sigma_{uv'} \langle \mathbb{I}_{vv'} + \mathbb{I}_{v'v}, \mathbf{K} \rangle\\
        \label{eq:thm4_3}
        & = \sum_{\{u,v\} \in \mathcal{P}} \mathbf{K}_{uu} + \mathbf{K}_{vv} + 2 \hspace{-16pt}\sum_{\{u,v\}, \{u,v'\}\in \mathcal{P}} \hspace{-12pt} \sigma_{uv} \sigma_{uv'} \mathbf{K}_{vv'} + 2\hspace{-16pt}\sum_{\{u,v\}, \{u,v'\}\in \mathcal{P}} \hspace{-12pt}\sigma_{uv} \sigma_{uv'} \mathbf{K}_{vv'}.
    \end{align}
    Here \eqref{eq:thm4_2} follows from \eqref{eq:thm4_1}. From \eqref{eq:thm4_3}, we have that for each pair of perturbed edges $\{u,v\} \in \mathcal{P}$, its contribution to the overall embedding perturbation comes from two parts: the unitary term $\mathbf{K}_{uu} + \mathbf{K}_{vv} = \E[X_u^2+X_v^2]$; and the coupling term, which further add $2\sigma_{uv}\sigma_{uv'} \mathbf{K}_{vv'} = 2\sigma_{uv}\sigma_{uv'} \E[X_vX_{v'}] $  for each of other perturbed edges $\{u,v'\}\in \mathcal{P}$ that intersects with $\{u,v\}$.
    
    (Case 2.) Proof for Laplacian filter $g(\mathbf{S}) = \mathbf{L}$\\
    To start, we introduce the oriented edge indicator vector $\mathbf{b}_{uv} \in \{1,-1,0\}^n$, which only has $\pm 1$ on entries that correspond to two vertices $u$ and $v$ connected by an edge and zero elsewhere. For a directed graph,
\begin{align*}
    \mathbf{b}_{uv}(i) = 
    \begin{cases}
        1 & \text{if the edge points to $i$}\\
        -1 & \text{if the edge leaves $i$} \\
        0 & \text{o.w.}
    \end{cases}
\end{align*}
This definition can be extended to undirected graphs with arbitrary directions assigned to each edge.
With the definition of edge indicator vector, we can then decompose the Laplacian matrix as 
\begin{align*}
    \mathbf{L} = \sum_{\{u,v\} \in \mathcal{E}} \mathbf{b}_{uv} \mathbf{b}_{uv}^T.
\end{align*}
This expression allows us to decompose the filter perturbation to perturbation induced by each edge
\begin{align}
\label{eq-L2eg}
    \Eg = \mathbf{L}_p - \mathbf{L} = \sum_{\{u,v\} \in \mathcal{P}} \sigma_{uv}  \mathbf{b}_{uv} \mathbf{b}_{uv}^T.
\end{align}
Here $\mathcal{P}$ denote the collection of vertex pairs that are perturbed. For $\{u,v\}\in \mathcal{P}$, we denote $\sigma_{uv}= 1$ if an edge is added to connect $u$ and $v$ and $\sigma_{uv}= -1$ if the edge between $u$ and $v$ is deleted. Then, we can further derive
\begin{align}
    \nonumber
    &(\mathbf{L}_p - \mathbf{L})^T (\mathbf{L}_p - \mathbf{L}) = \left( \sum_{\{u,v\} \in \mathcal{P}} \sigma_{uv}  \mathbf{b}_{uv} \mathbf{b}_{uv}^T\right) \left( \sum_{\{u,v\} \in \mathcal{P}} \sigma_{uv}  \mathbf{b}_{uv} \mathbf{b}_{uv}^T\right) \\
    \nonumber
    & = \sum_{\{u,v\}\in \mathcal{P}} \mathbf{b}_{uv} \mathbf{b}_{uv}^T\mathbf{b}_{uv} \mathbf{b}_{uv}^T 
    + \hspace{-12pt}\sum_{\substack{\{u,v\},\{u,v'\}\in \mathcal{P}}}\hspace{-12pt} \sigma_{uv} \sigma_{uv'} \mathbf{b}_{uv} \mathbf{b}_{uv}^T\mathbf{b}_{uv'} \mathbf{b}_{uv'}^T 
    + \hspace{-12pt}\sum_{\{u,v\}, \{u,v'\} \in \mathcal{P}} \hspace{-12pt}\sigma_{uv} \sigma_{v'u} \mathbf{b}_{uv}\mathbf{b}_{uv}^T\mathbf{b}_{v'u}\mathbf{b}_{v'u}^T\\
    \nonumber
    & = 2 \sum_{\{u,v\}\in \mathcal{P}}  (\mathbb{I}_{uu} + \mathbb{I}_{vv} - \mathbb{I}_{uv} - \mathbb{I}_{vu})
    + \hspace{-16pt}\sum_{\substack{\{u,v\},\{u,v'\}\in \mathcal{P}}} \hspace{-16pt}\sigma_{uv} \sigma_{uv'} 
    (2\mathbb{I}_{uu} + \mathbb{I}_{vv'} + \mathbb{I}_{v'v} - \mathbb{I}_{uv'} -\mathbb{I}_{v'u} - \mathbb{I}_{vu} - \mathbb{I}_{uv})\\
    \label{eq:thm5_1}
    & \;\ + \sum_{\{u,v\}, (v',u) \in \mathcal{P}} \sigma_{uv} \sigma_{v'u}  (2\mathbb{I}_{uu} + \mathbb{I}_{vv'}  + \mathbb{I}_{v'v}- \mathbb{I}_{uv'} -\mathbb{I}_{v'u} - \mathbb{I}_{vu} - \mathbb{I}_{uv})
\end{align}

Combining the above perturbation decomposition and Theorem~\ref{thm:embd_pertb}, we have the following decomposition on the expected embedding perturbation 
\begin{align}
    \nonumber
    &\E[\|\Eg X\|_2^2] = \langle \Eg^T\Eg, \mathbf{K}\rangle = \langle(\mathbf{L}_p - \mathbf{L})^T(\mathbf{L}_p - \mathbf{L}),\mathbf{K}\rangle\\
    \label{eq:thm5_2}
    &= 2\sum_{\{u,v\} \in \mathcal{P}} (\mathbf{K}_{uu} + \mathbf{K}_{vv} -\mathbf{K}_{uv} - \mathbf{K}_{vu}) 
    + 2 \hspace{-16pt}\sum_{\substack{\{u,v\},\{u,v'\}\in \mathcal{P}}} \hspace{-16pt}\sigma_{uv} \sigma_{uv'} (\mathbf{K}_{uu} + \mathbf{K}_{vv'} - \mathbf{K}_{uv'} - \mathbf{K}_{vu})\\
    \label{eq:thm5_3}
    & = 2\sum_{\{u,v\}\in \mathcal{P}} \mathcal{R}\{u,v\} 
    + \sum_{\substack{\{u,v\},\{u,v'\}\in \mathcal{P}}} \sigma_{uv} \sigma_{uv'} \mathcal{R}(u,v) + \mathcal{R}(u,v') - \mathcal{R}(v,v').
\end{align}
Here, \eqref{eq:thm5_2} follows from \eqref{eq:thm5_1}. In \eqref{eq:thm5_3}, we define  $\dist(u,v) \triangleq \E[(X_u - X_v)^2]$.
\end{proof}

\section{Experimental details}
\subsection{Algorithm: \texttt{Prob-PGD}}\label{appx:alg}

As we discussed, attacking graph filters and GCNNs by optimizing the second-moment matrix modulated filter perturbation term $\langle \mathbf{K},\Eg^T\Eg \rangle$ is computationally expensive due to the combinatorial nature of the problem. To circumvent this, we consider relaxing discrete constraints in the problem and applying a projected gradient descent method for efficient adversarial attack.  This strategy is adapted from a prior work \citep{xu-2019-topology_PGD}. 

We denote $\mathbbm{1}_{\mathcal{P}}$ the perturbation indicator matrix in $\{0,1\}^{n\times n}$, where the entry $(\mathbbm{1}_{\mathcal{P}})_{uv} =(\mathbbm{1}_{\mathcal{P}})_{vu}= 1$ if $\{u,v\}\in \mathcal{P}$ and $(\mathbbm{1}_{\mathcal{P}})_{uv} = 0$ otherwise. 
For notational clarity, we denote the objective function, the expected filter embedding perturbation, as $f(\mathbbm{1}_{\mathcal{P}}) =  \langle \mathbf{K}, \Eg^T\Eg\rangle$. This function $f(\mathbbm{1}_{\mathcal{P}})$ is determined by the prespecified filter function $g(\mathbf{S})$, the signal second-moment matrix $\mathbf{K}$ and a given graph perturbation $\mathcal{P}$. 
With a budget of perturbing at most $m$ edges, the optimal adversarial edge attack can be formulated as finding a maximizer of the following problem
\begin{align}
\label{opt:E}
\max \quad & f(\mathbbm{1}_{\mathcal{P}}) \tag{P1}\\
\nonumber
    s.t. \quad &
\mathbbm{1}_{\mathcal{P}}\in\{0,1\}^{n\times n}\\
\nonumber
&\langle \mathbbm{1}_{\mathcal{P}},\mathbf{J} \rangle \leq 2m
\end{align}
The hardness of solving \eqref{opt:E} mostly comes from the integer constraint, therefore, we consider relaxing this constraint to the following
\begin{align}
\label{opt:E_rlx}
\max \quad & f(\mathbf{M}) \tag{P2}\\
\nonumber
s.t. \quad &\forall u,v , \mathbf{M}_{uv} \in [0,1]\\
\nonumber
& \langle \mathbf{M},\mathbf{J} \rangle \leq 2m
\end{align}
The relaxed problem \eqref{opt:E_rlx} has a convex feasible region and continuous objective function. Moreover, according to \citep{xu-2019-topology_PGD} (Proposition~1) for a general matrix $\mathbf{M}$, projecting it to the feasible region of \eqref{opt:E_rlx} is simply
\begin{align}
\label{eq:Proj}
    \Pi(\mathbf{M}) = 
    \begin{cases}
        P_{[0,1]}[\mathbf{M} - \mu \mathbf{J}] \;\ &\text{if $\mu>0$ and $\langle \mathbf{J},\mathbf{M}\rangle = 2m$ }\\
        P_{[0,1]}[\mathbf{M}] \;\ &\text{if } \langle \mathbf{J}, \mathbf{M} \rangle \leq 2m .
    \end{cases}
\end{align}
where the function $P_{[0,1]}(\cdot)$ is applied elementwise to the input matrix, and it has
\begin{align*}
    P_{[0,1]}(x) = 
    \begin{cases}
        x \;\ \text{if $x\in [0,1]$ }\\
        0 \;\ \text{if $x<0$}\\
        1 \;\ \text{if $x>1$}.
    \end{cases}
\end{align*}

From the projection function~\eqref{eq:Proj}, we notice that the sparsity of matrix $\mathbf{M}$ from each of projected-gradient descent iterations is monotonously non-increasing. Therefore, we set the stopping criteria to be when $\|\mathbf{M}\|_0$ is close to $n^2-2m$. 
Then, we interpret the resulting matrix as a probabilistic indicator and select the $m$ edge perturbations corresponding to its largest entries to construct the final adversarial perturbation.
We summarize this projected gradient descent edge attack in Algorithm~\ref{alg:PGD}. 
Note that gradient-based methods are generally favored for convex objective functions, which does not necessarily apply for all graph filters.
However, in the case of adversarial attacks involving a small set of edge perturbations, employing a gradient-based method to find a local optimum can still produce effective adversarial edge perturbations.

\begin{algorithm}[h]
\SetKwInOut{Input}{Input}\SetKwInOut{Output}{Output}
\caption{\texttt{Prob-PGD}}
\Input{graph $\mathcal{G}$, second-moment matrix $\mathbf{K}$, edge perturbation budget $m$, maximum number of iterations $N$, learning rate $\alpha$, tolerance $\tau$}
\Output{perturbed graph $\mathcal{G}_p$}
\label{alg:PGD}
Initialize $\mathbf{M}^{(0)} = 0$\;
\For{$t = 1:N$ }{
Update $\mathbf{M}^{(t)}$ via gradient ascent:\\
    \hspace{1.5em} $\mathbf{M}^{(t)} = f(\mathbf{M}^{(t-1)}) + \alpha \nabla f(\mathbf{M}^{(t-1)})$\;
Project $\mathbf{M}^{(t)}$ back to the feasible set:\\
\hspace{1.5em} $\mathbf{M}^{(t)} \leftarrow \mathbf{\Pi}(\mathbf{M}^{(t)})$ \quad (see Eq.~\eqref{eq:Proj})\;
\If{$\|\mathbf{M}^{(t)}\|_0 \leq n^2 - 2m + 2\tau$}{
        \textbf{break}
    }
    }
   Obtain the set of edge perturbations $\mathcal{P}$ by selecting the $m$ pairs $\{u, v\}$ corresponding to the largest entries in $\mathbf{M}^{(T)}$\;
    Apply the perturbations in $\mathcal{P}$ to the graph to obtain the perturbed graph $\mathcal{G}_p$\;
\end{algorithm}

\subsection{Experiment setup}\label{appx:data}

\textbf{Graph datasets.}
To quantify the effectiveness of adversarial edge perturbation using the two algorithms built on our probabilistic framework, {we conduct experiments on a variety of graphs with different structural properties from both synthetic and real-world datasets. Specifically, we consider the following graph datasets:}

{(1) Stochastic block Models.} The stochastic block models (SBM) are random graph models that can generate graphs with \textit{community} structure due to their community-dependent edge probability assignment \cite{holland-1983-sbm}. 
{In our test experiments, we generate a graph with 40 vertices using SBM with two equally sized communities.  The intracommunity edge probability is set to 0.4, reflecting dense connectivity within communities, while the intercommunity edge probability is set to 0.05, indicating sparse connections between communities.}

{(2) Barabasi-Albert model.} The Barabasi-Albert (BA) models generate random \textit{scale-free} graphs through a preferential attachment mechanism, {where vertices with higher degree are more likely to be connected by new vertices added to the graph. In our experiments, we sample a graph of 50 vertices, following the preferential attachment scheme, where each new vertex connects to 3 existing vertices.}

{(3) Watts-Strogatz model.} The Watts-Strogatz (WS) model is a random graph model that produces \textit{small-world} properties, i.e., high clustering coefficient and low average path length. 
{This model is built on a underlying ring lattice graph of degree $k$, which exhibits high clustering coefficient. Starting from the ring lattice, each edge is randomly rewired with probability $\beta$ leading to a decrease in the average path length. Here, the model parameter $\beta$ controls the interpolation between a regular ring lattice ($\beta=0$) and an Erdos-Renyi random graph ($\beta=1$).
In our experiments, we sample a graph of size 50 from the WS model using $ k = 4$ and $\beta = 0.2$}.

{(4) Random Sensor Network.} The random sensor network is a \textit{random geometric graph model}. Here, the vertices are randomly distributed on an underlying space, and the edge probability depends on the geographic distance. We sample a graph with 50 vertices drawn uniformly at random from a 2D coordinate system, with the communication radius to be 0.4.

{(5) {Zachary's karate club network.} Zachary's karate club is a social network of a university karate club \cite{zachary-1977-information}. 
This network captures the interaction of the 34 members of a karate club, and is a popular example of graph with community structure.}

{(6) {ENZYMES.} The ENZYMES dataset consists of protein tertiary structures obtained from the BRENDA enzyme database \cite{schomburg-2004-brenda}. In this dataset, secondary structure elements (SSEs) of proteins are represented as vertices, and their interactions are captured as edges. Along with the graph structure, this dataset also includes physical and chemical measurements on the SSEs, further providing $18$-dimensional numerical graph signals (node features). 
This dataset contains 600 graphs of the protein tertiary structures. For our test experiment, we randomly select one protein graph from the dataset, which contains 37 vertices, along with the associated 18-dimensional graph signals.

{(7) {MUTAG.} MUTAG is another widely used dataset for graph classification tasks. It consists of 188 graphs, representing mutagenic aromatic and heteroaromatic nitro compounds. Each vertex in the graphs is associated with 7 node features, representing one-hot encoded atom types. }

{(8) {Cora.} The Cora dataset contains 2708 scientific publications (vertices) categorized into 7 classes. These documents are connected by 5429 citation links (edges), forming a citation network.  Each document is described by a 1433-dimensional sparse bag-of-words feature vector, where each dimension corresponds to a unique word from the corpus vocabulary.}

\textbf{Graph signal generation.} Graph signals are data attributed to the vertices of a graph, which could be independent of or correlated with the graph structure. Below, we present several popular generative models for graph signals that depend on the underlying graph semantics.

(1) Contextual-SBM: The Contextual stochastic block models (cSBM) are probabilistic models of community-structured graphs with high-dimensional covariate data that share the same latent cluster structure. Let $v\in \{1, -1\}$ be the vector encoding the community memberships. Then a graph signal, condition on $v$ and a latent value $u$, are generated as follows:

\begin{align}
\label{eq:cSBM}
    X_{i,:} = \sqrt{\frac{\mu}{n}}v_i u + {Z_i},
\end{align}
where $Z_i$ has independent standard normal entries. Therefore, the second-moment matrix $K$ has 
\begin{align*}
    \mathbf{K}_{ij} = \E[X_{i,:} X_{j,:}^T] = 
    \begin{cases}
        \frac{\mu}{n}u^2  +1\\
        -\frac{\mu}{n}u^2 +1
    \end{cases}
\end{align*}
Intuitively, vertices from the same community have slightly positively correlated covariates, while vertices from different clusters have negatively correlated graph signals.

(2) Smooth graph signal: Smooth graph signals refer to cases where signal values of adjacent vertices do not vary much, a property observed in many real-world datasets, such as geographic datasets.
\citet{dong-2016-learningL} demonstrates that such graph signals can be modeled as samples from a multivariant Gaussian distribution $X~\sim \mathcal{N}(\mu,\mathbf{L}^{+} + \sigma_\epsilon^2 \mathbf{I})$ 
where $\mathbf{L}^\dagger$ denotes the Moore-Penrose pseudoinverse of $\mathbf{L}$, and $\sigma_\epsilon \in \R$ represent the noise level. For smooth graph signals, the second-moment matrix has
\begin{align*}
    \mathbf{K}_{ij} = 
    \begin{cases}
        \mu^2 + (\mathbf{L}^{+})_{ii}+\sigma^2_\epsilon \;\ &\text{if } i=j \\
        \mu^2 +( \mathbf{L}^{+})_{ij}  \;\ &\text{if } i \neq j 
    \end{cases}
\end{align*}

\textbf{Training GCNN models.}
For graph classification tasks using the ENZYMES and MUTAG datasets, we partition the dataset into training and test sets using an 80\%, 20\% split. We pre-train graph convolutional neural networks (GCNNs), including two-layer GCNs \cite{kipf-2016-gcn}, two-layer SGCs \cite{wu-2019-simpGCN}, and two-layer GIN, where GIN refers to a specialized variant of the original model in which the MLPs are replaced with single-layer perceptrons.
All models use a hidden dimension of \{32,64\}, followed by a ReLU activation, and employ max-pooling as the final readout layer. 
Training on ENZYMES and MUTAG both use the standard cross-entropy loss.
Models are optimized using the Adam optimizer with learning rates between $0.005$ and $0.01$, weight decay $10^{-3}$, and 50 to 100 training epochs.
In the adversarial attack experiments, the number of edge perturbations is limited to at most $5\%$ of the total possible edge modifications, i.e., $5\% n^2 $, where $n$ is the number of nodes in the graph.

For node classification, we pre-train two-layer GCNs \cite{kipf-2016-gcn}, two-layer SGCs \cite{wu-2019-simpGCN}, and two-layer SUM-GCNs on the original, unperturbed graph, using 140 labeled nodes for training and 1,000 nodes for testing, following the standard setup in \cite{kipf-2016-gcn}.
Each GCNN has a hidden dimension of 64 and uses ReLU as the activation function.
The models are trained using a standard cross-entropy loss over the labeled training nodes. 
Training was conducted using the Adam optimizer with a learning rate of 0.01 and weight decay $5\times 10^{-4}$ for 50 epochs. 
In the adversarial attack experiments, each algorithm perturbs 1000 edges.  We repeat the experiments five times and report the average classification accuracy.

All experiments were conducted on an NVIDIA V100 GPU with 16GB of memory. 
Pretraining of the GCNNs on the input graphs was completed within a few minutes. 
Adversarial attacks on the {ENZYMES} and {MUTAG} datasets finished within a few minutes, while attacks on the {Cora} dataset required slightly more time due to the larger graph size, where \texttt{Prob-PGD} and \texttt{Wst-PGD} both take nearly 10 minutes to complete, with the maximum number of PGD iterations set to 250.
\subsection{Computing the second-moment matrix $\mathbf{K}$}
\label{appx:K}

Our stability analysis so far is based on the knowledge of the second-moment matrix $\mathbf{K}$. In practical scenarios, computing the second-moment matrix is typically straightforward and can be done in several ways.  
The most straightforward approach is to use the sample second-moment, $\mathbf{K} = \sum_{i} \mathbf{x}_i \mathbf{x}_i^T$. Beyond simple empirical estimators, more sophisticated methods based on statistical inference are also available, such as adaptive thresholding estimators \citep{cai-2011-adaptive}, graphical lasso \citep{friedman-2008-sparse}, and linear programming-based techniques \citep{yuan-2010-covarianceLP}.  Moreover, domain knowledge can further facilitate the construction of the second-moment matrix. In graph signal processing, meaningful graph structures often align with the distribution of graph signals. For example, in certain sensor networks, geographic proximity tends to produce similar signal patterns, which can be modeled using smoothness assumptions \citep{dong-2016-learningL}.
\subsection{Comparison with the average-case metric for graph filters}
\label{appx:comp_avg}

With the derived expected embedding stability given from Theorem~\ref{thm:embd_pertb},  we now compare our proposed metric on the embedding perturbation with the existing metric based on the worst-case perturbation adopted in \citep{levie-2019-transferability,levie-2021-transferability,kenlay-2020-stability,kenlay-2021-interpretable,gama-2020-stability,nguyen-2022-stability}. 
\begin{align}
\label{eq:wst}
    \sup_{\|\mathbf{x}\|_2=1} \|g(\mathbf{S})\mathbf{x} - g(\mathbf{S}_p) \mathbf{x}\|_2^2. 
\end{align}
Recall that in the worst-case metric, the graph signal is normalized to unit length. For a fair comparison, we present in Corollary~\ref{coro:1} the expected embedding perturbation when graph signals are sampled from the unit sphere in $\R^n$. 
\begin{restatable}{corollary}{coroSph}
\label{coro:1}
Consider a graph signal ${X}$ sampled from the unit sphere in $\R^n$. Then the second-moment matrix $\mathbf{K}$ is positive semi-definite with $\tr(\mathbf{K}) =1$.
Furthermore, the expected embedding perturbation satisfies 
\begin{align}
    \E_{{X}\sim \mathcal{D}} [{\|\Eg {X} \|_2^2}]
     = \langle\mathbf{K}, \Eg^T \Eg\rangle \leq \|\Eg\|^2.  
\end{align}
\end{restatable}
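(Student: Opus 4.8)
The plan is to verify the two stated structural properties of $\mathbf{K}$ first, then obtain the equality as a direct specialization of Theorem~\ref{thm:embd_pertb}, and finally close the bound using the matrix inequality~\eqref{eq:AB_tr} from Lemma~\ref{lemma:matieq}.

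First I would establish the positive semidefiniteness of $\mathbf{K} = \E[XX^T]$. For any $\mathbf{y} \in \R^n$, one has $\mathbf{y}^T \mathbf{K} \mathbf{y} = \E[\mathbf{y}^T X X^T \mathbf{y}] = \E[(\mathbf{y}^T X)^2] \geq 0$, so $\mathbf{K} \succeq 0$. For the trace, using linearity of expectation together with cyclicity of the trace gives $\tr(\mathbf{K}) = \E[\tr(X X^T)] = \E[\|X\|_2^2] = 1$, where the last equality holds because $X$ is supported on the unit sphere and hence $\|X\|_2 = 1$ almost surely.

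Next, the equality $\E_{X\sim\mathcal{D}}[\|\Eg X\|_2^2] = \langle \mathbf{K}, \Eg^T \Eg\rangle$ is nothing but the statement of Theorem~\ref{thm:embd_pertb} applied to this particular distribution $\mathcal{D}$, so no additional work is needed there.

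Finally, for the inequality I would invoke~\eqref{eq:AB_tr}, being careful about which matrix plays which role: since that bound reads $\langle \mathbf{A}, \mathbf{B}\rangle \leq \|\mathbf{A}\|\,\tr(\mathbf{B})$, I must assign $\mathbf{K}$ to the role of $\mathbf{B}$ in order to exploit $\tr(\mathbf{K}) = 1$. Using the symmetry of the Frobenius inner product and the fact that both $\Eg^T\Eg \succeq 0$ and $\mathbf{K} \succeq 0$, I would write $\langle \mathbf{K}, \Eg^T\Eg\rangle = \langle \Eg^T\Eg, \mathbf{K}\rangle \leq \|\Eg^T\Eg\|\,\tr(\mathbf{K}) = \|\Eg^T\Eg\|$. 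The last step is to identify $\|\Eg^T\Eg\| = \|\Eg\|^2$, which follows because the largest eigenvalue of the positive semidefinite matrix $\Eg^T\Eg$ equals the square of the largest singular value of $\Eg$. The argument is essentially routine; the only points demanding care are this ordering in~\eqref{eq:AB_tr} and the identity $\|\Eg^T\Eg\| = \|\Eg\|^2$, rather than any substantive difficulty.
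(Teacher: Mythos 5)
Your proposal is correct and follows essentially the same route as the paper: positive semidefiniteness via $\mathbf{y}^T\mathbf{K}\mathbf{y} = \E[(\mathbf{y}^TX)^2]\geq 0$, the trace identity from $\|X\|_2=1$ almost surely, the equality from Theorem~\ref{thm:embd_pertb}, and the final bound from inequality~\eqref{eq:AB_tr} of Lemma~\ref{lemma:matieq} with $\|\Eg^T\Eg\| = \|\Eg\|^2$. The attention you draw to the role assignment in~\eqref{eq:AB_tr} is a reasonable point of care but does not constitute a departure from the paper's argument.
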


\begin{proof}
    Note that for any vector $\mathbf{x} \in \R^n$, we have
    \begin{align*}
        \mathbf{x}^T\mathbf{K}\mathbf{x} = \mathbf{x}^T \E[{X}{X}^T] \mathbf{x} = \E[\|\mathbf{x}^T {X}\|_2^2] \geq 0.
    \end{align*}
   Therefore, $\mathbf{K}$ is a positive semidefinite matrix with eigenvalues $\eta_i \geq 0 $ for all $i=1,\ldots,n$. When $X$ is restricted to be sampled only from the unit length sphere, 
    we have that with probability 1, 
    \begin{align*}
        \sum_{i=1}^n X_i^2 = 1.
    \end{align*}
    With this, we further have the trace of  $\mathbf{K}$
    \begin{align}
    \label{eq:coro1_1}
    \tr(\mathbf{K}) = \sum_{i=1}^n \E[X_i^2] = \E\left[\sum_{i=1}^n X_i^2\right] = 1.
    \end{align}
   For a given filter perturbation $\Eg$, $\Eg^T \Eg$  is also a positive semidefinite matrix. Therefore, using \eqref{eq:AB_tr} from Lemma~\ref{lemma:matieq}, we have
   \begin{align*}
       \E[\| \Eg X\|^2_2]  =  \langle \Eg^T \Eg, \mathbf{K} \rangle \leq \|\Eg^T \Eg\| \tr(\mathbf{K}) = \|\Eg\|^2.
   \end{align*}
\end{proof}

From Corollary~\ref{coro:1}, it is trivially true that the worst-case embedding perturbation is always an upper bound on the expected embedding perturbation.
To further assess how large the gap is between the average-case and worst-case metric, we consider the following three cases, arranged from smallest to largest gap:

{\textbf{Case 1}}. We consider a deterministic input graph signal that achieves the worst possible embedding perturbation. We use $\mathbf{x}_1$ to denote the worst-case signal, which is the eigenvector associated with the largest eigenvalue of $\Eg$.  Therefore,  the second-moment simply follows as $\mathbf{K} =\mathbf{x}_1 \mathbf{x}_1^T$.  Further using the expression of expected embedding perturbation \eqref{eq:stab_exp}, we derive that the average case perturbation is exactly $\|\Eg\|^2$, which matches with the worst-case metric.\\
\begin{figure}
\centering
    \includegraphics[width=0.4\textwidth]{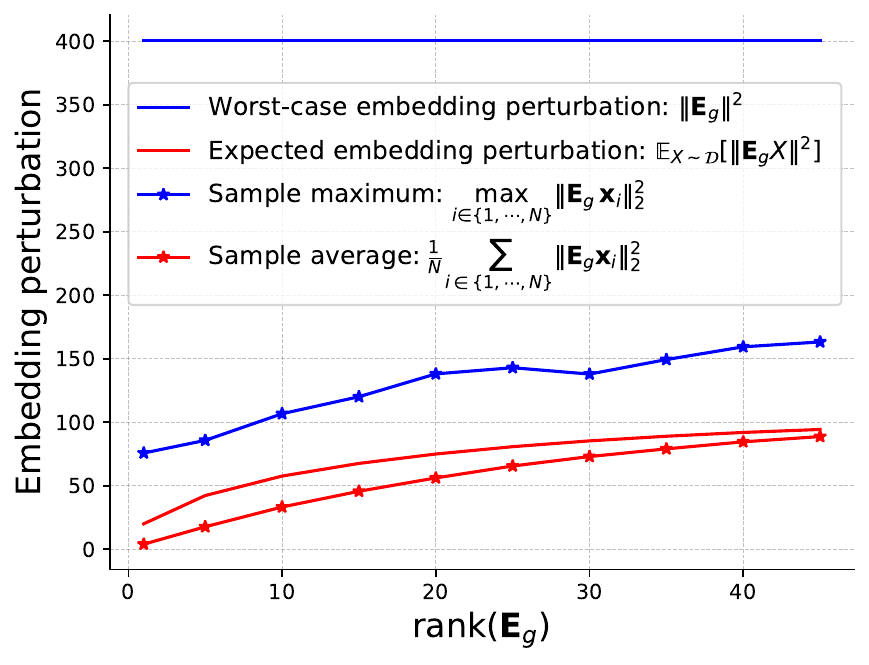}
    \caption{
     Comparison between the worst-case analysis and average-case analysis. We create an arbitrary filter perturbation $\Eg \in \R ^{n\times n}$ {with $n=100$ by sampling i.i.d. entries from $\mathcal{N}(0,1)$}. We further compute different rank approximations on $\Eg$ and test the embedding perturbation induced by each approximation separately. For each scenario, we randomly sample $N = 10000$ graph signals from unit sphere.
    The red line represents the expected embedding perturbation as defined in \eqref{eq:unf-avg}, with the sample average depicted by the red star-line. Similarly, the blue line represents the worst-case embedding perturbation as defined in \eqref{eq:unf-wst}, with the sample worst-case shown by the blue star-line.}
     \label{fig:stab_comp}
\end{figure}

{\textbf{Case 2}}. We consider the case where we have no prior information about the graph signal ${X}$ and simply assume that it is sampled uniformly at random from the unit sphere $\mathbb{S}^{n-1}$. Then we have that the second-moment matrix $\mathbf{K} = \frac{1}{n}  \mathbf{I}$ \cite{blum-2020-foundations}. 
    \begin{itemize}
    \item If we assess the embedding stability from the worst-case view in \eqref{eq:wst}, then we have
    \begin{align}
    \label{eq:unf-wst}
        \sup_{\|\mathbf{x}\|_2=1} \|g(\mathbf{S})\mathbf{x} - g(\mathbf{S}_p) \mathbf{x}\|_2^2 = \|\mathbf{E}_g\|^2.
    \end{align}
    \item If we assess the embedding stability from the average case view, from our analysis in Theorem~\ref{thm:embd_pertb}, we have
    \begin{align}
    \label{eq:unf-avg}
        \E_{{X}\sim \mathrm{Unif}(\mathbb{S}^{n-1})}[\|\Eg X\|_2^2] = \frac{1}{n} \|\Eg\|_F^2.
    \end{align}
\end{itemize}
Note that for an arbitrary matrix $\mathbf{M}$ of rank $r$, it always holds that $\|\mathbf{M}\|^2 \leq r^{-1} \|\mathbf{M}\|_F^2$
Therefore, direct comparison between the two bounds \eqref{eq:unf-wst} and \eqref{eq:unf-avg} indicates a significant gap between the average-case analysis and worst-case analysis, especially when the change in the graph filter $\Eg$ exhibits a low-rank structure or has large eigengaps. We validate this intuition through numerical simulations on synthetic data, and report the comparison results in Figure~\ref{fig:stab_comp}. From the experiments, we observe that as the rank of the filter perturbation $\Eg$  decreases, the expected embedding perturbation decreases significantly, while the worst-case embedding perturbation remains unchanged. This  highlights that for low-rank filter perturbations, the worst-case bound fails to adequately capture the embedding perturbation across a broad range of graph signals.

{\textbf{Case 3}}. If the distribution of $X$ is such that the column vectors of $\mathbf{K}$ are drawn from the null space of $\Eg$, then we have $\E[\|\Eg X\|_2^2] = 0 $ with probability 1, and the gap between worst-case and average-case metric is maximized.


The comparison between our probabilistic stability metric and the worst-case metric identifies scenarios where they align and where they differ the most.
Other than these special cases, the discrepancy between the average-case metric and worst-case metric is mainly influenced by the rank and eigengap of $\Eg$. Notably, certain graph filters tend to exhibit low rank in $\Eg$, such as graph filters that are linear in the graph shift operator $\mathbf{S}$. 
For those filters, when only a few edges are perturbed, the resulting $\Eg$ remains sparse, implying its low-rankness.
In applications involving these filters, incorporating the average-case stability metric is practically meaningful, as the worst-case metric is proved to be overly conservative and fails to accurately capture the overall embedding perturbations.

\subsection{Embedding perturbation of multilayer GCNNs}
\label{appx:GCNNs}

\begin{figure}[ht!]
    \centering
    \begin{subfigure}[b]{\textwidth}
        \includegraphics[width=0.15\textwidth]
        {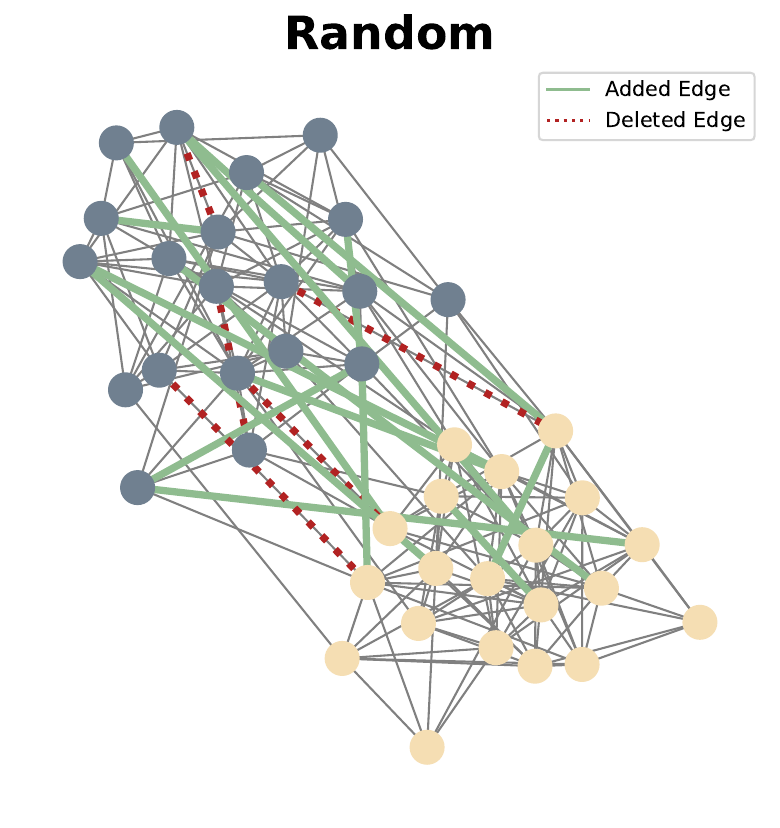}
        \includegraphics[width=0.15\textwidth]
        {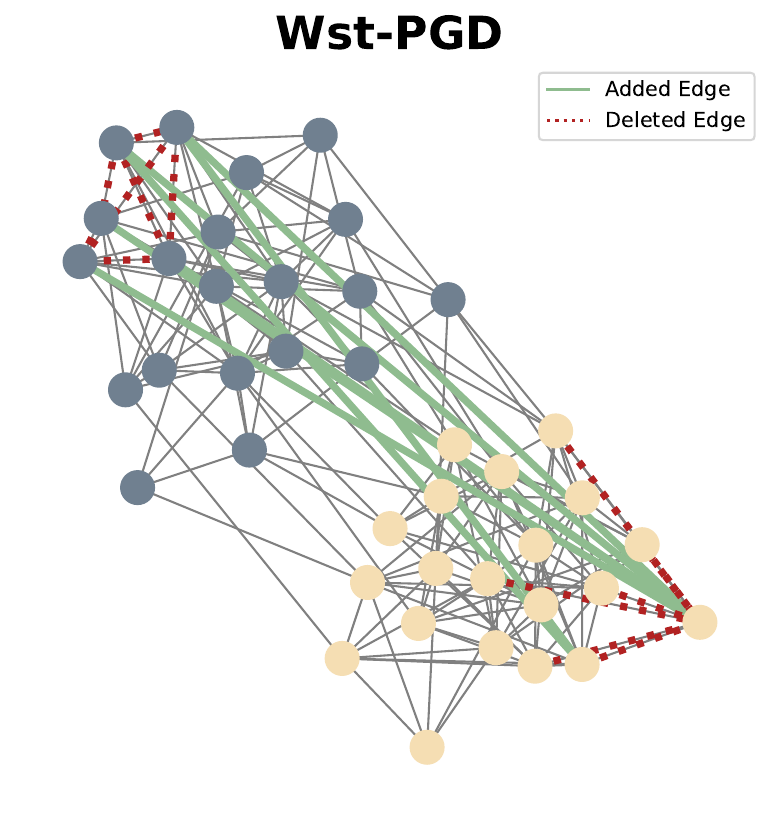}
        \includegraphics[width=0.15\textwidth]{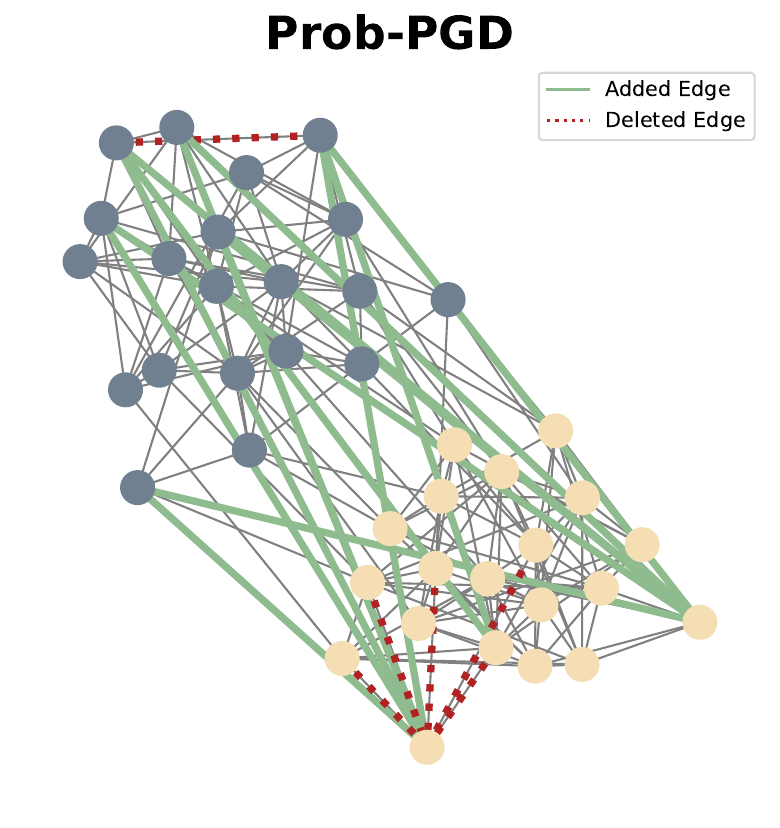}
        \includegraphics[width=0.1\textwidth]{figs/GCNL1.pdf}
        \includegraphics[width=0.1\textwidth]{figs/GCNL2.pdf}
        \includegraphics[width=0.1\textwidth]{figs/GCNL3.pdf}
        \includegraphics[width=0.1\textwidth]{figs/GCNL4.pdf}
        \includegraphics[width=0.1\textwidth]{figs/GCNL5.pdf}
    \end{subfigure}
    \caption{Left side figures are edge perturbations visualization of different algorithms. Box plots report embedding perturbations at different depths of GCN with sigmoid in each layer.}
    \label{fig:exp_GCN_relu}
\end{figure}

\begin{figure}[ht!]
    \centering
    \begin{subfigure}[b]{\textwidth}
        \includegraphics[width=0.15\textwidth]
        {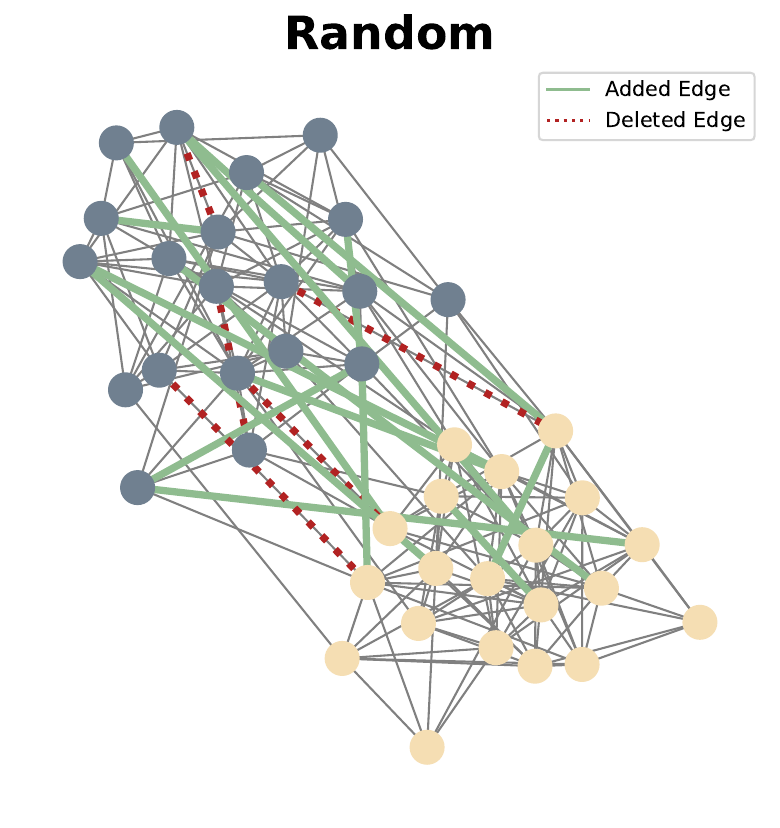}
        \includegraphics[width=0.15\textwidth]
        {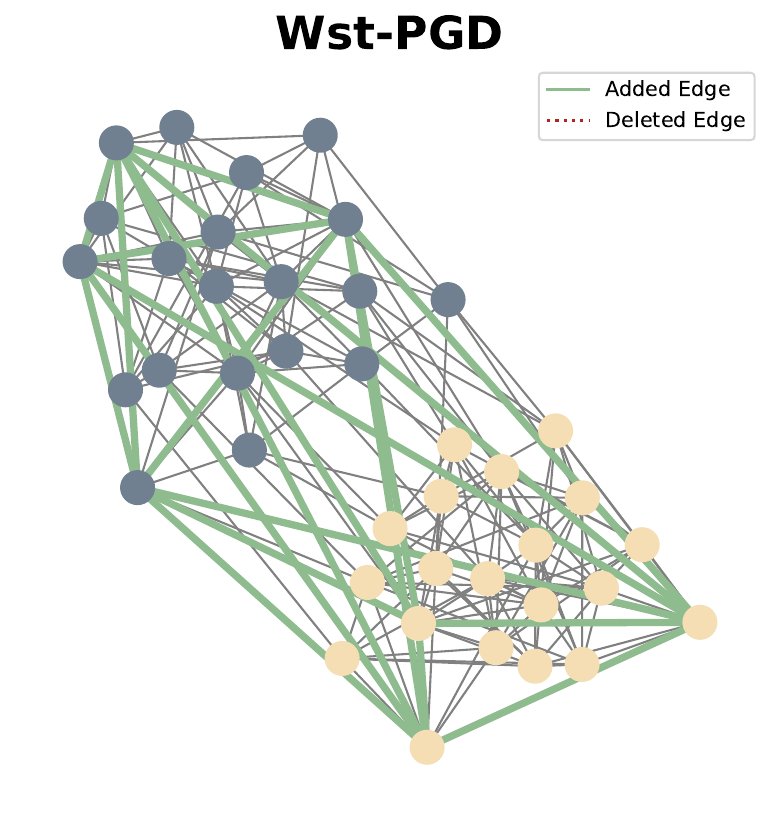}
        \includegraphics[width=0.15\textwidth]{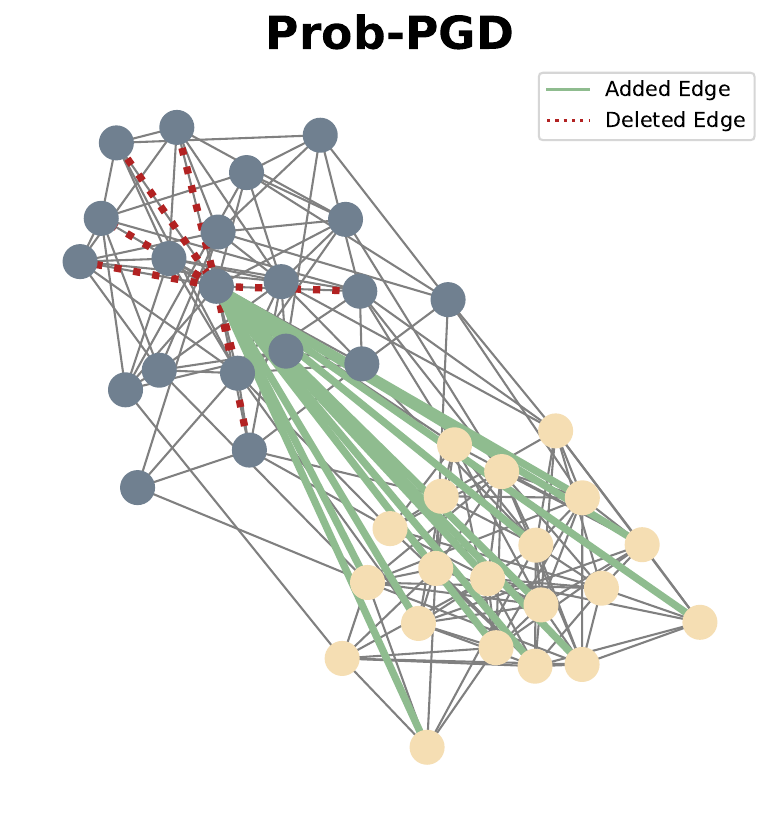}
        \includegraphics[width=0.1\textwidth]{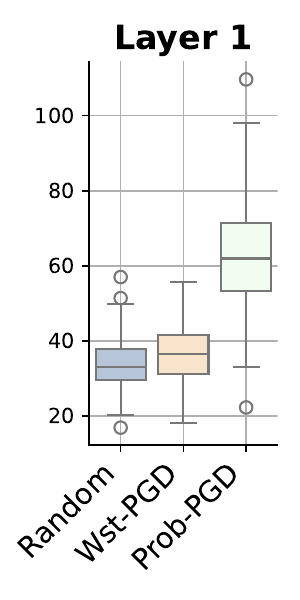}
        \includegraphics[width=0.1\textwidth]{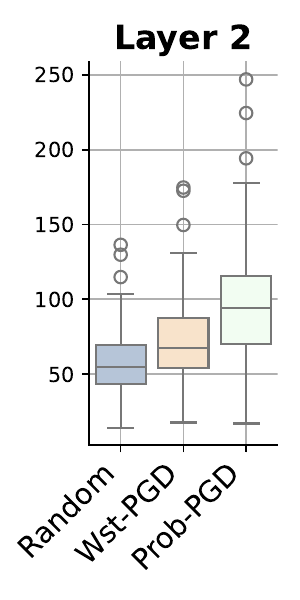}
        \includegraphics[width=0.1\textwidth]{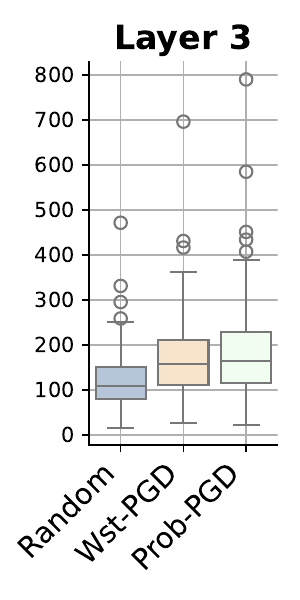}
        \includegraphics[width=0.1\textwidth]{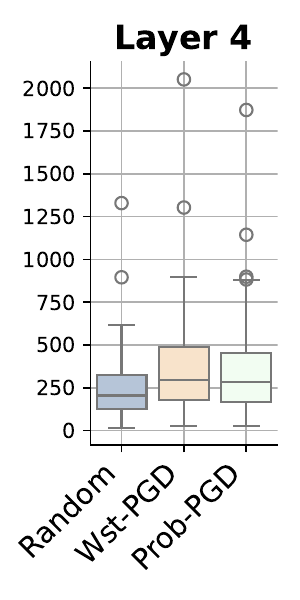}
        \includegraphics[width=0.1\textwidth]{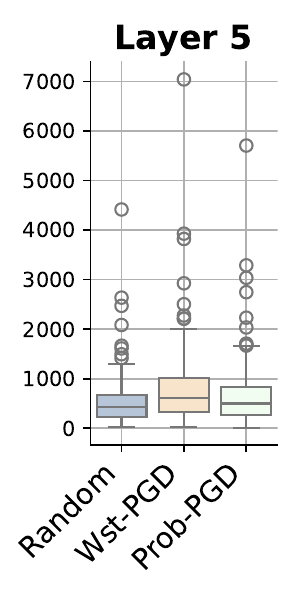}
    \end{subfigure}
    \caption{Left side figures are edge perturbations visualization of different algorithms. Box plots report embedding perturbations at different depths of GIN with ReLU in each layer.}
    \label{fig:exp_GIN_relu}
\end{figure}

\begin{figure}[ht!]
    \centering
    \begin{subfigure}[b]{\textwidth}
        \includegraphics[width=0.15\textwidth]
        {figs/GIN_random.pdf}
        \includegraphics[width=0.15\textwidth]
        {figs/GIN_wst.pdf}
        \includegraphics[width=0.15\textwidth]{figs/GIN_avg.pdf}
        \includegraphics[width=0.1\textwidth]{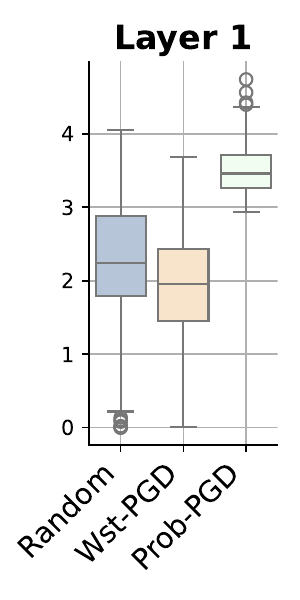}
        \includegraphics[width=0.1\textwidth]{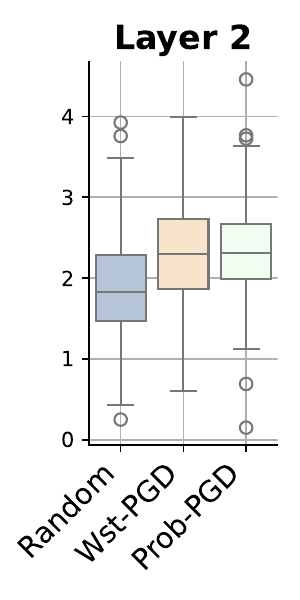}
        \includegraphics[width=0.1\textwidth]{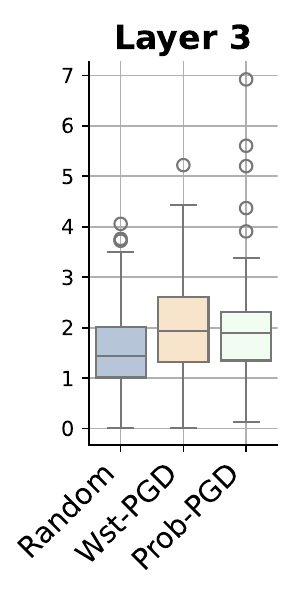}
        \includegraphics[width=0.1\textwidth]{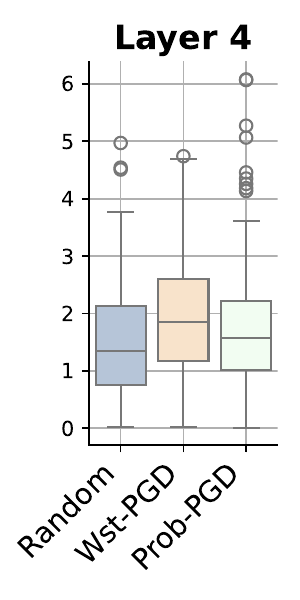}
        \includegraphics[width=0.1\textwidth]{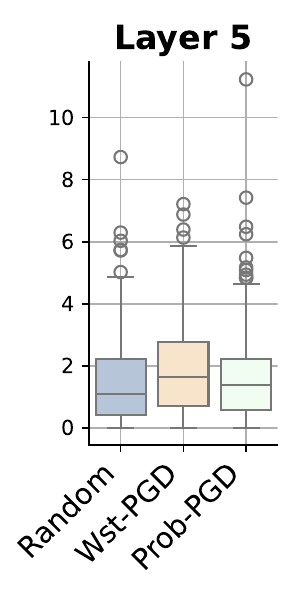}
    \end{subfigure}
    \caption{Left side figures are edge perturbations visualization of different algorithms. Box plots report embedding perturbations at different depths of GIN with sigmoid in each layer.}
    \label{fig:exp_GIN_sigmoid}
\end{figure}

\subsection{Experiments on message passing neural networks}
\label{appx:MPGNN}

This study mainly focuses on analyzing the stability of Graph Convolutional Neural Networks (GCNNs), which represent one of the two main paradigms in graph neural networks. The other major paradigm consists of message passing neural networks (MPNNs). While both frameworks aggregate node information based on the underlying graph, MPNNs do not necessarily rely on graph filter convolutions. One notable example of MPNN is the Graph Isomorphism Network (GIN)~\cite{xu-2018-powerful}. In GIN, vertex embeddings are computed recursively. For a given vertex $v$, its embedding at layer $l$ is obtained by aggregating information from its neighbors as well as from itself at the previous layer, according to the following:
\begin{align*}
    \mathbf{x}_v^{(l)} = \text{MLP}^{(l)} \left( \left(1 + \epsilon^{(l)} \right) \cdot  \mathbf{x}_v^{(l-1)} + \sum_{u \in \mathcal{N}(v)}  \mathbf{x}_u^{(l-1)} \right)
\end{align*}
where $\mathcal{N}(v)$ denotes the set of neighbors of vertex $v$ and $\epsilon^{(l)}$ is either a learnable parameters or a fixed scalar.
When the multi-layer perceptron (MLP) is reduced to a single-layer perceptron, the model simplifies to a GCNN with a graph filter of the form $\mathbf{A}+\epsilon^{(l)} \mathbf{I}$.  Empirical studies on the stability of GIN with a single-layer perceptron are presented in Section~\ref{sec:experiment}. Here, we extend this investigation to GIN architectures with a two-layer perceptron (GIN-2) and a three-layer perceptron(GIN-3), both implemented within a two-layer GIN framework. Using the same experimental settings as in Section~\ref{sec:experiment}, we pre-train 10 GINs on unperturbed graphs. Table~\ref{tab:GIN} reports the prediction accuracy and embedding perturbation under various edge perturbation algorithms.  Across different datasets and models, \texttt{Prob-PGD} induces the largest embedding perturbations, and further results in greater performance degradation for ENZYMES and Cora classification.  Overall, GIN models with deeper MLPs exhibit a similar response to edge perturbations as their single-layer counterparts.
This observation suggests that our filter-based analysis might be extended to study the stability of MPNNs by investigating graph filters that capture their underlying message passing mechanism.
\begin{table}[h]
\centering
\captionof{table}{Prediction accuracy of GINs under edge perturbations produced by different methods.}
\label{tab:GIN}
\small
\begin{tabular}{llcccc}
\toprule
\multirow{2}{*}{\textbf{Dataset}} & \multirow{2}{*}{\textbf{Method}} & \multicolumn{2}{c}{\textbf{Prediction Accuracy(\%)}} & \multicolumn{2}{c}{\textbf{Embedding Pert.} $\|\cdot\|_F$} \\
\cmidrule(lr){3-4} \cmidrule(lr){5-6}
& & \textbf{GIN-2} & {\textbf{GIN-3}}   &{\textbf{GIN-2}} &\textbf{GIN-3} \\
\midrule
{\multirow{4}{*}{MUTAG}}
&\cellcolor{gray!15}Unpert.
&\cellcolor{gray!15}{82.89\text{\tiny$\pm$4.60}} 
&\cellcolor{gray!15}{82.11\text{\tiny$\pm$5.49}} 
&\cellcolor{gray!15} - &\cellcolor{gray!15} - \\
& Random     &\textbf{70.53}\text{\tiny$\pm$4.82} &\textbf{70.53}\text{\tiny$\pm$5.37}  &28.62\text{\tiny$\pm$2.34} &28.43\text{\tiny$\pm$3.79}\\
& Wst-PGD    &76.32\text{\tiny$\pm$8.15} &76.32\text{\tiny$\pm$9.04}  &37.47\text{\tiny$\pm$9.47} &37.37\text{\tiny$\pm$11.16} \\
& Prob-PGD   &76.58\text{\tiny$\pm$8.02}  &{77.11}\text{\tiny$\pm$7.81}
& \textbf{96.26}\text{\tiny$\pm$11.39} &\textbf{93.47}\text{\tiny$\pm$10.32} \\
\midrule

\multirow{4}{*}{{{ENZYMES}}} 
&\cellcolor{gray!15}{Unpert.} 
&\cellcolor{gray!15}{58.92\text{\tiny$\pm$3.59}} 
&\cellcolor{gray!15}{50.08\text{\tiny$\pm$5.30}} 
&\cellcolor{gray!15} - &\cellcolor{gray!15} - \\
& Random     &48.25\text{\tiny$\pm$3.75} & 41.83\text{\tiny$\pm$4.15} & 92.44\text{\tiny$\pm$12.56} & 28.35\text{\tiny$\pm$5.16} \\
& Wst-PGD    &39.33\text{\tiny$\pm$4.83} & 41.00\text{\tiny$\pm$3.14} & 295.62\text{\tiny$\pm$35.02} & 74.06\text{\tiny$\pm$12.83} \\
& Prob-PGD   &\textbf{37.00}\text{\tiny$\pm$3.25} & \textbf{35.58}\text{\tiny$\pm$3.27} & \textbf{345.54}\text{\tiny$\pm$49.35} & \textbf{83.59}\text{\tiny$\pm$20.24} \\
\midrule

\multirow{4}{*}{Cora} 
&\cellcolor{gray!15}{Unpert.} 
&\cellcolor{gray!15}{73.76\text{\tiny$\pm$1.95}} 
&\cellcolor{gray!15}{70.72\text{\tiny$\pm$2.30}} 
&\cellcolor{gray!15} - &\cellcolor{gray!15} -\\
& Random     &69.73\text{\tiny$\pm$2.63} & 66.95\text{\tiny$\pm$3.16} 
& 1129\text{\tiny$\pm$344} & 1498\text{\tiny$\pm$1127} \\
& Wst-PGD    &72.55\text{\tiny$\pm$2.25} & 69.19\text{\tiny$\pm$2.20} 
& 16529\text{\tiny$\pm$7638} & 11105\text{\tiny$\pm$6508} \\
& Prob-PGD   &\textbf{57.95}\text{\tiny$\pm$3.35} & \textbf{54.09}\text{\tiny$\pm$5.03} & \textbf{40343}\text{\tiny$\pm$23605} & \textbf{25784}\text{\tiny$\pm$13428} \\
\bottomrule

\end{tabular}
\end{table}

\subsection{Additional experiments on heterophilic datasets}
\label{appx:Hetero}

We further conduct empirical study on the following two benchmark heterophilic datasets.

\textbf{Chameleon.} The Chameleon dataset contains
2277 nodes representing Wikipedia articles, where edges denote hyperlinks between them. 
Each node is assigned one of five classes according to the average monthly traffic of the corresponding page. 
With a homophily score of 0.23, reflecting strong heterophilic properties. 
Following the standard split, we use 1092 nodes for training, 729 for validation, and 456 for testing.

\textbf{Squirrel.} 
The Squirrel dataset contains 5201 nodes, each representing a Wikipedia article, with edges indicating hyperlinks between pages. 
Nodes are categorized into five classes based on the average monthly traffic. 
The dataset has a homophily score of 0.22, reflecting strong heterophilic properties. 
We follow the standard data split, using 2496 nodes for training, 1664 for validation, and 1041 for testing.

With low homophily scores, these two datasets are challenging for traditional GNNs. 
Therefore, we adopt the {H2GCN} model~\cite{zhu2020beyond}, a graph convolutional network specifically designed for heterophilic graphs.
Given a node feature matrix $\mathbf{X}$ and graph adjacency matrix $\mathbf{A}$, H2GCN constructs node embedding as
\[
\mathbf{H} = [\mathbf{X}; \mathbf{A}\mathbf{X}; \mathbf{A}^2\mathbf{X}],
\]
where the three components correspond to the original node features $\mathbf{X}$, the aggregated 1-hop features $\mathbf{A}\mathbf{X}$, and the aggregated 2-hop features $\mathbf{A}^2\mathbf{X}$, respectively. 
The concatenated embedding $\mathbf{H}$ are then passed through a multi-layer perceptron (MLP) for node classification. For each dataset, we pre-train 10 H2GCNs and report in Table~\ref{tab:heterophilic} the mean and standard deviation of both prediction accuracy and embedding perturbation under different attack methods by perturbing 1000 edges. 

\begin{table}[t]
\centering
\caption{Results on heterophilic datasets with H2GCN.}
\label{tab:heterophilic}
\resizebox{0.75\textwidth}{!}{
\begin{tabular}{l l c c}
\toprule
\textbf{Dataset} & \textbf{Method} & \textbf{Prediction Accuracy (\%)} & \textbf{Embedding Pert.} $\|\cdot\|_F$ \\
\midrule

\multirow{4}{*}{\textbf{Chameleon}} 
& Unperturbed & \cellcolor{gray!10}$55.92 \pm 1.47$ & \cellcolor{gray!10}{--} \\
& Random      & $54.23 \pm 1.16$ & $136.65 \pm 4.27$ \\
& Wst-PGD     & $54.87 \pm 1.51$ & $3099.12 \pm 0.00$ \\
& Prob-PGD    & $\mathbf{52.96 \pm 1.38}$ & $\mathbf{6392.17 \pm 0.00}$ \\

\midrule

\multirow{4}{*}{\textbf{Squirrel}} 
& Unperturbed & \cellcolor{gray!10}$50.38 \pm 0.39$ & \cellcolor{gray!10}{--} \\
& Random      & $\mathbf{47.91 \pm 0.47}$ & $578.92 \pm 3.86$ \\
& Wst-PGD     & $48.48 \pm 0.73$ & $1127.21 \pm 0.00$ \\
& Prob-PGD    & $48.28 \pm 0.73$ & $\mathbf{3908.26 \pm 0.00}$ \\
\bottomrule
\end{tabular}
}
\end{table}

Across both heterophilic graphs, the \texttt{Prob-PGD} attack consistently induces the largest embedding perturbation compared to baselines. 
On the Squirrel dataset, despite producing the most substantial embedding disruption, \texttt{Prob-PGD} does not lead to the largest degradation in prediction accuracy. 
This outcome is not unexpected: \texttt{Prob-PGD} is a task-agnostic attack that seeks maximal perturbation in node embeddings rather than direct misclassification. 
Consequently, the relationship between embedding perturbation and classification accuracy is not always linear.

\subsection{Scalability study on large-scale graphs}
\label{appx:large}

Our proposed algorithm \texttt{Prob-PGD} has demonstrated efficiency through intensive experiments on medium-sized graphs. 
The underlying framework, however, is general and can be adapted to large-scale graphs with millions of nodes. 
In particular, our attack formulation~\eqref{eq:att_opt} is not restricted to PGD-based optimization and can naturally incorporate alternative search strategies such as greedy heuristics. 
These approaches iteratively perturb the edge that induces the largest local embedding change and can operate on local subgraphs or within restricted edge subsets. 
By narrowing the perturbation space, such localized search substantially reduces computational overhead while maintaining competitive attack strength, providing a practical balance between efficiency and optimality.

To provide a concrete example, we adopt a scalable greedy edge-deletion approximation of the original optimization in~\eqref{eq:att_opt}. 
Specifically, we restrict the perturbation search to existing edges, randomly sample a subset of candidate edges at each iteration, and greedily delete the one that maximizes the attack objective in~\eqref{eq:att_opt}. 
This stochastic greedy strategy leverages graph sparsity for efficient computation. 
We denote our approximation by \texttt{Prob} and apply the same approximation to the worst-case baseline, denoted as \texttt{Wst}. 
We conduct experiments using the adjacency graph filter on four large-scale OGB benchmarks~\cite{hu2020open}: {ogbn-arxiv}, {ogbn-mag}, {ogbn-products}, and {pokec}, where ogbn-mag is heterogeneous and pokec exhibits label heterophily.
For each dataset we perform adversarial edge-deletion attacks that remove 200 existing edges. We report average embedding perturbation, peak memory usage, and running time in Table~\ref{tab:scalability}.

\begin{table}[t]
\centering
\caption{Scalability experiments on large-scale graphs. }
\label{tab:scalability}
\resizebox{\textwidth}{!}{
\begin{tabular}{lccccccc}
\toprule
\textbf{Dataset} & \textbf{Nodes} & \textbf{Edges} & \multicolumn{3}{c}{\textbf{Embedding Pert.} $\|\cdot\|_F$} & \textbf{Memory (MB)} & \textbf{Time (s)} \\
\cmidrule(lr){4-6}
 &  &  & \textbf{Random} & \textbf{Wst} & \textbf{Prob (Ours)} &  &  \\
\midrule
ogbn-arxiv    & 169{,}343 & 1{,}166{,}243 & 4.10 & 14.96 & \textbf{25.29} & 390.6 & 537.7 \\
ogbn-mag      & 1{,}939{,}743 & 21{,}111{,}007 & 2.59 & 8.53 & \textbf{15.75} & 7362.4 & 1603.0 \\
ogbn-products & 2{,}449{,}029 & 61{,}859{,}140 & 17.65 & 24.39 & \textbf{193.84} & 5833.3 & 2798.5 \\
pokec         & 1{,}632{,}803 & 30{,}622{,}564 & 292.32 & 303.30 & \textbf{334.20} & 18040.3 & 1172.1 \\
\bottomrule
\end{tabular}
}
\end{table}

\end{document}